\documentclass{article}

\PassOptionsToPackage{numbers,sort&compress}{natbib}
\usepackage[preprint,nonatbib]{neurips_2026}

\usepackage{natbib}
\usepackage{times}
\usepackage[T1]{fontenc}
\usepackage[utf8]{inputenc}
\usepackage{microtype}
\usepackage{amsmath,amssymb,amsfonts,mathtools,bm,amsthm}
\usepackage{booktabs,multirow,makecell}
\usepackage{algorithm}
\usepackage{algpseudocode}
\usepackage{enumitem}
\usepackage[table]{xcolor}
\usepackage{pifont}
\usepackage[colorlinks = True, linkcolor = orange, citecolor = violet!70!white]{hyperref}
\usepackage{url}
\usepackage{graphicx}
\usepackage{tikz}
\usetikzlibrary{positioning,fit,backgrounds,calc,arrows.meta}

\usepackage{tabularx}
\usepackage{array}
\usepackage{longtable}
\usepackage{etoc}
\usepackage{wrapfig}
\definecolor{theoryhead}{RGB}{229,236,246}
\definecolor{theorybandA}{RGB}{240,247,255}
\definecolor{theorybandB}{RGB}{243,249,240}
\definecolor{theorybandC}{RGB}{248,242,252}
\definecolor{theoryref}{RGB}{98,72,168}

\definecolor{mygreen}{RGB}{34,139,34}
\definecolor{myred}{RGB}{180,35,24}

\newcommand{\cmark}{\textcolor{mygreen}{\ding{51}}}
\newcommand{\comark}{\textcolor{orange}{\ding{51}}}
\newcommand{\xmark}{\textcolor{myred}{\ding{55}}}

\DeclareMathOperator{\diag}{diag}
\DeclareMathOperator{\Var}{Var}
\DeclareMathOperator{\Cov}{Cov}
\DeclareMathOperator{\KL}{KL}

\DeclareMathOperator{\tr}{tr}

\newcommand{\R}{\mathbb{R}}
\newcommand{\E}{\mathbb{E}}
\newcommand{\Normal}{\mathcal{N}}
\newcommand{\GP}{\mathcal{GP}}
\newcommand{\Id}{\boldsymbol{I}}
\newcommand{\vx}{\bm{x}}
\newcommand{\vy}{\bm{y}}
\newcommand{\vz}{\bm{z}}
\newcommand{\vf}{\bm{f}}
\newcommand{\vg}{\bm{g}}
\newcommand{\vw}{\bm{w}}
\newcommand{\vu}{\bm{u}}
\newcommand{\vs}{\bm{s}}
\newcommand{\vb}{\bm{b}}
\newcommand{\vm}{\bm{m}}
\newcommand{\vS}{\bm{S}}
\newcommand{\vR}{\bm{R}}

\newcommand{\vc}{\bm{c}}

\newcommand{\vH}{\bm{H}}
\newcommand{\vA}{\bm{A}}
\newcommand{\vJ}{\bm{J}}
\newcommand{\vQ}{\bm{Q}}
\newcommand{\vLambda}{\bm{\Lambda}}

\newcommand{\veta}{\bm{\eta}}
\newcommand{\Hx}{\boldsymbol{H}_x}
\newcommand{\Hxdot}{\boldsymbol{H}_{\dot x}}
\newcommand{\method}{\textsc{MEGPODE}}
\newcommand{\methodm}{\textsc{MEGPODE-M}}
 \newcommand{\independent}{\perp\!\!\!\!\perp}

\newtheorem{proposition}{Proposition}
\newtheorem{corollary}[proposition]{Corollary}

\setlist[itemize]{leftmargin=*,topsep=2pt,itemsep=2pt,parsep=0pt}
\setlist[enumerate]{leftmargin=*,topsep=2pt,itemsep=2pt,parsep=0pt}

\title{Bayesian Nonparametric Mixed-Effect ODEs with Gaussian Processes}

\author{%
\begin{tabular}[t]{c}
\begin{tabular}{ccc}
\bfseries Julien Martinelli\thanks{Work initiated while at the Inria SISTM team.} &
\bfseries Maksim Sinelnikov &
\bfseries Harri Lähdesmäki
\end{tabular}
\\[3pt]
\normalfont Aalto University
\\[1.4ex]
\begin{tabular}{cc}
\bfseries Quentin Clairon\thanks{Equal contribution in supervision.} &
\bfseries Mélanie Prague\footnotemark[\value{footnote}]
\end{tabular}
\\[3pt]
\normalfont Univ. Bordeaux, INSERM BPH, U1219, Inria SISTM team, VRI, France
\end{tabular}%
}

\date{}

\begin{document}
\maketitle

\begin{abstract}
Dynamical modelling is central to many scientific domains, including pharmacometrics, systems biology, physiology, and epidemiology. In these settings, heterogeneity is often intrinsic: different subjects or units follow related but distinct continuous-time dynamics. Classical nonlinear mixed-effects Ordinary Differential Equation (ODE) models address this by combining population-level structure with subject-specific effects, but they rely on a parametric vector field and are therefore vulnerable to structural misspecification and unmodelled mechanisms. This motivates nonparametric approaches that can retain principled uncertainty quantification, yet existing nonparametric ODE methods typically assume a single shared dynamical system rather than an explicit mixed-effect hierarchy over subject-specific dynamics. We propose \method, a Bayesian nonparametric mixed-effect ODE model in which each subject’s vector field is decomposed into a shared population component and a subject-specific deviation, both endowed with Gaussian process (GP) priors. To avoid repeated ODE solves per subject during training, we combine state-space GP trajectory priors with virtual collocation observations, yielding Kalman-smoothing trajectory updates and closed-form regressions for the vector fields.
Across controlled heterogeneous ODE benchmarks spanning oscillatory, biomedical systems, \method{} improves population-field recovery and subject-level trajectory prediction relative to strong baselines.
\end{abstract}

\section{Introduction}
\label{sec:intro}

Ordinary differential equation (ODE) models are a standard language for mechanistic modeling in systems biology, medicine, vaccinology, and pharmacometrics because they encode interpretable dynamics and support extrapolation beyond the observed window~\citep{machado2011modeling}.
In longitudinal population studies, however, the scientific object is rarely a single trajectory. Different patients, cells, or experimental units often follow related but distinct dynamical laws, and the practical question is how to model that heterogeneity in a statistically efficient way~\citep{clairon2023antibody, alexandre2023prediction}.

The classical framework for this setting is nonlinear mixed-effects (NLME) ODE modelling, where a shared mechanistic system is coupled with subject-specific random effects~\citep{lavielle2014mixedeffects}.
This is appealing because heterogeneity is represented at the level of the \emph{dynamics} rather than only at the \emph{trajectory} level. 
But it is also restrictive: one must commit to a parametric vector field and to a specific low-dimensional random-effect parameterization.
This can be problematic in the very regimes where these models are most often used, for instance in biology, medicine, and pharmacometrics~\citep{martinelli2025position}. There, mechanisms are often only partially known, individual differences may not be well captured by a few parameter perturbations, and observations are typically noisy, sparse, and irregularly sampled.
In such settings, parametric NLME-ODEs can be misspecified and difficult to fit \citep{li2026vae}.

These limitations point toward nonparametric dynamics modelling. Flexible continuous-time models based on Gaussian processes (GPs) or neural ODEs can learn unknown dynamics without committing to a detailed mechanistic family \citep{heinonen2018npode,yang2021magi,hegde2022vmsgp,long2022autoip,hamelijnck2024physsgp,chen2018neuralode,kidger2020neuralcde}. But most of this literature is built for a single shared dynamical system rather than a heterogeneous population. Conversely, recent population-level continuous-time models do capture subject variability, yet typically through finite-dimensional parametric or neural effects rather than a Bayesian nonparametric hierarchy over subject-specific vector fields \citep{nazarovs2022menode,arruda2024amortized,bram2025monolix,li2026vae,ojedamarin2025aicmet}. This leaves a clear methodological gap: we still lack a nonparametric mixed-effect model for continuous-time dynamics that operates directly at the ODE level and provides uncertainty quantification at both the population and subject levels.

\textbf{Contributions.}
We introduce \method, a Bayesian nonparametric mixed-effect ODE model. Figure~\ref{fig:intro-gist-and-positioning} summarizes the method, Table~\ref{table:rw} positions it against related work. 
Our contributions are:
\begin{itemize}
    \item \textbf{Conceptual:} we introduce a nonparametric mixed-effect formulation for ODE dynamics, translating the classical NLME decomposition between population-level and individual-level effects from parameter space to function space.
    \item \textbf{Methodological:} we develop an inference scheme that combines state-space GP trajectory priors with collocation-based ODE constraints, yielding Kalman-smoothing updates for latent trajectories and scalable Gaussian updates for shared and subject-specific vector fields.
    \item \textbf{Empirical:} we evaluate MEGPODE on diverse controlled heterogeneous ODE benchmarks, assessing population-field recovery, forecasting, and uncertainty quantification, showing consistent improvements over strong baselines.
\item \textbf{Theoretical:} we derive computable diagnostics for uncertainty and complexity under the mixed-effect field decomposition.
\end{itemize}

\vspace{-.4cm}
\begin{figure}[H]
\centering

    \includegraphics[width=1\linewidth]{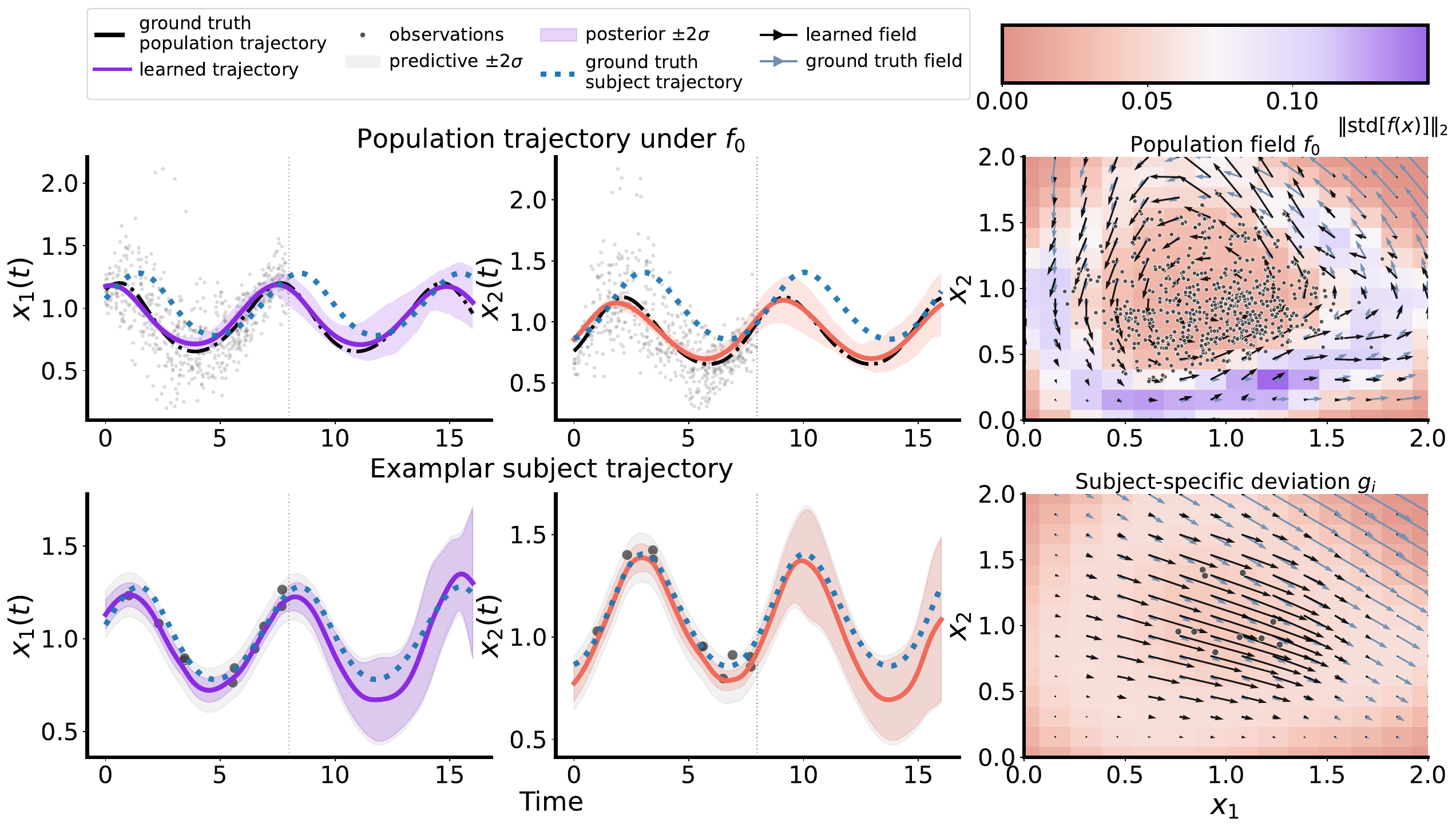}
\vspace{-.05cm}
\caption{
\textbf{Overview of our method.}
Learned shared and subject-specific dynamics on a 2D system.
The top-row trajectory panels show the population trajectory induced by the shared field \(f_0\), with bands denoting uncertainty in the \emph{population trajectory}; the blue curve shows the ground-truth trajectory of a subject whose trajectory deviates substantially from the population trajectory.
Bottom-row panels show the posterior trajectory of this subject. Dots denote observations (\(M=30\) subjects in the population panels, each with 10 observations). The right-column panels show the learned and ground-truth shared field \(f_0\) and subject-specific deviation \(g_i\) in state space. The colormap shows predictive uncertainty magnitude.}
\label{fig:intro-gist-and-positioning}
\end{figure}

\begin{table}    
    \vspace{-.05cm}
    \footnotesize
    \centering
    \renewcommand{\arraystretch}{.8}
\begin{tabular}{lcccc}
\toprule
& \makecell{Personalized}
& \makecell{Flexible dynamics}
& \makecell{Mixed-effect}
& \makecell{Probabilistic} \\
\midrule
Parametric NLME-ODEs \citep{wang2014mixedode}
    & \cmark & \xmark & \cmark & \cmark \\
Mixed-Effect GPs \citep{leroy2022magma,kimme,yoon2022doubly}
    & \cmark & \xmark & \cmark & \cmark \\
GP-ODEs~\citep{heinonen2018npode,long2022autoip,hamelijnck2024physsgp}
    & \xmark & \cmark & \xmark & \cmark \\
Mixed-effects Neural ODEs \citep{nazarovs2022menode}
    & \cmark & \comark~  \textcolor{orange}{(\textbf{latent})} & \textcolor{orange}{$\approx$ (\textbf{non additive})} & \cmark \\
Contextual neural dynamics~\citep{kirchmeyer2022generalizing,mouli2024metaphysica,iflgui}
    & \cmark & \cmark & \xmark & \xmark \\
\method{} (This work)
    & \cmark & \cmark & \cmark & \cmark \\
\bottomrule
\end{tabular}
\caption{Comparison with related work.
\textbf{MEGPODE is designed to do what prior approaches only partially cover: learn flexible probabilistic population dynamics and personalize them via an explicit subject-specific mixed-effect field.}}
\label{table:rw}
\end{table}

\section{Background}
\label{sec:background}

\subsection{Problem setup}
\label{sec:problem}
We observe $M$ subjects. Subject $i$ is measured at times $\mathcal{T}_i=\{t_{i,1}<\dots<t_{i,N_i}\}$, with observations $\vy_{i,n}\in\R^{D_y}$. Let $\vx_i(t)\in\R^D$ denote the latent dynamical state. We assume a linear-Gaussian observation model
\begin{equation}
    \vy_{i,n}\mid\vx_i(t_{i,n}) \sim \Normal\!\bigl(\vH\vx_i(t_{i,n}),\boldsymbol{\Sigma}_y\bigr),
    \qquad \vH\in\R^{D_y\times D},
    \label{eq:obs}
\end{equation}
and a subject-specific ODE
\begin{equation}
    \dot{\vx}_i(t)=\vf_i\bigl(\vx_i(t)\bigr),
    \qquad \vx_i(t_{i,0})=\vx_{i,0}.
    \label{eq:ode}
\end{equation}
The learning problem is to recover a collection of related subject-level vector fields from irregular and partially observed longitudinal data. In the spirit of mixed-effects modeling, we seek a decomposition of subject dynamics into a shared population component and a subject-specific deviation; the explicit form of that decomposition is introduced in Section~\ref{sec:fieldprior}. We next review the GP and state-space ingredients that make this construction computationally tractable.

\subsection{Gaussian processes and state-space trajectory priors}
\label{sec:ssgpbackground}

A convenient probabilistic starting point for continuous-time trajectories is the GP \citep{rasmussen2006gp}. Writing
\begin{equation}
f \sim \GP(m,k),
\label{eq:gp}
\end{equation}
means that any finite collection of function values is jointly Gaussian, with mean function $m$ and covariance kernel $k$. For differentiable kernels, derivatives are jointly Gaussian with function values as well, since differentiation is a linear operator. This makes GPs a natural tool when both a trajectory and its derivative matter.

For temporal modelling, an important subclass of GP priors admits an equivalent state-space representation. In particular, Mat\'ern kernels and integrated Wiener-process priors can be written as linear stochastic differential equations whose discretization on an irregular grid $\tau_{0:K}$ yields a Gaussian Markov chain \citep{sarkka2019sde,tronarp2019filtering, zhang2024model}:
\begin{equation}
    \vs_{k+1}=\vA(h_k)\vs_k+\bm{q}_k,
    \qquad \bm{q}_k\sim\Normal\bigl(\mathbf{0},\vQ(h_k)\bigr),
    \qquad h_k=\tau_{k+1}-\tau_k,
    \label{eq:ssgp}
\end{equation}
for a suitable latent state $\vs_k$. Under linear-Gaussian observations, posterior inference then scales linearly in the grid size $K$ through Kalman filtering and smoothing \citep{sarkka2013bfs}. 

To keep both the state and its derivative explicit, define the latent state at time $\tau_{i,k}$ as
\begin{equation}
    \vs_{i,k}\coloneqq
    \begin{bmatrix}
        \vx_i(\tau_{i,k})\\
        \dot{\vx}_i(\tau_{i,k})
    \end{bmatrix}\in\R^{2D},
    \qquad
    \vx_i(\tau_{i,k})=\boldsymbol{H}_x\vs_{i,k},
    \quad
    \dot{\vx}_i(\tau_{i,k})=\boldsymbol{H}_{\dot{x}}\vs_{i,k},
    \label{eq:augstate}
\end{equation}
with $\boldsymbol{H}_x=[\boldsymbol{I}_D\;\boldsymbol{0}_{D\times D}]\in\mathbb{R}^{D\times 2D}$ and $\boldsymbol{H}_{\dot{x}}=[\boldsymbol{0}_{D\times D}\;\boldsymbol{I}_D]\in\mathbb{R}^{D\times 2D}$.
For example, an integrated Wiener-process prior of order two yields $\vA(h)=\bigl[\begin{smallmatrix}1 & h\\ 0 & 1\end{smallmatrix}\bigr] \otimes \Id_D \in  \R^{2D\times 2D}$ together with a matching process covariance $\vQ(h)=\sigma_{\mathrm{IWP}}^2\bigl[\begin{smallmatrix}h^3/3 & h^2/2\\ h^2/2 & h\end{smallmatrix}\bigr] \otimes \Id_D \in \R^{2D\times 2D}$. The resulting prior is Gaussian and Markov on irregular subject-specific grids, while keeping both trajectories and derivatives available for the ODE construction introduced next.

\subsection{Soft ODE constraints via virtual observations}
\label{sec:collocationbackground}
To connect the trajectory prior with the ODE, the dynamical relation is imposed as a soft constraint rather than through repeated numerical integration. Given a known vector field $\vf$, one may require at \emph{collocation times} $\tau_{i,k}$ that the state derivative agrees with the ODE up to a small collocation error:
\begin{equation}
    \mathbf{0} = \Hxdot\vs_{i,k} - \vf\bigl(\Hx\vs_{i,k}\bigr) + \boldsymbol{\epsilon}^{(\mathrm{coloc})}_{i,k},
    \qquad
    \boldsymbol{\epsilon}^{(\mathrm{coloc})}_{i,k}\sim\Normal\bigl(\mathbf{0},\boldsymbol{\Sigma}_{\mathrm{coloc}}\bigr).
    \label{eq:collocbg}
\end{equation}
This construction was proposed by~\cite{long2022autoip,hamelijnck2024physsgp}. After local linearization, this becomes a Gaussian pseudo-observation on $\vs_{i,k}$, so the resulting posterior remains a Gauss--Markov smoothing problem. Section~\ref{sec:trajupdate} extends this construction to the case where the field $\vf$ is itself unknown and hierarchical.

\section{Mixed-effect Gaussian Process ODEs (\method)}
\label{sec:method}
\vspace{-.2cm}
We now present our method.
Section~\ref{sec:fieldprior} introduces the mixed-effect GP prior over subject-specific fields. Section~\ref{sec:scalable} describes the scalable representation used for the shared and local fields. Section~\ref{sec:trajupdate} explains the trajectory-space update obtained by locally linearizing the collocation factor inside a state-space GP prior. Section~\ref{sec:fieldupdate} turns smoothed trajectories into a heteroscedastic mixed regression problem that updates shared and local fields. Algorithm~\ref{alg:main} summarizes the full loop. 

\subsection{Hierarchical GP prior on subject-specific vector fields}
\label{sec:fieldprior}
We model each subject-specific field as the sum of a population field and an individual deviation,
\begin{equation}
    \vf_i(\vx)=\vf_0(\vx)+\vg_i(\vx),
    \qquad i=1,\dots,M.
    \label{eq:decomp}
\end{equation}
We use independent-output GP priors across state dimensions $d=1,\dots,D$:
\begin{align}
    f_{0,d}(\cdot) &\sim \GP\bigl(0,k_0\bigr),
    \qquad f_{0,d} \independent f_{0,d'} \;\text{for}\; d\neq d',
    \label{eq:sharedgp}\\
    g_{i,d}(\cdot) &\sim \GP\bigl(0,k_r\bigr),
    \qquad g_{i,d} \independent g_{j,d'} \;\text{for}\; (i,d)\neq(j,d'),
    \quad i=1,\dots,M .
    \label{eq:localgp}
\end{align}
It follows that, for any inputs $\vx,\vx'$,
\vspace{-.2cm}
\begin{equation}
    \Cov\!\bigl[f_{i,d}(\vx),f_{j,d'}(\vx')\bigr]
    = \mathbb{I}[d=d']\Bigl(k_0(\vx,\vx') + \mathbb{I}[i=j]k_r(\vx,\vx')\Bigr).
    \label{eq:fieldcov}
\end{equation}
Thus all subjects borrow strength through the common field, which is exactly the classical NLME philosophy translated from parameter space to function space. MAGMA provides a closely related common-mean construction for trajectory regression \citep{leroy2022magma}; here the common component acts directly on the vector field. Even though the prior factorizes across output dimensions, the dimensions will be coupled again by the state-space dynamics and collocation constraints during trajectory inference.

\vspace{-.2cm}
\subsection{Scalable representation of shared and local vector fields}
\vspace{-.1cm}
\label{sec:scalable}

Direct GP inference over all field evaluations is expensive. To obtain a scalable approximation, the shared and subject-specific fields are represented via separate inducing-feature expansions.
Let $\boldsymbol{Z}_0\in\R^{l_0\times D}$ and $\boldsymbol{Z}_r\in\R^{l_r\times D}$ denote the inducing locations for the population and residual fields, with $l_0$ and $l_r$ the corresponding numbers of inducing points.
 Using the associated kernel Gram matrices to whiten the basis, define the feature maps
\begin{equation}
    \Phi_0(\vx) \hspace{-0.2em} = \hspace{-0.2em} k_0(\vx,\boldsymbol{Z}_0)\!\,k_0(\boldsymbol{Z}_0,\boldsymbol{Z}_0)^{-1/2}\in\R^{1\times l_0},
    \hspace{-0.1em}
    \Phi_r(\vx) \hspace{-0.2em} = \hspace{-0.2em} k_r(\vx,\boldsymbol{Z}_r)\!\,k_r(\boldsymbol{Z}_r,\boldsymbol{Z}_r)^{-1/2}\in\R^{1\times l_r}.
    \label{eq:features}
\end{equation}
For each output dimension $d$, we approximate
\begin{equation}
    f_{0,d}(\vx)\approx \Phi_0(\vx)\vb_{0,d},
    \qquad
    g_{i,d}(\vx)\approx \Phi_r(\vx)\vb_{i,d},
    \label{eq:weightspace}
\end{equation}
with Gaussian coefficient priors
\vspace{-.1cm}
\begin{equation}
    \vb_{0,d}\sim\Normal\bigl(\mathbf{0},\Id_{l_0}\bigr),
    \qquad
    \vb_{i,d}\sim\Normal\bigl(\boldsymbol{0},\lambda_r^{-1}\Id_{l_r}\bigr).
    \label{eq:weightprior}
\end{equation}
The shrinkage parameter $\lambda_r$ controls the magnitude of the subject-specific field. $\lambda_r>1$ increases regularization of individual deviations, shifting explanatory power toward the shared population field.

We maintain Gaussian posteriors $q(\vb_{0,d})=\Normal(\vm_{0,d},\vS_{0,d})$ and $q(\vb_{i,d})=\Normal(\vm_{i,d},\vS_{i,d})$. Conditional on a fixed input $\vx$, the model for the latent subject-specific field value $f_{i,d}(\vx)$ is linear-Gaussian in these coefficients. Its predictive mean and variance are thus available in closed form:
\begin{equation}
\mu_{i,d}^{f}(\vx)=\Phi_0(\vx)\vm_{0,d}+\Phi_r(\vx)\vm_{i,d},
~~~~~~~
v_{i,d}^{f}(\vx)=\Phi_0(\vx)\vS_{0,d}\Phi_0(\vx)^\top + \Phi_r(\vx)\vS_{i,d}\Phi_r(\vx)^\top.
\label{eq:fieldmoments}
\end{equation}
Likewise, the Jacobian of the posterior-mean field is explicit because the feature maps are differentiable. The main challenge lies not in evaluating field moments at a fixed state, but in handling uncertainty in the latent trajectory itself. The next two subsections resolve this by alternating between trajectory smoothing and field regression.
\vspace{-.2cm}
\subsection{Trajectory-space inference via local collocation linearization}
\vspace{-.2cm}
\label{sec:trajupdate}
Naively maximizing a likelihood based on the unknown subject fields would require repeatedly solving one nonlinear ODE per subject every time the shared or local field changes. We avoid this by introducing an auxiliary trajectory posterior on a subject-specific collocation grid $\mathcal{C}_i:=\{\tau_{i,0},\dots,\tau_{i,K_i}\}$ that contains the observation times. We use a mean-field variational factorization
\begin{equation}
    q\bigl(\{\vb_{0,d}\},\{\vb_{i,d}\},\{\vs_{i,0:K_i}\}\bigr)
    = \Bigl(\prod_{d=1}^D q(\vb_{0,d})\Bigr)
      \Bigl(\prod_{i=1}^M\prod_{d=1}^D q(\vb_{i,d})\Bigr)
      \Bigl(\prod_{i=1}^M q(\vs_{i,0:K_i})\Bigr),
    \label{eq:variational}
\end{equation}
where $q(\vs_{i,0:K_i})$ denotes the variational posterior over the latent trajectory of subject $i$ on $\mathcal{C}_i$.
The initial state \(\vx_{i,0}\) is treated as a latent subject-specific quantity and modeled through a mixed-effect Gaussian decomposition. Details are provided in Appendix~\ref{app:init-conditions}.

Following Section~\ref{sec:collocationbackground}, collocation is imposed via soft pseudo observations. Here, however, the field is both subject-specific and unknown. For subject $i$ and collocation index $k$, the ODE residual is
\begin{equation}
    \mathbf{0} = \Hxdot\vs_{i,k} - \vf_i\bigl(\Hx\vs_{i,k}\bigr) + \boldsymbol{\epsilon}^{(\mathrm{coloc})}_{i,k},
    \qquad
    \boldsymbol{\epsilon}^{(\mathrm{coloc})}_{i,k}\sim\Normal\bigl(\mathbf{0},\boldsymbol{\Sigma}_{\mathrm{coloc}}\bigr).
    \label{eq:maincolloc}
\end{equation}
This term is difficult to handle for two reasons: $\vf_i\bigl(\Hx\vs_{i,k}\bigr)$ is a nonlinear function of the latent state, and the vector field itself remains uncertain under the current posterior. We therefore approximate $\vf_i\bigl(\Hx\vs_{i,k}\bigr)$ locally by a Gaussian distribution around a reference point $\bar{\vx}_{i,k}$ from the previous trajectory iterate.
Its mean is given by a first-order expansion of the posterior-mean vector field, while its covariance captures posterior uncertainty in the field evaluation. Define
\begin{equation}
    \bm{\mu}^{f}_{i,k}=\E_q\bigl[\vf_i(\bar{\vx}_{i,k})\bigr],
    \qquad
    \bm{\Sigma}^{f}_{i,k}=\Var_q\bigl[\vf_i(\bar{\vx}_{i,k})\bigr],
    \qquad
    \vJ_{i,k}=\nabla_{\vx}\E_q\bigl[\vf_i(\vx)\bigr]\Big|_{\vx=\bar{\vx}_{i,k}}.
    \label{eq:localmoments}
\end{equation}
A first-order expansion of the posterior-mean field then gives
\begin{equation}
    \E_q\bigl[\vf_i(\Hx\vs_{i,k})\bigr]
    \approx
    \bm{\mu}^{f}_{i,k} + \vJ_{i,k}\bigl(\Hx\vs_{i,k}-\bar{\vx}_{i,k}\bigr),
    \label{eq:linfield}
\end{equation}
and this leads to the Gaussian pseudo-observation
\begin{align}
    &\bm{c}^{(\mathrm{coloc})}_{i,k}
    = \vH^{(\mathrm{coloc})}_{i,k}\vs_{i,k} + \boldsymbol{\epsilon}_{i,k},
    \qquad
    \boldsymbol{\epsilon}_{i,k}\sim\Normal\bigl(\mathbf{0},\vR^{(\mathrm{coloc})}_{i,k}\bigr),~ \text{and}
    \label{eq:site}\\
    &\vH^{(\mathrm{coloc})}_{i,k}=\Hxdot-\vJ_{i,k}\Hx,
    \quad
    \bm{c}^{(\mathrm{coloc})}_{i,k}=\bm{\mu}^{f}_{i,k}-\vJ_{i,k}\bar{\vx}_{i,k},
    \quad
    \vR^{(\mathrm{coloc})}_{i,k}=\boldsymbol{\Sigma}_{\mathrm{coloc}}+\boldsymbol{\Sigma}^{f}_{i,k}.
    \label{eq:siteterms}
\end{align}
$\vH^{(\mathrm{coloc})}_{i,k}$ is the local linearized ODE-residual operator: it maps the augmented state $\vs_{i,k}$ to the difference between the derivative component $\dot \vx_{i,k}$ and the Jacobian-linearized field evaluation at $\vx_{i,k}$.
The derivation is given in Appendix~\ref{app:colloc-derivation}.

The observation model in Eq.~\eqref{eq:obs}, the state-space prior in Eq.~\eqref{eq:ssgp}, and the collocation pseudo-observation in Eq.~\eqref{eq:site} together define a linear-Gaussian smoothing problem. Therefore, each $q(\vs_{i,0:K_i})$ is a Gauss--Markov posterior that can be computed by Kalman smoothing, (Appendix~\ref{app:kalman}).
Lastly, as mentioned previously, output coupling arises even if vector-field prior factorizes across output dimensions, as made explicit by the Jacobian $\vJ_{i,k}$ acting on the augmented state.

\subsection{Vector field update as mixed regression on smoothed pseudo-derivatives}
\label{sec:fieldupdate}

After smoothing subject $i$, we extract pseudo-observations for the vector field:
\begin{equation}
    \tilde{\vx}_{i,k}=\E_q\bigl[\vx_i(\tau_{i,k})\bigr],
    \qquad
    \widetilde{\dot{\vx}}_{i,k}=\E_q\bigl[\dot{\vx}_i(\tau_{i,k})\bigr],
    \qquad
    \widetilde{\vR}_{i,k}=\Var_q\bigl(\dot{\vx}_i(\tau_{i,k})\bigr)+\boldsymbol{\Sigma}_{\mathrm{coloc}}.
    \label{eq:pseudodata}
\end{equation}
Note that $\tilde{\vx}_{i,k}$ differs from $\bar{\vx}_{i,k}$: $\bar{\vx}_{i,k}$ is the reference point used to construct the local linearization in the current trajectory update, whereas $\tilde{\vx}_{i,k}$ is the resulting updated smoothed mean after that update.

Fix an output dimension $d$. Stack the pseudo-derivative means for subject $i$ as
\begin{equation}
    \boldsymbol{u}_{i,d} = [\widetilde{\dot{x}}_{i,0,d},\dots,\widetilde{\dot{x}}_{i,K_i,d}]^\top \in \R^{K_i+1},
    \label{eq:Yid}
\end{equation}
and define feature design matrices
\begin{equation}
    \boldsymbol{P}_{0,i,d}
    =
    \bigl[\Phi_0(\tilde{\vx}_{i,j-1})\bigr]_{j=1}^{K_i+1}
    \in \R^{(K_i+1)\times l_0},
    \qquad
    \boldsymbol{P}_{r,i,d}
    =
    \bigl[\Phi_r(\tilde{\vx}_{i,j-1})\bigr]_{j=1}^{K_i+1}
    \in \R^{(K_i+1)\times l_r}.
    \label{eq:designmats}
\end{equation}
Let $\boldsymbol{\varepsilon}^{\mathrm{reg}}_{i,d}\sim\Normal(\mathbf{0},\boldsymbol{R}_{i,d})$, where $\widetilde{\vR}_{i,k}\in\mathbb{R}^{D\times D}$ is the local pseudo-derivative covariance at time $\tau_{i,k}$ (Eq.~\eqref{eq:pseudodata}), and $\boldsymbol{R}_{i,d}\in\mathbb{R}^{(K_i+1)\times(K_i+1)}$ collects the uncertainty for output dimension $d$ across the collocation times, e.g., through the diagonal approximation $\boldsymbol{R}_{i,d}
=
\diag\!\bigl([\widetilde{\vR}_{i,0}]_{dd},\dots,[\widetilde{\vR}_{i,K_i}]_{dd}\bigr)$, or via other conservative variants
(Appendix~\ref{app:fieldupdates}).
With this notation, the pseudo-data induce, for each subject $i$ and output dimension $d$, the heteroscedastic linear-Gaussian mixed regression
\begin{equation}
    \boldsymbol{u}_{i,d}
    =
    \boldsymbol{P}_{0,i,d}\vb_{0,d}
    +
    \boldsymbol{P}_{r,i,d}\vb_{i,d}
    +
    \boldsymbol{\varepsilon}^{\mathrm{reg}}_{i,d}.
    \label{eq:regression}
\end{equation}
Thus the field-learning step can be viewed as mixed regression in feature space.
Writing Gaussian posterior over a generic coefficient vector $\vb$ in natural form,
\[
q(\vb)\propto \exp\!\left(-\tfrac12 \vb^\top \vLambda \vb + \veta^\top \vb\right),
\]
the coefficient updates are obtained by alternating between the shared and local components, each time holding the current posterior mean of the other component fixed. For the shared field,

\begin{align}
    \vLambda^{\mathrm{new}}_{0,d}
    &= \boldsymbol{I}_{l_0} + \sum_{i=1}^M \boldsymbol{P}_{0,i,d}^\top \boldsymbol{R}_{i,d}^{-1}\boldsymbol{P}_{0,i,d},
    \label{eq:sharedprec}\\[-10pt]
    \veta^{\mathrm{new}}_{0,d}
    &= \sum_{i=1}^M \boldsymbol{P}_{0,i,d}^\top \boldsymbol{R}_{i,d}^{-1}\bigl(\boldsymbol{u}_{i,d}-\boldsymbol{P}_{r,i,d}\vm_{i,d}\bigr),
    \label{eq:sharedeta}
\end{align}
\vspace{-.1cm}
and for the local field of subject $i$,
\begin{align}
    \vLambda^{\mathrm{new}}_{i,d}
    &= \lambda_r\boldsymbol{I}_{l_r} + \boldsymbol{P}_{r,i,d}^\top \boldsymbol{R}_{i,d}^{-1}\boldsymbol{P}_{r,i,d},
    \label{eq:localprec}\\
    \veta^{\mathrm{new}}_{i,d}
    &= \boldsymbol{P}_{r,i,d}^\top \boldsymbol{R}_{i,d}^{-1}\bigl(\boldsymbol{u}_{i,d}-\boldsymbol{P}_{0,i,d}\vm_{0,d}\bigr).
    \label{eq:localeta}
\end{align}
Algorithm~\ref{alg:main} summarizes the overall procedure.
The same inference scheme yields two downstream procedures: extrapolation for a trained subject, by freezing the learned field posterior and resmoothing on an extended grid (Appendix~\ref{app:seen-subject-extrapolation}), and adaptation for a new subject, by freezing the shared posterior and updating only the new subject's local posterior and trajectory (Appendix~\ref{app:new-subject}). Model hyperparameters are learned via Empirical Bayes after the current smoothing pass (Appendix~\ref{subsec:hyperparameter_learning}).

\begin{algorithm}[H]
\caption{Alternating inference for \method}
\label{alg:main}
\begin{algorithmic}[1]
\Require Kernels $k_0,k_r$, inducing locations $\boldsymbol{Z}_0,\boldsymbol{Z}_r$, a state-space trajectory prior (Section~\ref{sec:ssgpbackground}), and collocation grids containing the observation times.
\State Initialize natural parameters
$\vLambda_{0,d}=\boldsymbol{I}_{l_0}$, $\veta_{0,d}=\mathbf{0}$, and
$\vLambda_{i,d}=\lambda_r\boldsymbol{I}_{l_r}$, $\veta_{i,d}=\mathbf{0}$.
\State Initialize subject-specific reference points by
$\bar{\vx}_{i,k}^{(0)} = \boldsymbol{H}_x \, \E\!\left[\vs_{i,k}\mid \{\vy_{i,n}\}_{n=1}^{N_i},\right]$.

\State \textbf{Repeat until convergence:}
\For{each subject $i$} \hfill \texttt{//parallelizable}
        \State Evaluate local field moments and Jacobians at $\bar{\vx}_{i,k}$ using Eqs.~\eqref{eq:fieldmoments} and~\eqref{eq:localmoments}.
        \State Form collocation pseudo-observations (Eqs.~\eqref{eq:site}--\eqref{eq:siteterms}).
        \State Update $q(\vs_{i,0:K_i})$ by Kalman smoothing (Appendix~\ref{app:kalman}).
        \State Update the mixed-effect initial-condition from smoothed marginals (Appendix~\ref{app:init-conditions}).
        \State Extract pseudo-observations 
        $(\tilde{\vx}_{i,k},\widetilde{\dot{\vx}}_{i,k},\widetilde{\vR}_{i,k})$ using Eq.~\eqref{eq:pseudodata}, and set $\bar{\vx}_{i,k}\leftarrow\tilde{\vx}_{i,k}$.
    \EndFor
    \State Build the feature regression problems in Eqs.~\eqref{eq:Yid}--\eqref{eq:regression}.
    \State Update the shared-field posterior using the natural-parameter updates in Eqs.~\eqref{eq:sharedprec}--\eqref{eq:sharedeta}.
    \State Update each local-field posterior using Eqs.~\eqref{eq:localprec}--\eqref{eq:localeta}.
\end{algorithmic}
\end{algorithm}
\vspace{-.6cm}
\subsection{Structural analysis and representation variants}
\label{sec:th}

\textbf{Structural properties.} Appendix~\ref{app:theory} develops structural properties of \method, summarized in Table~\ref{tab:theory-roadmap}:
well-posed posterior-mean dynamics, population/local covariance and variance decompositions, and the Gaussian geometry of the conditional field update. 
Two consequences are especially relevant here. First, conditional on the current smoothed pseudo-data, the field update is a strictly convex weighted mixed-ridge regression with a unique minimizer, yielding conditional identifiability of the shared and local components (Proposition~\ref{prop:exactridge}, Corollary~\ref{corr:unique}). Second, Eq.~\eqref{eq:sharedprec} shows that shared precision accumulates as a sum of positive semidefinite subject-specific contributions, so adding subjects can only increase shared precision and reduce shared posterior covariance (Proposition~\ref{prop:monotone}). This property makes our approach the dynamical analogue of common-mean GPs like MAGMA~\citep{leroy2022magma}.

\textbf{Representation variants.}
Appendix~\ref{app:altscalable} shows that the method extends beyond inducing
features to any finite-dimensional linear-Gaussian field representation with
tractable moments and Jacobians, e.g., random Fourier features~\citep{rahimi2008rff}
and Hilbert bases~\citep{solin2020hilbert}.
Appendix~\ref{sec:covariates}
describes an extension where the local coefficients are correlated across subjects via a covariate kernel later used in experiments.

\section{Related work}
\label{sec:related}

\textbf{Flexible continuous-time dynamics.}
GP-ODEs and neural ODE models form the closest continuous-time modelling lineage to our approach. GP-based ODE learning spans nonparametric vector-field inference \citep{heinonen2018npode}, gradient matching \citep{dondelinger2013ode,wenk2020odin,lorenzi}, manifold-constrained GP inference \citep{yang2021magi}, variational multiple shooting \citep{hegde2022vmsgp}, and collocation-based physics-informed models \citep{long2022autoip,hamelijnck2024physsgp}. Neural ODEs~\citep{chen2018neuralode}, latent ODEs~\citep{rubanova2019latentode}, and Neural CDEs~\citep{kidger2020neuralcde} provide flexible alternatives for irregular-time prediction and forecasting. These methods learn expressive dynamics, but most assume a single shared dynamical system rather than a population of subject-specific vector fields.

\textbf{Personalized neural dynamics.}
Recent approaches adapt continuous-time neural dynamics across environments or patients. Mixed-effect Neural ODEs~\citep{nazarovs2022menode} introduce subject-specific latent random effects into neural ODEs. Context-conditioned neural dynamics, including CoDA~\citep{kirchmeyer2022generalizing}, MetaPhysiCa~\citep{mouli2024metaphysica}, and DIF~\citep{iflgui}, use context variables or meta-learned representations to personalize dynamics across environments. These approaches are flexible and personalized, but lack a probabilistic treatment of the mixed-effect decomposition of the vector field.

\textbf{Mixed-effect population models.}
Parametric NLME-ODEs remain the dominant framework for population dynamics in pharmacometrics and systems biology \citep{lindstrom1990nlme,lavielle2014mixedeffects,wang2014mixedode}. Population Kalman filtering \citep{collin2022popkf} and recent amortized or in-context estimators \citep{arruda2024amortized,li2026vae,ojedamarin2025aicmet} improve scalability, personalization, or inference speed, but still operate in predefined parametric or neural model families.
Flexible mixed-effect regression models, including common-mean multi-task GPs such as MAGMA~\citep{leroy2022magma} and other mixed-effect GP formulations~\citep{kimme,yoon2022doubly}, share information across subjects but do not explicitly model continuous-time dynamics.
In contrast, \method{} places the mixed-effect hierarchy directly over the vector field, combining personalization, flexible dynamics, and function-space uncertainty.

\section{Experiments}
\label{sec:experiments}

We evaluate \method{} in controlled population-ODE benchmarks where the
ground-truth dynamics are known. Experiments will assess whether
the mixed-effect field decomposition improves population-level recovery and
prediction, how sensitive the method is to its hyperparameters and data
regime choices, and whether function-space residuals can correct a misspecified mechanistic population model. All experiments in this section use synthetic systems, which allows us to evaluate both predictive performance and recovery of the underlying population dynamics.

\subsection{Benchmark systems and heterogeneity injection}
\label{sec:systems}

\textbf{Synthetic benchmark systems.}
We consider mechanistic population models spanning both oscillatory and
non-oscillatory dynamics. The oscillatory benchmarks include a linear oscillator,
Lotka--Volterra, Van der Pol, and FitzHugh--Nagumo systems. The non-oscillatory
benchmarks include SIR epidemic dynamics and pharmacokinetic models.
Subject heterogeneity is introduced through random effects on
mechanistic parameters, yielding a direct comparison to baselines that either
share one dynamical system or model subject-specific dynamics. Details are given
in Appendix~\ref{app:systems}.

\textbf{Baselines.}
We compare \method{} against
GPODE~\citep{hegde2022vmsgp}, adapted with one shared latent vector field and
subject-specific initial states; CoDA~\citep{kirchmeyer2022generalizing}, which
conditions a neural vector field on learned subject codes; and a mixed-effect
neural ODE~\citep{nazarovs2022menode}. Details are given in
Appendix~\ref{app:concurrent_baselines}. We also include two ablations:
\texttt{f0\_only}, which sets $g_i\equiv0$, and \texttt{gi\_only}, which sets $f_0\equiv 0$ (Appendix~\ref{app:baselines}).

\textbf{Metrics.}
We evaluate prediction in interpolation and forecasting settings. At the subject level, we report RMSE, negative log predictive density (NLPD), 
continuous ranked probability score (CRPS), and empirical 95\% coverage error, using the model predictive distribution when available and the fitted observation-noise model otherwise.
At the population level, we compare the inferred shared trajectory to the known synthetic population trajectory using population RMSE, which directly assesses whether \(f_0\) is recovered in a dynamically meaningful way.

\begin{table}[t]
\centering
\tiny
\caption{\textbf{Synthetic benchmark performance.}
\textbf{Top:} average target-window ranks across synthetic systems. For each system, mode, and metric, methods are ranked after averaging over seeds; entries report mean $\pm$ standard deviation across systems, with lower ranks better.
\textbf{Bottom:} SIR forecast performance over 20 seeds; full benchmark distributions are shown in Figures~\ref{fig:fullforecast} and~\ref{fig:fullinterpolation}.
\textbf{\method{} achieves the strongest average ranks for trajectory accuracy and distributional forecast quality, while remaining competitive on uncertainty calibration.}}
\vspace{-.1cm}
\label{tab:consensus_compact_ranks}
\resizebox{\linewidth}{!}{%
\begin{tabular}{llcccccc}
\toprule
Setting & Metric & MEGPODE & $f_0$-only & $g_i$-only & CoDA & ME-NODE & GP-ODE \\
\midrule
\textbf{Forecast} & Pop. RMSE rank $\downarrow$ & $\mathbf{1.57 \pm 0.79}$ & $3.43 \pm 1.51$ & $3.71 \pm 1.11$ & $4.29 \pm 1.60$ & $\underline{3.29 \pm 2.14}$ & $4.71 \pm 1.50$ \\
 & RMSE rank $\downarrow$ & $\mathbf{2.86 \pm 1.07}$ & $4.14 \pm 1.46$ & $3.57 \pm 1.27$ & $3.71 \pm 2.36$ & $\underline{3.00 \pm 1.91}$ & $3.71 \pm 2.21$ \\
 & NLPD rank $\downarrow$ & $3.43 \pm 1.62$ & $4.29 \pm 1.38$ & $3.86 \pm 1.21$ & $4.43 \pm 2.15$ & $\mathbf{2.29 \pm 1.80}$ & $\underline{2.71 \pm 1.50}$ \\
 & CRPS rank $\downarrow$ & $\mathbf{2.86 \pm 1.35}$ & $4.14 \pm 1.21$ & $3.71 \pm 1.11$ & $3.29 \pm 2.29$ & $\underline{3.00 \pm 2.16}$ & $4.00 \pm 2.08$ \\
 & 95\% cov. rank $\downarrow$ & $3.57 \pm 1.13$ & $4.14 \pm 1.95$ & $4.00 \pm 1.29$ & $\underline{3.14 \pm 1.77}$ & $\mathbf{2.86 \pm 2.41}$ & $3.29 \pm 1.80$ \\
\midrule
\textbf{Interpolation} & Pop. RMSE rank $\downarrow$ & $\mathbf{1.14 \pm 0.38}$ & $\underline{2.43 \pm 0.98}$ & $3.43 \pm 0.79$ & $5.29 \pm 0.49$ & $3.00 \pm 0.82$ & $5.71 \pm 0.49$ \\
 & RMSE rank $\downarrow$ & $\mathbf{1.29 \pm 0.49}$ & $\underline{2.00 \pm 0.58}$ & $3.57 \pm 0.53$ & $3.43 \pm 1.40$ & $4.71 \pm 0.49$ & $6.00 \pm 0.00$ \\
 & NLPD rank $\downarrow$ & $\underline{2.86 \pm 2.12}$ & $3.14 \pm 1.68$ & $4.00 \pm 1.53$ & $4.14 \pm 1.95$ & $\mathbf{2.57 \pm 1.40}$ & $4.29 \pm 1.38$ \\
 & CRPS rank $\downarrow$ & $\mathbf{1.29 \pm 0.49}$ & $\underline{2.14 \pm 0.69}$ & $3.86 \pm 0.69$ & $3.29 \pm 1.50$ & $4.43 \pm 0.79$ & $6.00 \pm 0.00$ \\
 & 95\% cov. rank $\downarrow$ & $\underline{3.29 \pm 1.98}$ & $3.57 \pm 1.40$ & $3.86 \pm 1.57$ & $4.29 \pm 1.11$ & $\mathbf{1.57 \pm 1.51}$ & $4.43 \pm 1.51$ \\
\bottomrule
\end{tabular}
}
\includegraphics[width=1\linewidth]{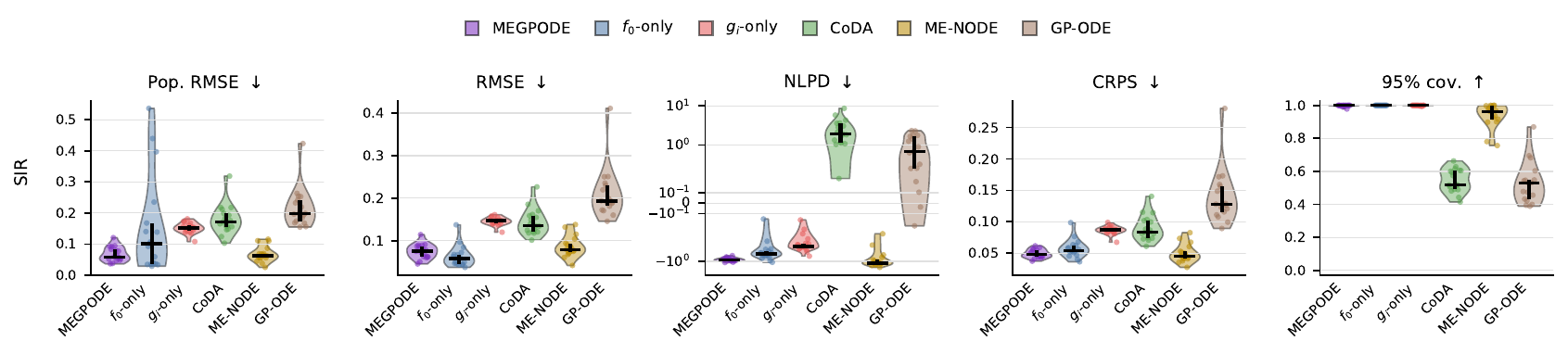}
\vspace{-1cm}
\end{table}

\subsection{Results}
\label{sec:results}

Table~\ref{tab:consensus_compact_ranks} summarizes target-window performance across synthetic benchmarks. For each system, evaluation mode, and metric, we average over seeds, rank methods, and report the mean $\pm$ standard deviation of ranks across systems. This rank-based summary avoids over-weighting systems whose raw metric scales are intrinsically larger. Full per-system distributions displayed in 
Figures~\ref{fig:fullforecast}--~\ref{fig:fullinterpolation}.

\method{} gives the strongest overall trajectory recovery. It ranks first for population RMSE in both forecasting and interpolation, indicating more accurate recovery \(f_0\).
ME-NODE is often stronger on NLPD and coverage, suggesting more conservative uncertainty estimates, but this advantage comes with a weaker match to the target trajectories: \method{} obtains better average RMSE and CRPS ranks.
CoDA is competitive on some subject-level tasks but lacks an explicit probabilistic population field, while GP-ODE is probabilistic but uses only one shared field with subject-specific initial conditions. Both therefore miss part of the mixed-effect field hierarchy, which is reflected in weaker population or trajectory metrics.
Finally, considering \texttt{f0\_only}, which removes personalization, and \texttt{gi\_only}, which removes cross-subject borrowing, both generally underperform the full model, indicating that accurate population recovery and individualized forecasting require both \(f_0\) and \(g_i\).

Beyond predictive accuracy, \method{} uniquely provides a probabilistic treatment of both the shared field and subject-specific deviations. This enables useful diagnostics for field- and trajectory-level uncertainty decomposition as population and local contributions. Figure~\ref{fig:fhndiag} illustrates these diagnostics using Equations~\eqref{eq:field_uncertainty_diag}, \eqref{eq:vartrajapprox}, and~\eqref{eq:local_df}, formally introduced in Appendices~\ref{app:theory-cov} and~\ref{app:theory-shrinkage}.

\subsection{Correcting misspecified mechanistic population dynamics}
\label{sec:fhn_residual_engineered}

\begin{figure}
    \centering
    \includegraphics[width=\linewidth]{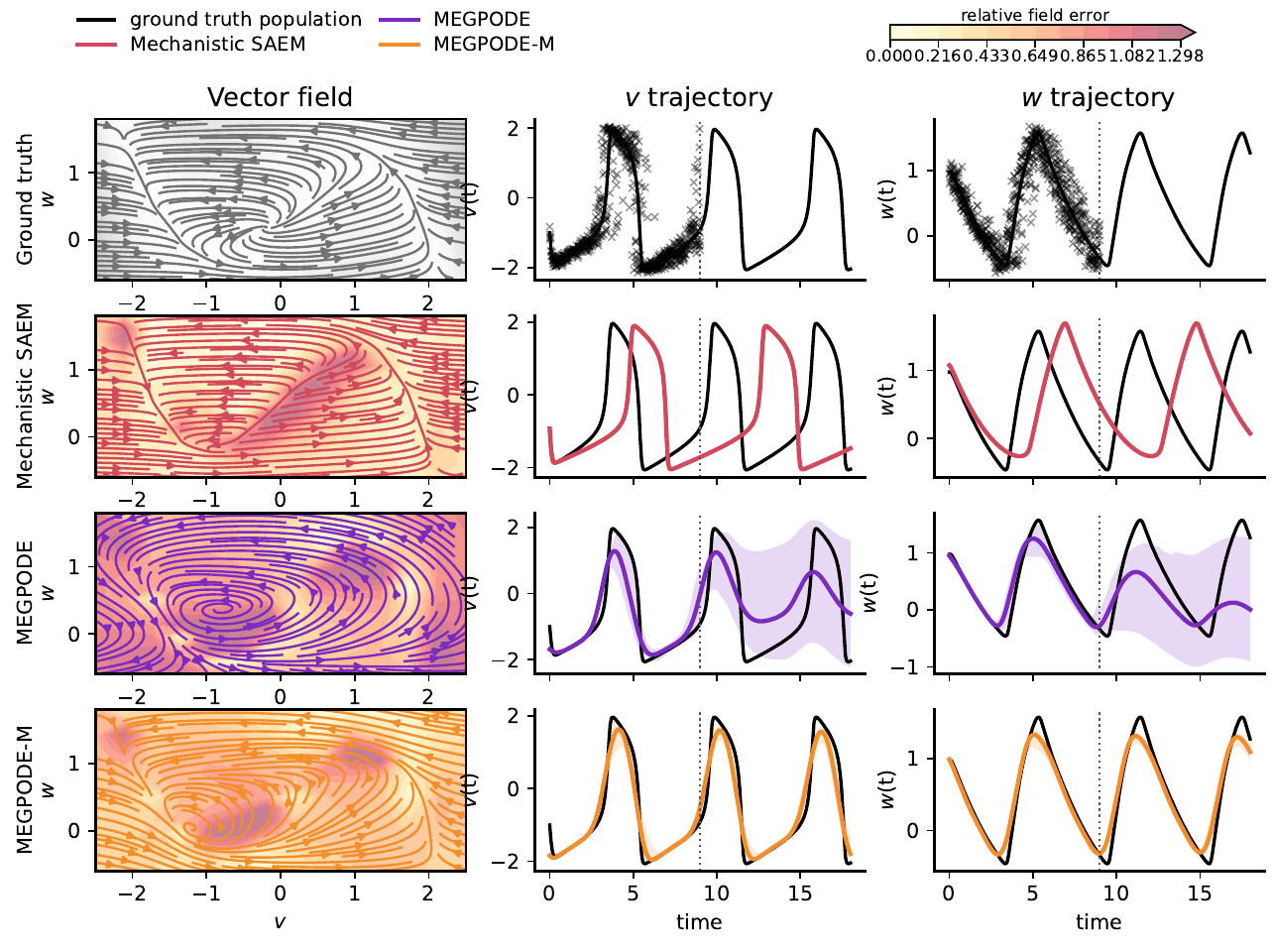}
\caption{\textbf{Misspecified FHN setting.}
Population fields and forecasts when the true dynamics include a smooth residual term. \textbf{The semi-mechanistic version of \method\  best recovers the corrected field and forecast dynamics.}}
    \label{fig:population_method_rows}
\end{figure}
Because mechanistic ODEs often capture only part of the true dynamics, we use an engineered FitzHugh--Nagumo benchmark to test whether function-space mixed effects can correct such incompleteness. We use an engineered FitzHugh--Nagumo benchmark in which
\[
    f_0^\star(\vx)
    =
    f_{\mathrm{FHN}}(\vx;\boldsymbol\theta^\star)
    +
    h_0(\vx),
\]
where \(h_0\) is a smooth state-dependent correction. Subjects also have smooth residual fields \(g_i^\star\) (Appendix~\ref{app:fhn_residual_engineered}).

Figure~\ref{fig:population_method_rows} compares three approaches: mechanistic SAEM, which fits the FHN mixed-effect model without residual correction; standard \method{}, which learns a fully nonparametric population-plus-subject decomposition; and \methodm, our semi-mechanistic variant with an FHN mean and GP residual (Appendix~\ref{app:fhn_residual_engineered}). The mechanistic fit is constrained by the misspecified FHN family and accumulates phase and amplitude errors in forecast. Standard \method{} is flexible, but its population field is weakly constrained away from the observed region in this low-data extrapolation setting. In contrast, \methodm\ combines the FHN inductive bias with a function-space residual correction, yielding the closest population field and the most accurate forecasts. 

\vspace{-0.2cm}

\subsection{Ablations}
\label{sec:ablations}
We study sensitivity to model hyperparameters and experimental design by varying one factor at a time around the default configuration. Results are averaged over the main benchmark systems and $10$ random seeds. For lower-is-better metrics, we report
$
\Delta_{\mathrm{rel}}
=
100
\frac{\mathcal M_{\mathrm{base}}-\mathcal M_{\mathrm{abl}}}
{|\mathcal M_{\mathrm{base}}|},
$
with the sign reversed for higher-is-better metrics; thus positive values indicate improvement over the default.

\begin{figure}
    \centering
    \includegraphics[width=1\linewidth]{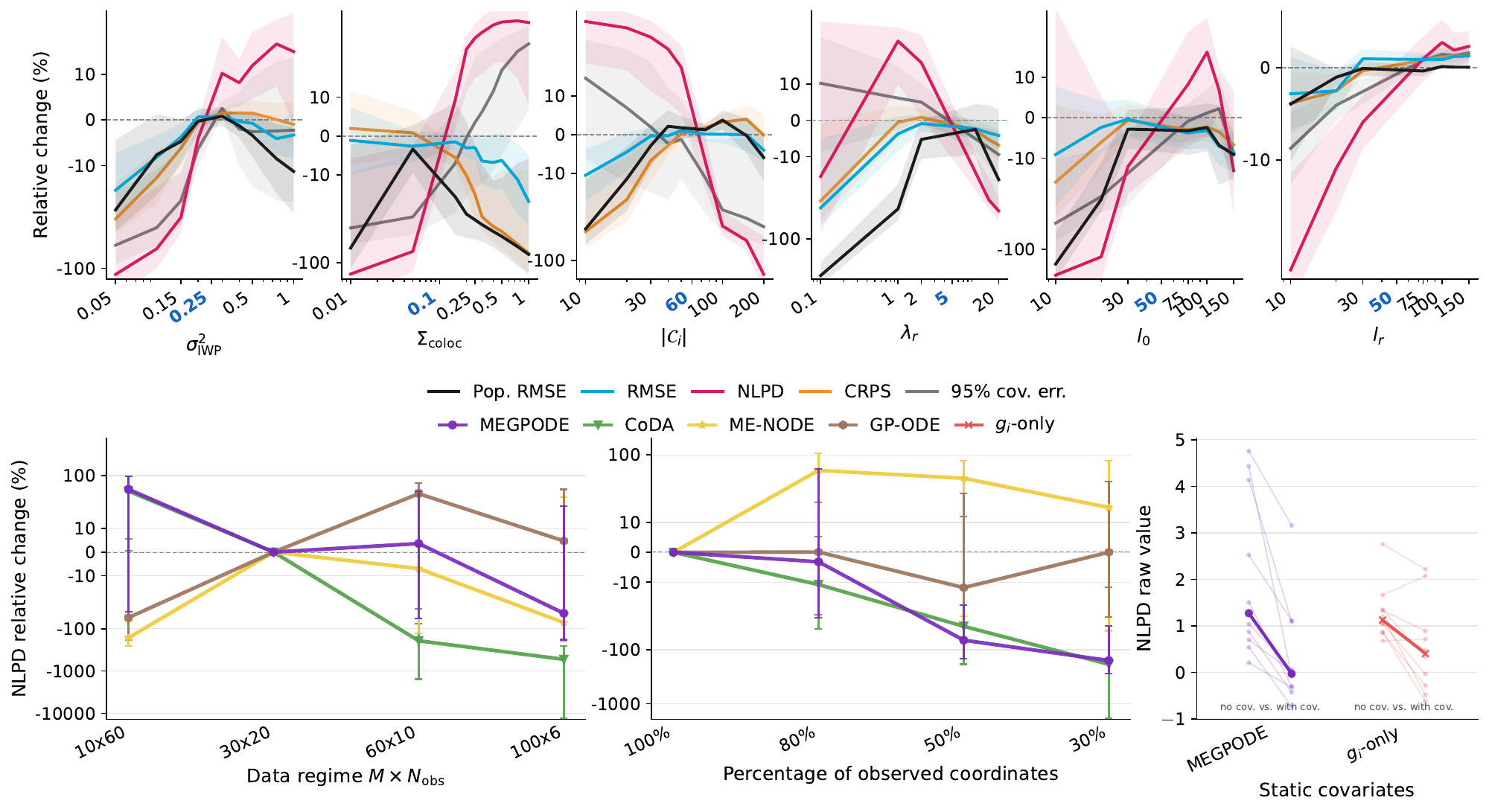}
    \caption{\textbf{Ablations for \method\ (forecast setting).} \textbf{Top:} hyperparameter ablations.
Each panel varies a single hyperparameter around the default configuration, while keeping the rest of the training and evaluation pipeline fixed; the blue tick marks the default value. Mean $\pm 0.5$ std relative change with respect to the default setting shown, averaged over $10$ seeds and the benchmark systems used in the ablation study.
    \textbf{Bottom:} ablations on the experimental setting.}
    \label{fig:ablations}
\end{figure}

\textbf{Hyperparameter ablations.}
We assess the sensitivity of \method\ to its main modelling hyperparameters: the integrated-Wiener-process scale $\sigma^2_{\mathrm{IWP}}$, the collocation covariance $\Sigma_{\mathrm{coloc}}$, the collocation-grid size $|\mathcal{C}_i|=K_i+1$, the residual shrinkage parameter $\lambda_r$, and the shared/local inducing capacities $(l_0,l_r)$.
Figure~\ref{fig:ablations} shows that \method\ admits a broad basin of good configurations. In forecast mode, the largest degradations occur for too little process noise, too little collocation variance, very coarse collocation grids, and weak residual shrinkage ($\lambda_r=0.1$), with the population trajectory metric being especially sensitive. At the same time, population RMSE displays a clear interior optimum: moderate $\sigma^2_{\mathrm{IWP}}$, moderate $\Sigma_{\mathrm{coloc}}$, intermediate grid sizes, and moderate $\lambda_r$ consistently lie in the best region, and the default setting falls close to this basin. The inducing-capacity ablations are asymmetric, with $l_0$ mattering mainly when it is too small, whereas performance is comparatively flat once $l_r$ is moderately large. Interpolation follows the same overall pattern, but the best region shifts toward slightly larger $\sigma^2_{\mathrm{IWP}}$ and $\Sigma_{\mathrm{coloc}}$, and toward somewhat smaller collocation grids (Figure~\ref{fig:abinterp}). 

\textbf{Experimental-setting ablations.}
We vary the data regime and observation structure while keeping the model class fixed (Figure~\ref{fig:ablations}, bottom).
\method\ and CoDA benefit from denser individual trajectories, GP-ODE from more subjects with fewer observations, and ME-NODE is comparatively less sensitive.
Next, as expected, reducing the fraction of observed state coordinates in every sample generally degrades performance, although ME-NODE shows a less regular trend. Finally, 
on the Lotka--Volterra covariate benchmark (Appendix~\ref{sec:lvcov}), 
 our static covariate extension (Appendix~\ref{sec:covariates}) reduces the paired raw NLPD relative to its non-covariate counterpart, supporting covariate kernels as a way to share information across subjects with similar static attributes.

\section{Discussion}
\label{sec:discussion}

We introduced \method{}, a flexible mixed-effect model for heterogeneous continuous-time dynamics. By decomposing each subject's vector field with GP priors over both the population field and subject-specific deviations, \method{} lifts the classical NLME-ODE population/individual decomposition from parameter space to vector-field function space. At the same time, it can be seen as the dynamical analogue of mixed-effect GPs~\citep{leroy2022magma}: the shared and local components act on the ODE drift rather than directly on trajectories or regression outputs. Our approach combines state-space collocation with Kalman smoothing trajectory inference without repeated ODE solves.

\textbf{Limitations.} Our empirical study is limited to synthetic examples; while this enables ground-truth evaluation of fields and population trajectories, validation on real-world longitudinal data remains a key next step.
Next, \method{} depends on kernel and scalable representation choices.
We use inducing features, but any differentiable finite-dimensional linear-Gaussian field representation can be used (Appendix~\ref{app:altscalable}).
Finally, the trajectory update relies on local linearization of the nonlinear collocation factor, so accuracy may degrade when the state posterior is broad or strongly multimodal.

\textbf{Perspectives.}
A key direction is to develop \emph{grey-box} extensions~\citep{hybridnode, greybox, singh2026variational}. Motivated by Section~\ref{sec:fhn_residual_engineered}, \method{} could combine a mechanistic ODE backbone with GP residual fields, correcting structural misspecification while separating parametric dynamics from residual uncertainty. A second direction is to move from static subject covariates to \emph{dynamical covariates}, enabling more faithful modelling of biomedical trajectories driven by time-varying inputs. 
These directions mirror a broader biology-informed ML perspective: combining mechanistic structure, subject-level context, and flexible data-driven components rather than relying on any single source of information~\citep{martinelli2025position}.

\bibliographystyle{plainnat}
\bibliography{neurips_mixedeffect_ode}

\begin{thebibliography}{44}
\providecommand{\natexlab}[1]{#1}
\providecommand{\url}[1]{\texttt{#1}}
\expandafter\ifx\csname urlstyle\endcsname\relax
  \providecommand{\doi}[1]{doi: #1}\else
  \providecommand{\doi}{doi: \begingroup \urlstyle{rm}\Url}\fi

\bibitem[Alexandre et~al.(2023)Alexandre, Prague, McLean, Bockstal, Douoguih, Thiébaut, Effelterre, Solforosi, and Dari]{alexandre2023prediction}
Marie Alexandre, Mélanie Prague, Chelsea McLean, Viki Bockstal, Macaya Douoguih, Rodolphe Thiébaut, Thierry Effelterre, Laura Solforosi, and Anna Dari.
\newblock Prediction of long-term humoral response induced by the two-dose heterologous ad26.zebov, mva-bn-filo vaccine against ebola.
\newblock \emph{npj Vaccines}, 2023.

\bibitem[Arruda et~al.(2024)Arruda, Sch\"{a}lte, Peiter, Teplytska, Jaehde, and Hasenauer]{arruda2024amortized}
Jonas Arruda, Yannik Sch\"{a}lte, Clemens Peiter, Olga Teplytska, Ulrich Jaehde, and Jan Hasenauer.
\newblock An amortized approach to non-linear mixed-effects modeling based on neural posterior estimation.
\newblock In \emph{Proceedings of the 41st International Conference on Machine Learning}, 2024.

\bibitem[Bishop and Nasrabadi(2006)]{bishop2006pattern}
Christopher~M Bishop and Nasser~M Nasrabadi.
\newblock \emph{Pattern recognition and machine learning}.
\newblock Springer, 2006.

\bibitem[Boyd and Vandenberghe(2004)]{boyd2004convex}
Stephen Boyd and Lieven Vandenberghe.
\newblock \emph{Convex Optimization}.
\newblock Cambridge University Press, 2004.

\bibitem[Br{\"a}m et~al.(2025)Br{\"a}m, Steiert, Pfister, Steffens, and Koch]{bram2025monolix}
Dominic~Stefan Br{\"a}m, Bernhard Steiert, Marc Pfister, Britta Steffens, and Gilbert Koch.
\newblock Low-dimensional neural ordinary differential equations accounting for inter-individual variability implemented in monolix and nonmem.
\newblock \emph{CPT: Pharmacometrics \& Systems Pharmacology}, 2025.

\bibitem[Chen et~al.(2018)Chen, Rubanova, Bettencourt, and Duvenaud]{chen2018neuralode}
Ricky T.~Q. Chen, Yulia Rubanova, Jesse Bettencourt, and David Duvenaud.
\newblock Neural ordinary differential equations.
\newblock In \emph{Advances in Neural Information Processing Systems}, 2018.

\bibitem[Chung et~al.(2020)Chung, Kim, Lee, Kim, Hwang, and Yang]{kimme}
Ingyo Chung, Saehoon Kim, Juho Lee, Kwang~Joon Kim, Sung~Ju Hwang, and Eunho Yang.
\newblock Deep mixed effect model using gaussian processes: A personalized and reliable prediction for healthcare.
\newblock \emph{Proceedings of the AAAI Conference on Artificial Intelligence}, 2020.

\bibitem[Clairon et~al.(2023)Clairon, Prague, Planas, Bruel, Hocqueloux, Prazuck, Schwartz, Thi{\'e}baut, and Guedj]{clairon2023antibody}
Quentin Clairon, M{\'e}lanie Prague, Delphine Planas, Timoth{\'e}e Bruel, Laurent Hocqueloux, Thierry Prazuck, Olivier Schwartz, Rodolphe Thi{\'e}baut, and J{\'e}r{\'e}mie Guedj.
\newblock Modeling the kinetics of the neutralizing antibody response against sars-cov-2 variants after several administrations of bnt162b2.
\newblock \emph{PLoS Computational Biology}, 2023.

\bibitem[Collin et~al.(2022)Collin, Prague, and Moireau]{collin2022popkf}
Annabelle Collin, M{\'e}lanie Prague, and Philippe Moireau.
\newblock Estimation for dynamical systems using a population-based kalman filter--applications in computational biology.
\newblock \emph{Mathematic{S} In Action}, 2022.

\bibitem[Comets et~al.(2017)Comets, Lavenu, and Lavielle]{comets2017parameter}
Emmanuelle Comets, Audrey Lavenu, and Marc Lavielle.
\newblock Parameter estimation in nonlinear mixed effect models using saemix, an r implementation of the saem algorithm.
\newblock \emph{Journal of Statistical Software}, 2017.

\bibitem[Dondelinger et~al.(2013)Dondelinger, Husmeier, Rogers, and Filippone]{dondelinger2013ode}
Frank Dondelinger, Dirk Husmeier, Simon Rogers, and Maurizio Filippone.
\newblock Ode parameter inference using adaptive gradient matching with gaussian processes.
\newblock In \emph{Artificial intelligence and statistics}, 2013.

\bibitem[Gui et~al.(2025)Gui, Li, and Ji]{iflgui}
Shurui Gui, Xiner Li, and Shuiwang Ji.
\newblock Discovering physics laws of dynamical systems via invariant function learning.
\newblock In \emph{Proceedings of the 42nd International Conference on Machine Learning}, 2025.

\bibitem[Hamelijnck et~al.(2024)Hamelijnck, Solin, and Damoulas]{hamelijnck2024physsgp}
Oliver Hamelijnck, Arno Solin, and Theodoros Damoulas.
\newblock Physics-informed variational state-space {G}aussian processes.
\newblock In \emph{Advances in Neural Information Processing Systems}, 2024.

\bibitem[Hegde et~al.(2022)Hegde, Y{\i}ld{\i}z, L{\"a}hdesm{\"a}ki, Kaski, and Heinonen]{hegde2022vmsgp}
Pashupati Hegde, {\c{C}}a{\u{g}}atay Y{\i}ld{\i}z, Harri L{\"a}hdesm{\"a}ki, Samuel Kaski, and Markus Heinonen.
\newblock Variational multiple shooting for bayesian odes with gaussian processes.
\newblock In \emph{Uncertainty in Artificial Intelligence}, 2022.

\bibitem[Heinonen et~al.(2018)Heinonen, Yildiz, Mannerstrom, Intosalmi, and Lahdesmaki]{heinonen2018npode}
Markus Heinonen, Cagatay Yildiz, Henrik Mannerstrom, Jukka Intosalmi, and Harri Lahdesmaki.
\newblock Learning unknown {ODE} models with gaussian processes.
\newblock In \emph{Proceedings of the 35th International Conference on Machine Learning}, 2018.

\bibitem[Kidger et~al.(2020)Kidger, Morrill, Foster, and Lyons]{kidger2020neuralcde}
Patrick Kidger, James Morrill, James Foster, and Terry Lyons.
\newblock Neural controlled differential equations for irregular time series.
\newblock \emph{Advances in neural information processing systems}, 2020.

\bibitem[Kirchmeyer et~al.(2022)Kirchmeyer, Yin, Don{\`a}, Baskiotis, Rakotomamonjy, and Gallinari]{kirchmeyer2022generalizing}
Matthieu Kirchmeyer, Yuan Yin, J{\'e}r{\'e}mie Don{\`a}, Nicolas Baskiotis, Alain Rakotomamonjy, and Patrick Gallinari.
\newblock Generalizing to new physical systems via context-informed dynamics model.
\newblock In \emph{International conference on machine learning}, 2022.

\bibitem[Lavielle(2014)]{lavielle2014mixedeffects}
Marc Lavielle.
\newblock \emph{Mixed Effects Models for the Population Approach: Models, Tasks, Methods and Tools}.
\newblock CRC press, 2014.

\bibitem[Leroy et~al.(2022)Leroy, Latouche, Guedj, and Gey]{leroy2022magma}
Arthur Leroy, Pierre Latouche, Benjamin Guedj, and Servane Gey.
\newblock {MAGMA}: inference and prediction using multi-task {G}aussian processes with common mean.
\newblock \emph{Machine Learning}, 2022.

\bibitem[Li et~al.(2026)Li, Prague, Thi{\'e}baut, and Clairon]{li2026vae}
Zhe Li, M{\'e}lanie Prague, Rodolphe Thi{\'e}baut, and Quentin Clairon.
\newblock Variational autoencoder for inference of nonlinear mixed effect models based on ordinary differential equations.
\newblock \emph{arXiv preprint arXiv:2601.17400}, 2026.

\bibitem[Lindstrom and Bates(1990)]{lindstrom1990nlme}
Mary~J. Lindstrom and Douglas~M. Bates.
\newblock Nonlinear mixed effects models for repeated measures data.
\newblock \emph{Biometrics}, 1990.

\bibitem[Long et~al.(2022)Long, Wang, Krishnapriyan, Kirby, Zhe, and Mahoney]{long2022autoip}
Da~Long, Zheng Wang, Aditi Krishnapriyan, Robert Kirby, Shandian Zhe, and Michael Mahoney.
\newblock Autoip: A united framework to integrate physics into gaussian processes.
\newblock In \emph{International Conference on Machine Learning}, 2022.

\bibitem[Lorenzi and Filippone(2018)]{lorenzi}
Marco Lorenzi and Maurizio Filippone.
\newblock Constraining the dynamics of deep probabilistic models.
\newblock In \emph{Proceedings of the 35th International Conference on Machine Learning}, 2018.

\bibitem[Machado et~al.(2011)Machado, Costa, Rocha, Ferreira, Tidor, and Rocha]{machado2011modeling}
Daniel Machado, Rafael~S Costa, Miguel Rocha, Eug{\'e}nio~C Ferreira, Bruce Tidor, and Isabel Rocha.
\newblock Modeling formalisms in systems biology.
\newblock \emph{AMB express}, 2011.

\bibitem[Martinelli(2025)]{martinelli2025position}
Julien Martinelli.
\newblock Position: Biology is the challenge physics-informed {ML} needs to evolve.
\newblock In \emph{The Thirty-Ninth Annual Conference on Neural Information Processing Systems Position Paper Track}, 2025.

\bibitem[Mouli et~al.(2024)Mouli, Alam, and Ribeiro]{mouli2024metaphysica}
S~Chandra Mouli, Muhammad Alam, and Bruno Ribeiro.
\newblock Metaphysica: Improving {OOD} robustness in physics-informed machine learning.
\newblock In \emph{The Twelfth International Conference on Learning Representations}, 2024.

\bibitem[Nazarovs et~al.(2021)Nazarovs, Chakraborty, Tasneeyapant, Ravi, and Singh]{nazarovs2022menode}
Jurijs Nazarovs, Rudrasis Chakraborty, Songwong Tasneeyapant, Sathya Ravi, and Vikas Singh.
\newblock A variational approximation for analyzing the dynamics of panel data.
\newblock In \emph{Proceedings of the Thirty-Seventh Conference on Uncertainty in Artificial Intelligence}, 2021.

\bibitem[Ojeda et~al.(2026)Ojeda, Hartung, Huisinga, and Sanchez]{ojedamarin2025aicmet}
Cesar Ojeda, Niklas Hartung, Wilhelm Huisinga, and Ramses~J Sanchez.
\newblock Amortized in-context mixed effect transformer models: A zero-shot approach for pharmacokinetics.
\newblock In \emph{The 29th International Conference on Artificial Intelligence and Statistics}, 2026.

\bibitem[Rahimi and Recht(2007)]{rahimi2008rff}
Ali Rahimi and Benjamin Recht.
\newblock Random features for large-scale kernel machines.
\newblock In \emph{Advances in Neural Information Processing Systems}, 2007.

\bibitem[Rasmussen and Williams(2006)]{rasmussen2006gp}
Carl~Edward Rasmussen and Christopher K.~I. Williams.
\newblock \emph{Gaussian Processes for Machine Learning}.
\newblock MIT Press, 2006.

\bibitem[Rubanova et~al.(2019)Rubanova, Chen, and Duvenaud]{rubanova2019latentode}
Yulia Rubanova, Ricky T.~Q. Chen, and David Duvenaud.
\newblock Latent {ODE}s for irregularly-sampled time series.
\newblock In \emph{Advances in Neural Information Processing Systems}, 2019.

\bibitem[S{\"a}rkk{\"a}(2013)]{sarkka2013bfs}
Simo S{\"a}rkk{\"a}.
\newblock \emph{Bayesian Filtering and Smoothing}.
\newblock Cambridge University Press, 2013.

\bibitem[S{\"a}rkk{\"a} and Solin(2019)]{sarkka2019sde}
Simo S{\"a}rkk{\"a} and Arno Solin.
\newblock \emph{Applied Stochastic Differential Equations}.
\newblock Cambridge University Press, 2019.

\bibitem[Singh et~al.(2026)Singh, Lavda, Mercatali, and Kalousis]{singh2026variational}
Gurjeet~Sangra Singh, Frantzeska Lavda, Giangiacomo Mercatali, and Alexandros Kalousis.
\newblock Variational grey-box dynamics matching.
\newblock In \emph{Proceedings of The 29th International Conference on Artificial Intelligence and Statistics}, 2026.

\bibitem[Solin and S{\"a}rkk{\"a}(2020)]{solin2020hilbert}
Arno Solin and Simo S{\"a}rkk{\"a}.
\newblock Hilbert space methods for reduced-rank {G}aussian process regression.
\newblock \emph{Statistics and Computing}, 2020.

\bibitem[Takeishi and Kalousis(2023)]{greybox}
Naoya Takeishi and Alexandros Kalousis.
\newblock Deep grey-box modeling with adaptive data-driven models toward trustworthy estimation of theory-driven models.
\newblock In \emph{Proceedings of The 26th International Conference on Artificial Intelligence and Statistics}, 2023.

\bibitem[Teschl(2012)]{teschl2012ordinary}
Gerald Teschl.
\newblock \emph{Ordinary differential equations and dynamical systems}.
\newblock American Mathematical Soc., 2012.

\bibitem[Tronarp et~al.(2019)Tronarp, Kersting, S{\"a}rkk{\"a}, and Hennig]{tronarp2019filtering}
Filip Tronarp, Hans Kersting, Simo S{\"a}rkk{\"a}, and Philipp Hennig.
\newblock Probabilistic solutions to ordinary differential equations as nonlinear bayesian filtering: a new perspective.
\newblock \emph{Statistics and Computing}, 2019.

\bibitem[Wang et~al.(2014)Wang, Cao, Ramsay, Burger, Laporte, and Rockstroh]{wang2014mixedode}
Liangliang Wang, Jiguo Cao, James~O Ramsay, DM~Burger, CJL Laporte, and J{\"u}rgen~K Rockstroh.
\newblock Estimating mixed-effects differential equation models.
\newblock \emph{Statistics and Computing}, 2014.

\bibitem[Wenk et~al.(2020)Wenk, Abbati, Osborne, Sch{\"o}lkopf, Krause, and Bauer]{wenk2020odin}
Philippe Wenk, Gabriele Abbati, Michael~A Osborne, Bernhard Sch{\"o}lkopf, Andreas Krause, and Stefan Bauer.
\newblock Odin: Ode-informed regression for parameter and state inference in time-continuous dynamical systems.
\newblock In \emph{Proceedings of the AAAI Conference on Artificial Intelligence}, 2020.

\bibitem[Yang et~al.(2021)Yang, Wong, and Kou]{yang2021magi}
Shihao Yang, Samuel W.~K. Wong, and S.~C. Kou.
\newblock Inference of dynamic systems from noisy and sparse data via manifold-constrained gaussian processes.
\newblock \emph{Proceedings of the National Academy of Sciences}, 2021.

\bibitem[Yoon et~al.(2022)Yoon, Jeong, and Kim]{yoon2022doubly}
Jun~Ho Yoon, Daniel~P Jeong, and Seyoung Kim.
\newblock Doubly mixed-effects gaussian process regression.
\newblock In \emph{International Conference on Artificial Intelligence and Statistics}, 2022.

\bibitem[Zhang et~al.(2024)Zhang, Stringer, Brown, and Stafford]{zhang2024model}
Ziang Zhang, Alex Stringer, Patrick Brown, and Jamie Stafford.
\newblock Model-based smoothing with integrated wiener processes and overlapping splines.
\newblock \emph{Journal of Computational and Graphical Statistics}, 2024.

\bibitem[Zou et~al.(2024)Zou, Levine, Zaharieva, Johari, and Fox]{hybridnode}
Bob~Junyi Zou, Matthew~E Levine, Dessi~P. Zaharieva, Ramesh Johari, and Emily Fox.
\newblock Hybrid$^2$ neural {ODE} causal modeling and an application to glycemic response.
\newblock In \emph{Proceedings of the 41st International Conference on Machine Learning}, 2024.

\end{thebibliography}

\newpage 

\appendix

\part*{Appendix}
\addcontentsline{toc}{part}{Appendix}

\etocsetnexttocdepth{subsection}
\etocsettocstyle{\begingroup\small\parindent0pt}{\par\endgroup}
\localtableofcontents

\setcounter{figure}{0}
\setcounter{table}{0}
\setcounter{equation}{0} 
\renewcommand{\thefigure}{S\arabic{figure}}
\renewcommand{\thetable}{S\arabic{table}}
\renewcommand{\theequation}{S\arabic{equation}}
\renewcommand{\theHequation}{S\arabic{equation}}

\newpage
\clearpage

\section{Supplementary figures}
\label{app:supp-figures}

\begin{figure*}[h]
\centering
\resizebox{\textwidth}{!}{%
\begin{tikzpicture}[
  >=Latex,
  font=\small,
  node distance=10mm and 16mm,
  latent/.style={circle, draw, thick, minimum size=9mm, inner sep=0pt, align=center},
  obs/.style={latent, fill=gray!20},
  det/.style={rectangle, draw, rounded corners=2pt, thick, minimum height=8.5mm, inner sep=3pt, align=center, fill=blue!4},
  const/.style={rectangle, draw, dashed, rounded corners=2pt, thick, minimum height=8mm, inner sep=3pt, align=center},
  plate/.style={draw, rounded corners, thick, inner sep=5pt},
  note/.style={align=left, font=\footnotesize},
  lab/.style={font=\scriptsize}
]

\node[latent] (s0) {$\vs_{i,0}$};
\node[latent, right=18mm of s0] (s1) {$\vs_{i,1}$};
\node[draw=none, right=10mm of s1] (dotsL) {$\cdots$};
\node[latent, right=10mm of dotsL] (sk) {$\vs_{i,k}$};
\node[draw=none, right=10mm of sk] (dotsR) {$\cdots$};
\node[latent, right=10mm of dotsR] (sK) {$\vs_{i,K_i}$};

\node[const, above=9mm of s0] (sprior) {$\boldsymbol{\mu}_{s_0},\;\boldsymbol{\Sigma}_{s_0}$};
\draw[->, thick] (sprior) -- (s0);
\draw[->, thick] (s0) -- (s1);
\draw[->, thick] (s1) -- node[above, lab] {\qquad$\vA(h),\,\vQ(h)$} (dotsL);
\draw[->, thick] (dotsL) -- (sk);
\draw[->, thick] (sk) -- node[above, lab] {\qquad$\vA(h),\,\vQ(h)$} (dotsR);
\draw[->, thick] (dotsR) -- (sK);

\node[det, below=13mm of sk] (xk) {$\vx_{i,k}=\Hx\vs_{i,k}$};
\node[det, below=10mm of xk] (xdotk) {$\dot{\vx}_{i,k}=\Hxdot\vs_{i,k}$};
\draw[->, thick] (sk) -- (xk);
\draw[->, thick] (sk) -- (xdotk);

\node[obs, below=12mm of xdotk] (yk) {$\vy_{i,k}$};
\node[const, left=13mm of yk] (ysig) {$\boldsymbol{H},\,\boldsymbol{\Sigma}_{y}$};
\draw[->, thick] (xk) -- (yk);
\draw[->, thick] (ysig) -- (yk);

\node[note, below=3mm of yk] (obsnote) {include only when $\tau_{i,k}\in\mathcal{T}_i$};

\node[det, right=52mm of xk, minimum width=44mm] (fnode) {%
$\vf_i(\vx_{i,k})$\\[-0.5mm]
$\bigl[\Phi_0(\vx_{i,k})\vb_{0,d}+\Phi_r(\vx_{i,k})\vb_{i,d}\bigr]_{d=1}^D$
};
\draw[->, thick] (xk) -- (fnode);

\node[latent, above left=30mm and 8mm of fnode] (b0d) {$\vb_{0,d}$};
\node[const, above=10mm of b0d] (prior0) {$\mathbf{0},\,\boldsymbol{I}_{l_0}$};
\node[const, right=12mm of prior0] (kern0) {$k_0,\,\boldsymbol{Z}_0$};
\draw[->, thick] (prior0) -- (b0d);
\draw[->, thick] (b0d) -- (fnode);
\draw[->, thick] (kern0) -- (fnode);
\node[plate, fit=(b0d), label={[yshift=3mm]above:{shared outputs $d=1,\dots,D$}}] (plateb0) {};

\node[latent, above right=30mm and 8mm of fnode] (bid) {$\vb_{i,d}$};
\node[const, above=10mm of bid] (priori) {$\mathbf{0},\,\lambda_r^{-1}\boldsymbol{I}_{l_r}$};
\node[const, left=12mm of priori] (kernr) {$k_r,\,\boldsymbol{Z}_r$};
\draw[->, thick] (priori) -- (bid);
\draw[->, thick] (bid) -- (fnode);
\draw[->, thick] (kernr) -- (fnode);
\node[plate, fit=(bid), label={[yshift=3mm]above:{subject-local outputs $d=1,\dots,D$}}] (platebi) {};
\node[det, right=24mm of xdotk, minimum width=30mm] (rcol) {%
$\bm r_{i,k}^{(\mathrm{coloc})}$\\[-0.5mm]
$=\dot{\vx}_{i,k}-\vf_i(\vx_{i,k})$
};
\node[obs, right=18mm of rcol] (zcol) {$\mathbf{0}^{(\mathrm{coloc})}_{i,k}$};
\node[const, below=8mm of zcol] (rsig) {$\boldsymbol{\Sigma}_{\mathrm{coloc}}$};

\draw[->, thick] (xdotk) -- (rcol);
\draw[->, thick] (fnode) -- (rcol);
\draw[->, thick] (rcol) -- (zcol);
\draw[->, thick] (rsig) -- (zcol);

\node[plate,
      fit=(s0)(s1)(dotsL)(sk)(dotsR)(sK)(xk)(xdotk)(fnode)(rcol)(zcol),
      label={[yshift=-1mm]below:{collocation times $k=0,\dots,K_i$}}] (platek) {};

\node[plate,
      fit=(platek)(yk)(ysig)(obsnote)(kern0)(prior0)(kernr)(priori),
      label={[xshift=-2mm,yshift=-1mm]below right:{subjects $i=1,\dots,M$}}] (platei) {};

\node[note, below=11mm of platei.south west, anchor=west] (legend) {%
Grey nodes are observed; rounded rectangles are deterministic transformations.\\
$\mathbf{0}^{(\mathrm{coloc})}_{i,k}$ denotes a virtual collocation observation used only during training.\\
The subject-specific collocation grid satisfies $\mathcal{C}_i\supseteq\mathcal{T}_i$.};

\end{tikzpicture}%
}

\caption{\textbf{Full plate diagram of \method used during training.} The shared vector field is global across subjects, whereas each subject has its own local deviation field. Both act on the latent state extracted from the state-space trajectory prior, and the ODE is enforced through an auxiliary collocation observation.}
\label{fig:appendix-plate-gpode}
\end{figure*}

\begin{figure}[H]
    \centering
    \includegraphics[width=1\linewidth]{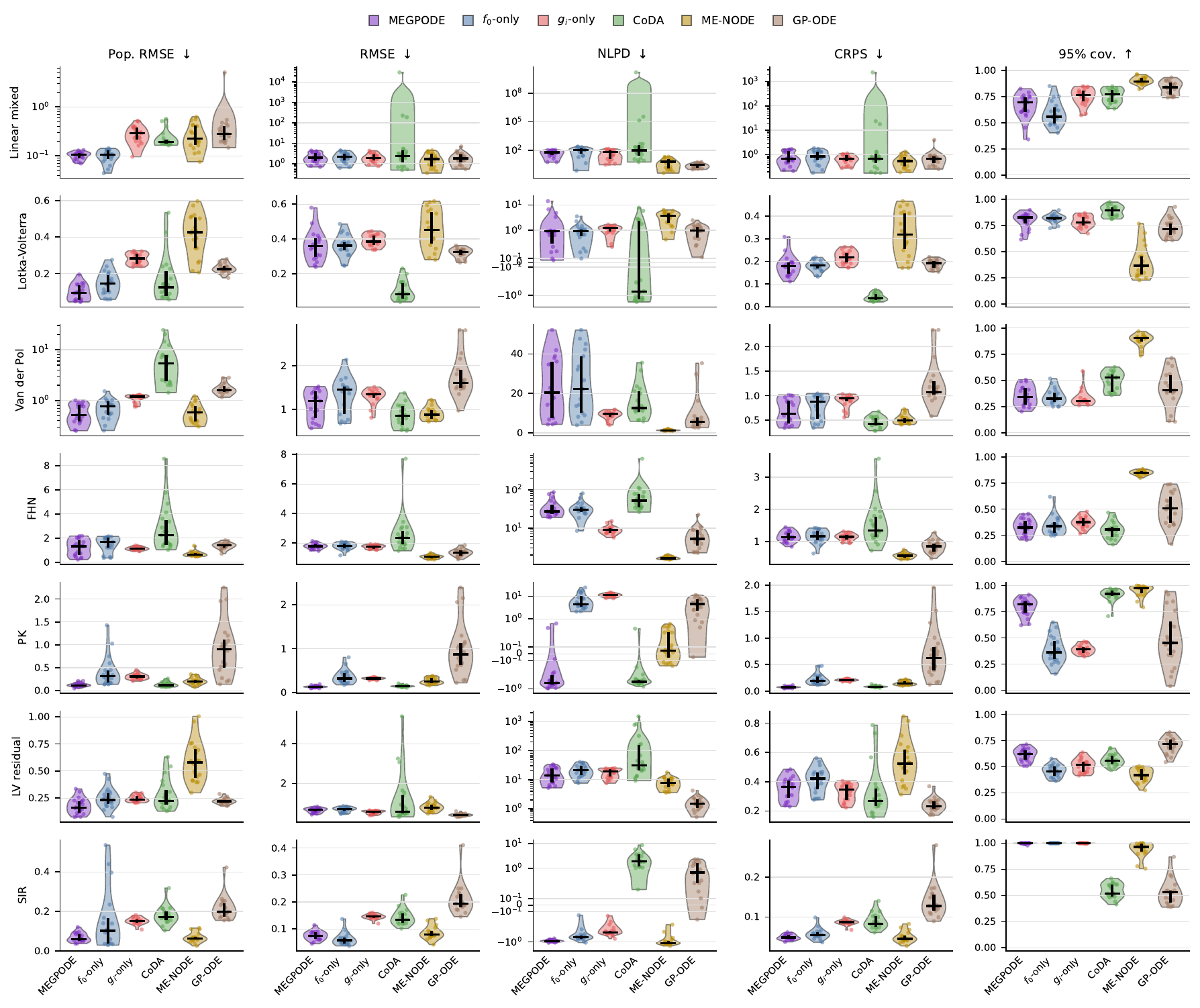}
\caption{\textbf{Forecast performance across synthetic benchmarks.}
Distribution of target-window forecast metrics over 20 random seeds for each benchmark system and method. Columns show population RMSE, subject-level RMSE, NLPD, CRPS, and empirical 95\% coverage; lower is better except for coverage. Each point is one seed, violins show the seed distribution, and black markers summarize central tendency.}
    \label{fig:fullforecast}
\end{figure}

\begin{figure}[H]
    \centering
    \includegraphics[width=1\linewidth]{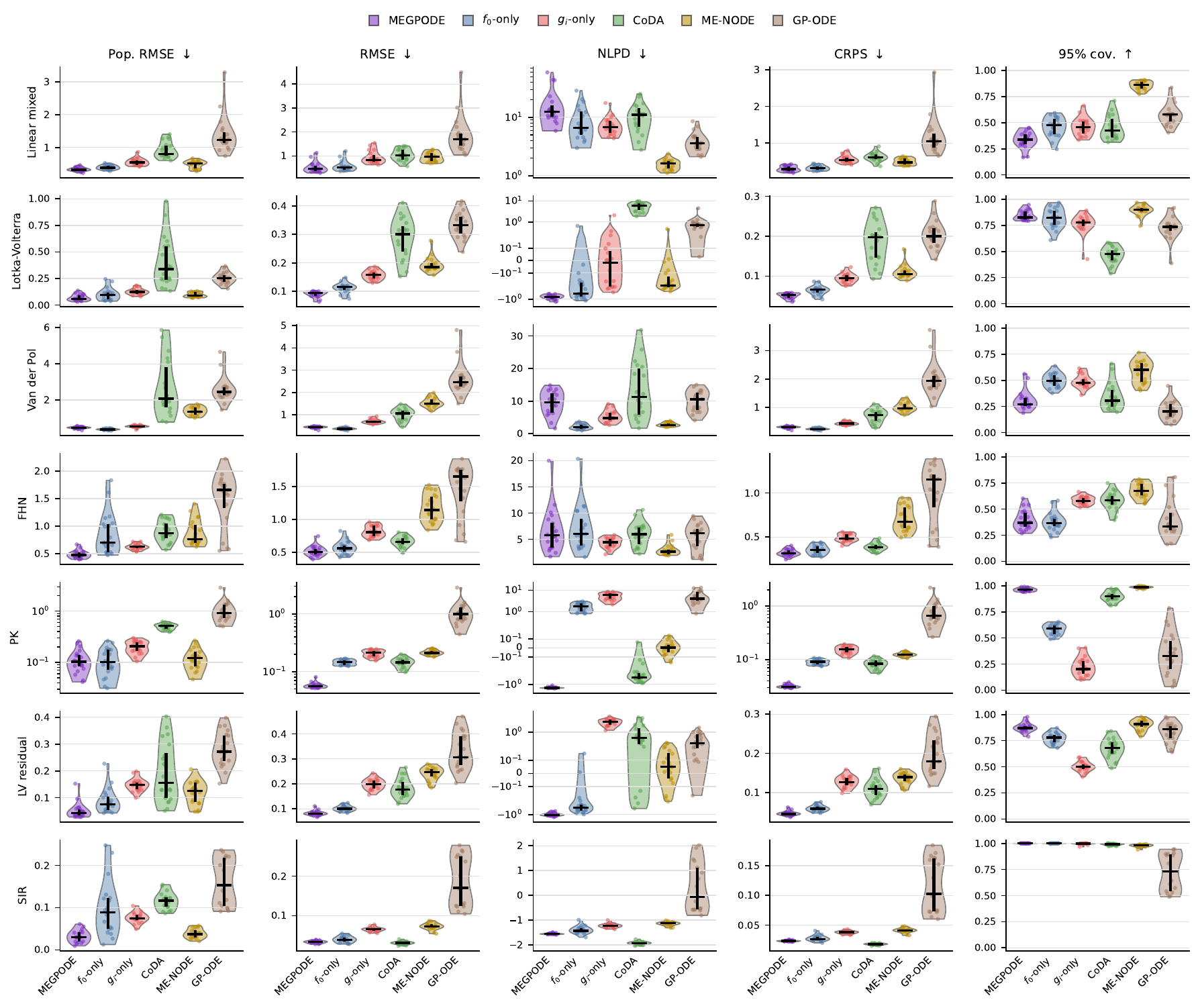}
\caption{\textbf{Interpolation performance across synthetic benchmarks.}
Target-window interpolation metrics over 20 random seeds for each benchmark system and method. Columns show population RMSE, subject-level RMSE, NLPD, CRPS, and empirical 95\% coverage; lower is better except for coverage. Points are seeds, violins show seed distributions, and black markers summarize central tendency.}
    \label{fig:fullinterpolation}
\end{figure}

\begin{figure}[H]
    \centering
    \includegraphics[width=1\linewidth]{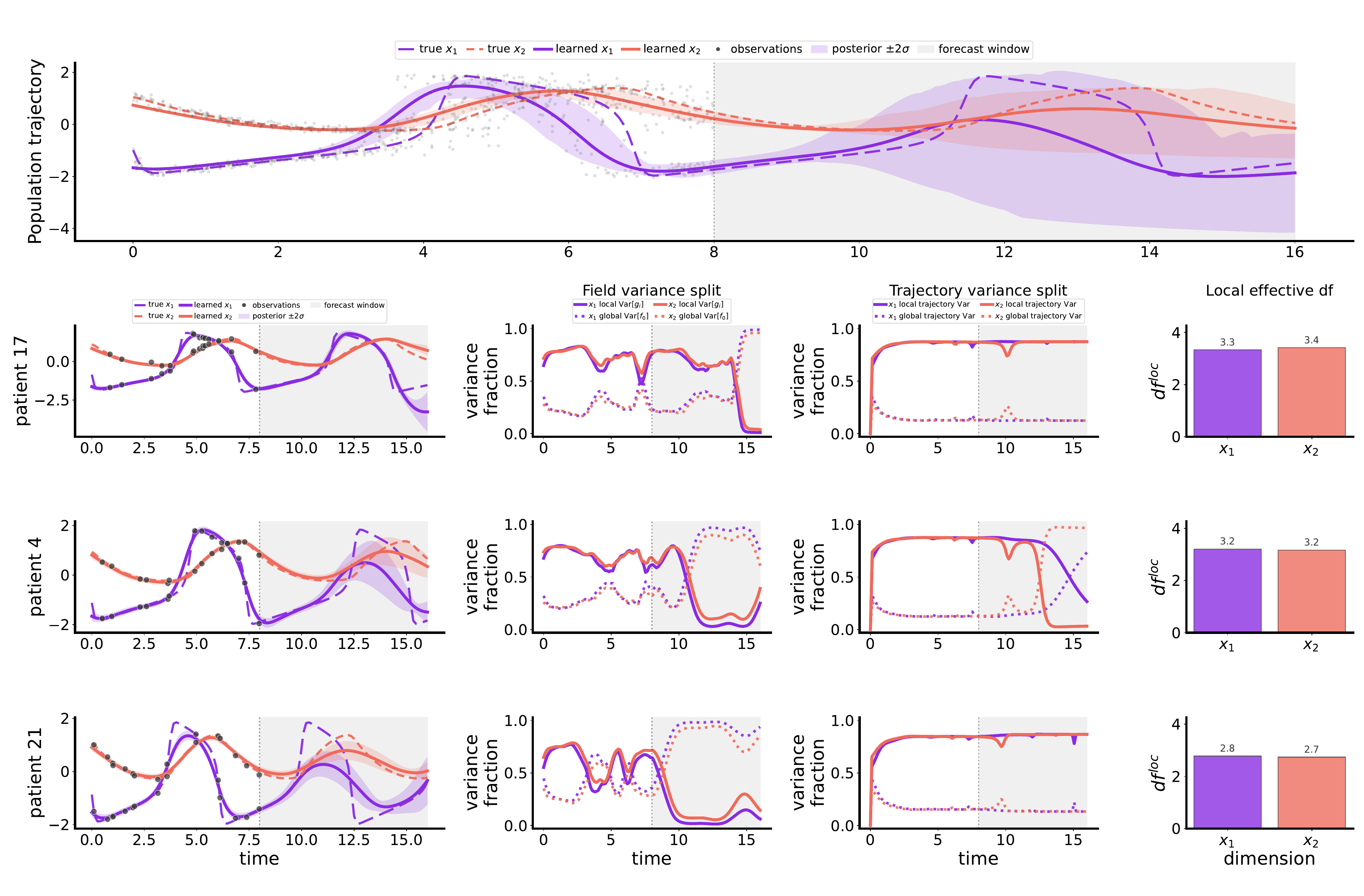}
    \caption{\textbf{variance decomposition diagnostics for \method\ on the FHN oscillator.} The top panel shows the learned population trajectory with posterior uncertainty and the forecast region. Each lower row shows a representative subject: posterior trajectory reconstruction, decomposition of vector-field uncertainty into population and subject-specific components, the corresponding trajectory-level variance decomposition, and the effective local degrees of freedom by state dimension.}
    \label{fig:fhndiag}
\end{figure}

\begin{figure}[H]
    \centering
    \includegraphics[width=1\linewidth]{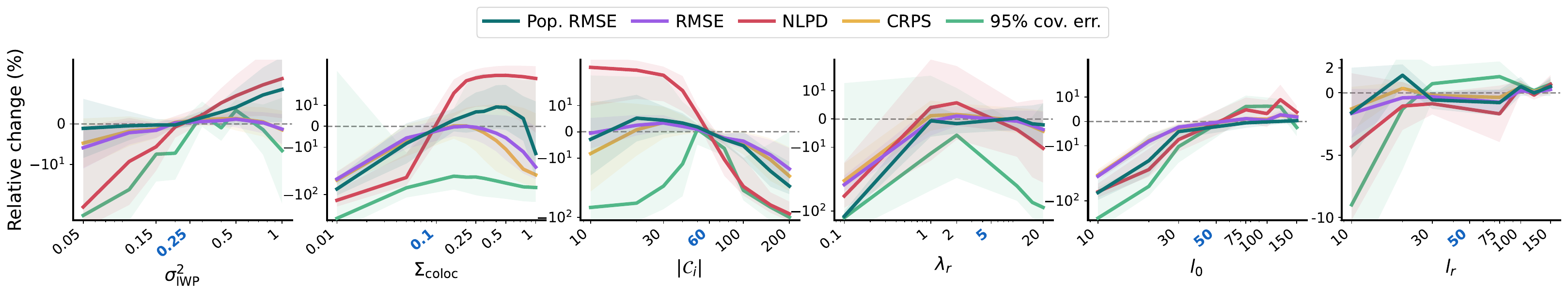}
    \caption{\textbf{Ablations on hyperparameters for \method\ (interpolation setting).}
Each panel varies a single hyperparameter around the default configuration, while keeping the rest of the training and evaluation pipeline fixed; the blue tick marks the default value. Mean $\pm 0.5$ std relative change with respect to the default setting shown, averaged over $10$ seeds and the benchmark systems used in the ablation study.}
\label{fig:abinterp}
\end{figure}

\section{Baselines and hyperparameters}
\label{sec:baselines}

\subsection{Concurrent baselines}\label{app:concurrent_baselines}

\paragraph{CoDA context-conditioned neural ODE.}

We include a context-conditioned neural ODE baseline inspired by CoDA~\citep{kirchmeyer2022generalizing}. In its original setting, different environments share a neural dynamical model but are allowed to induce environment-specific vector fields through learned context variables. We adapt this view to our population setting by treating each training subject as one environment. Each subject has a learned context code \(\vc_i\in\mathbb{R}^{d_c}\) and a learned initial condition \(\vx_{i,0}\), and evolves according to
\[
    \dot{\vx}_i(t)
    =
    f_{\Theta,\psi}(\vx_i(t);\vc_i),
    \qquad
    \vx_i(t_{i,0})=\vx_{i,0}.
\]
The vector field is represented by a shared base MLP whose weights are additively perturbed by a hypernetwork applied to the context code:
\[
    f_{\Theta,\psi}(\vx;\vc)
    =
    \mathrm{MLP}_{\Theta + \Delta_\psi(\vc)}(\vx),
    \qquad
    \Delta_\psi(\vc)=A_\psi \vc.
\]
Thus \(\Theta\) defines a population neural vector field, while \(\Delta_\psi(\vc_i)\) defines a subject-specific deformation of that field. Observations are modeled as
\[
    \vy_{i,n}
    =
    \vH_i \vx_i(t_{i,n})
    +
    \boldsymbol{\varepsilon}_{i,n},
    \qquad
    \boldsymbol{\varepsilon}_{i,n}\sim\Normal(0,\sigma_y^2\Id).
\]

Training minimizes the pooled reconstruction error over all subjects, jointly over the shared MLP parameters, the hypernetwork, the subject context codes, and the subject initial conditions:
\begin{align*}
\min_{\Theta,\psi\{\vc_i,\vx_{i,0}\}}
&\sum_{i=1}^M\sum_{n=1}^{N_i}
\left\|
\vy_{i,n}
-
\vH_i\vx_i(t_{i,n};\Theta,\psi,\vc_i,\vx_{i,0})
\right\|_2^2\\
&+
\lambda_c\sum_i\|\vc_i\|_2^2
+
\lambda_h\sum_i\|\Delta_\psi(\vc_i)\|_2^2
+
\lambda_x\sum_i\|\vx_{i,0}-\bar{\vx}_0\|_2^2 .
\end{align*}
Here \(\bar{\vx}_0\) denotes the empirical center of the learned subject initial conditions. In our implementation, trajectories are integrated with a fixed-step RK4 solver, and \(\sigma_y\) is set from the final training RMSE with a small floor.

For held-out-subject adaptation, the shared network and hypernetwork are frozen. Only a new context code and initial condition are optimized from the available context observations:
\[
(\hat{\vc}_\star,\hat{\vx}_{\star,0})
=
\arg\min_{\vc,\vx_0}
\sum_{n\in\mathcal{C}_\star}
\left\|
\vy_{\star,n}
-
\vH_\star \vx_\star(t_{\star,n};\Theta,\psi,\vc,\vx_0)
\right\|_2^2
+
\lambda_c\|\vc\|_2^2
+
\lambda_x\|\vx_0-\bar{\vx}_0\|_2^2 .
\]
Hence CoDA can personalize dynamics through a low-dimensional context variable, but it does not explicitly decompose the field into a shared GP component plus a subject-specific random-effect field. For population-level summaries, we therefore define the CoDA population field as the empirical average of the learned subject-specific context fields,
\[
    f^{\mathrm{pop}}_{\mathrm{CoDA}}(\vx)
    =
    \frac{1}{M}\sum_{i=1}^M
    f_{\Theta,\psi}(\vx;\vc_i).
\]
Since the mapping from context code to vector field acts through weight perturbations, averaging the resulting vector fields is more faithful than evaluating the model at the empirical mean context code.

The population trajectory is obtained by integrating this averaged field from the empirical mean initial condition,
\[
    \dot{\vx}^{\mathrm{pop}}(t)
    =
    f^{\mathrm{pop}}_{\mathrm{CoDA}}(\vx^{\mathrm{pop}}(t)),
    \qquad
    \vx^{\mathrm{pop}}(0)
    =
    \bar{\vx}_0
    =
    \frac{1}{M}\sum_{i=1}^M \vx_{i,0}.
\]
The baseline is therefore deterministic apart from the scalar Gaussian observation noise used for predictive scores; it does not maintain a posterior over vector fields.

\paragraph{Mixed-effect neural ODE.}

We further include a mixed-effect neural ODE baseline based on~\citep{nazarovs2022menode}. This model represents each subject through a latent trajectory \(\vz_i(t)\in\mathbb{R}^{d_z}\), decoded into observation space by a neural decoder \(D_\omega\):
\[
    \vy_{i,n}
    =
    D_\omega(\vz_i(t_{i,n}))
    +
    \boldsymbol{\varepsilon}_{i,n},
    \qquad
    \boldsymbol{\varepsilon}_{i,n}\sim\Normal(0,\sigma_y^2\Id).
\]
The initial latent state is inferred amortizedly from the subject's observed time series using an encoder,
\[
    q_\phi(\vz_{i,0}\mid \{(t_{i,n},\vy_{i,n})\}_{n\in\mathcal{C}_i}),
\]
implemented either as an ODE-RNN or an RNN encoder. Subject heterogeneity enters the latent vector field through mixed-effect weights. Concretely, the latent drift is written as
\[
    \dot{\vz}_i(t)
    =
    G_\theta(\vz_i(t))\,\vw_i,
\]
where \(G_\theta\) is a shared neural network and the subject-specific mixed-effect weights are sampled from a diagonal Gaussian distribution
\[
    \vw_i
    =
    \boldsymbol{\mu}_w
    +
    \boldsymbol{\sigma}_w\odot\boldsymbol{\epsilon}_i,
    \qquad
    \boldsymbol{\epsilon}_i\sim\Normal(\mathbf{0},\Id).
\]
Thus the model has a neural population component through \(G_\theta\) and \(\boldsymbol{\mu}_w\), together with random effects through \(\boldsymbol{\sigma}_w\).

The training objective is a variational reconstruction objective with KL terms for both the inferred initial state and the mixed-effect weights. In simplified form,
\begin{align*}
\mathcal{L}_{\mathrm{ME\text{-}NODE}}
=
&\sum_{i=1}^M
\E_{q_\phi(\vz_{i,0}\mid \vy_i)\,q(\vw_i)}
\left[
\log p(\vy_i\mid \vz_{i,0},\vw_i,\theta,\omega)
\right]\\
&-
\beta_z
\KL\!\left(q_\phi(\vz_{i,0}\mid \vy_i)\,\|\,p(\vz_0)\right)
-
\beta_w
\KL\!\left(q(\vw_i)\,\|\,p(\vw)\right),
\end{align*}
with KL annealing during training. In the implementation used here, several samples of \(\vz_{i,0}\) and \(\vw_i\) are drawn, and the best reconstruction among these samples is used for the subject loss. Training proceeds one subject at a time so that irregular subject-specific observation grids are preserved.

For held-out subjects, there is no separate optimization of new parameters. Instead, the learned encoder and mixed-effect distribution are reused: the context observations of the new subject are passed through the encoder, candidate mixed-effect samples are drawn, and the context-compatible samples determine the predictive distribution. Thus adaptation is amortized rather than obtained by a subject-specific optimization loop.

At evaluation time, for each subject \(i\), we draw candidate samples
\[
    \vz_{i,0}^{(s)} \sim q_\phi(\vz_{i,0}\mid \vy_i),
    \qquad
    \vw_i^{(s)} \sim q(\vw_i),
    \qquad s=1,\dots,S_{\mathrm{cand}},
\]
rank them by reconstruction error on the available context observations, and retain \(S\) context-compatible samples. Each retained sample is propagated through the latent ODE,
\[
    \dot{\vz}_i^{(s)}(t)
    =
    G_\theta(\vz_i^{(s)}(t))\vw_i^{(s)},
    \qquad
    \hat{\vx}_i^{(s)}(t)
    =
    D_\omega(\vz_i^{(s)}(t)).
\]
The reported posterior predictive mean and epistemic standard deviation are then
\[
    \hat{\boldsymbol{\mu}}_i(t)
    =
    \frac{1}{S}\sum_{s=1}^S \hat{\vx}_i^{(s)}(t),
    \qquad
    \hat{\boldsymbol{\sigma}}_{i,\mathrm{post}}^2(t)
    =
    \frac{1}{S}\sum_{s=1}^S
    \left(\hat{\vx}_i^{(s)}(t)-\hat{\boldsymbol{\mu}}_i(t)\right)^2.
\]
For probabilistic trajectory scores and predictive bands, we add the learned scalar observation noise,
\[
    \hat{\boldsymbol{\sigma}}_{i,\mathrm{pred}}^2(t)
    =
    \hat{\boldsymbol{\sigma}}_{i,\mathrm{post}}^2(t)
    +
    \hat{\sigma}_y^2.
\]

For population-level summaries, this baseline does not expose an explicit vector field in the original observation space. We therefore define its population trajectory operationally by averaging sampled subject trajectories over a fixed subset \(\mathcal{I}\) of training subjects:
\[
    \hat{\vx}^{\mathrm{pop},(s)}_{\mathrm{ME\text{-}NODE}}(t)
    =
    \frac{1}{|\mathcal{I}|}
    \sum_{i\in\mathcal{I}}
    D_\omega(\vz_i^{(s)}(t)).
\]
The reported population mean and posterior uncertainty are computed across these Monte Carlo population trajectories. This gives ME-NODE subject-level and population-level uncertainty over latent initial states and mixed-effect weights, but not a posterior over the shared neural vector field parameters \(\theta\), which remain point estimates.

\paragraph{GPODE variational multiple shooting with shared field and subject-specific initial conditions.}
As a non-mixed-effect GP-ODE comparator, we adapt the variational multiple-shooting model of Hegde et al.~\citep{hegde2022vmsgp} to the population setting by forcing all subjects to share a single latent vector field, while allowing each subject to have its own initial condition and its own auxiliary shooting states. Concretely, we fit
\[
\dot{\vx}_i(t)=\vf(\vx_i(t)),
\qquad
\vx_i(t_{i,0})=\vx_{i,0},
\qquad i=1,\dots,M,
\]
under one common GP prior and one shared variational posterior for \(\vf\), with subject-specific variational factors for \(\vx_{i,0}\) and for the shooting states. Hence this baseline can exploit multiple initial conditions and long trajectories, but it has no explicit mixed-effect decomposition: all inter-subject variation must be absorbed through the subject-specific initial conditions, the subject-specific shooting states, and posterior uncertainty in the single shared field.

In addition to the subject-specific initial conditions \(\vx_{i,0}\), auxiliary shooting states are introduced
\[
\vS_i := \bigl(\vs_{i,1},\dots,\vs_{i,N_i-1}\bigr),
\qquad
\vs_{i,0} := \vx_{i,0},
\qquad i=1,\dots,M.
\]
All subjects share the same latent vector field \(\vf\) and the same inducing variables \(\boldsymbol U\).
The shooting transition is
\[
p\!\left(\vs_{i,n}\mid \vs_{i,n-1},\vf\right)
=
\Normal\!\left(
\vs_{i,n}\;\middle|\;
\vx\!\left(t_{i,n};\,\vs_{i,n-1},\vf\right),
\;\sigma_\xi^2 \Id_D
\right),
\qquad n=1,\dots,N_i-1,
\]
where \(\vx(t_{i,n};\,\vs_{i,n-1},\vf)\) denotes the ODE solution at time \(t_{i,n}\)
obtained by integrating \(\dot{\vx}(t)=\vf(\vx(t))\) from initial state \(\vs_{i,n-1}\) over
\([t_{i,n-1},t_{i,n}]\). The observation likelihood is written similarly as
\[
p\!\left(\vy_{i,n}\mid \vs_{i,n-1},\vf\right)
=
p\!\left(\vy_{i,n}\mid \vx\!\left(t_{i,n};\,\vs_{i,n-1},\vf\right)\right).
\]

A natural variational family is
\[
q\!\left(\{\vS_i\}_{i=1}^M,\{\vx_{i,0}\}_{i=1}^M,\vf,\boldsymbol U\right)
=
p(\vf\mid \boldsymbol U)\,q(\boldsymbol U)\,
\prod_{i=1}^M
\left[
q(\vx_{i,0})
\prod_{n=1}^{N_i-1} q(\vs_{i,n})
\right].
\]
The resulting population ELBO is
\begin{align*}
\mathcal{L}_{\mathrm{GPODE\text{-}pop}}
&=
\sum_{i=1}^M
\Bigg[
\sum_{n=1}^{N_i}
\mathbb{E}_{q(\vs_{i,n-1})\,q(\vf)}
\Bigl[
\log p\!\left(\vy_{i,n}\mid \vs_{i,n-1},\vf\right)
\Bigr]
\notag\\
&\qquad\quad
+
\sum_{n=1}^{N_i-1}
\mathbb{E}_{q(\vs_{i,n-1})\,q(\vs_{i,n})\,q(\vf)}
\Bigl[
\log p\!\left(\vs_{i,n}\mid \vs_{i,n-1},\vf\right)
\Bigr]
\notag\\
&\qquad\quad
-
\sum_{n=1}^{N_i-1}
\mathbb{E}_{q(\vs_{i,n})}
\bigl[\log q(\vs_{i,n})\bigr]
-
\mathrm{KL}\!\left(q(\vx_{i,0})\,\|\,p(\vx_{i,0})\right)
\Bigg]
-
\mathrm{KL}\!\left(q(\boldsymbol U)\,\|\,p(\boldsymbol U)\right).
\label{eq:gpode-pop-elbo}
\end{align*}

The training objective sums the subject-wise likelihood and shooting terms under this common field, together with a single shared inducing-variable KL term.

For comparability with our population-level summaries, we define the corresponding population trajectory by integrating the posterior-mean shared field
\[
\bar{\vf}(\vx):=\E_q[\vf(\vx)]
\]
from a reference initial condition $\vx^{\mathrm{pop}}_0$, for example the empirical mean of the subject-specific initial conditions:
\[
\dot{\vx}^{\mathrm{pop}}(t)=\bar{\vf}\bigl(\vx^{\mathrm{pop}}(t)\bigr),
\qquad
\vx^{\mathrm{pop}}(0)=\vx^{\mathrm{pop}}_0.
\]
Uncertainty for this population trajectory can be approximated by Monte Carlo pushforward of samples from the learned field posterior, all propagated from the same reference initial condition.

\paragraph{Mechanistic NLME-ODE.}

For benchmarks whose mechanistic family is known, we include a classical nonlinear mixed-effects ODE baseline~\citep{lavielle2014mixedeffects, comets2017parameter}. We assume that each subject follows a parametric ODE
\[
\dot{\vx}_i(t)=f\!\left(\vx_i(t);\boldsymbol{\theta}_i\right),
\qquad
\boldsymbol{\theta}_i=\boldsymbol{\theta}+\boldsymbol{\eta}_i,
\qquad
\boldsymbol{\eta}_i\sim\mathcal{N}(\boldsymbol{0},\boldsymbol{\Omega}),
\]
with subject-specific initial condition
\[
\vx_i(t_{i,0})=\vx_{i,0},
\qquad
\vx_{i,0}=\vx_0+\boldsymbol{\xi}_i,
\qquad
\boldsymbol{\xi}_i\sim\mathcal{N}(\boldsymbol{0},\boldsymbol{\Omega}_{x_0}),
\qquad i=1,\dots,M,
\]
where \(\boldsymbol{\theta}\) collects shared population parameters, \(\boldsymbol{\eta}_i\) denotes subject-specific random effects on the mechanistic parameters, and \(\vx_0\) is the population initial condition. In practice, random effects act only on a subset of mechanistic parameters, with the remaining components fixed across subjects. Observations are modeled as
\[
\vy_{i,n} = \vH_i \vx_i(t_{i,n}) + \boldsymbol{\varepsilon}_{i,n},
\qquad
\boldsymbol{\varepsilon}_{i,n}\sim\mathcal{N}(0,\sigma_y^2 \Id).
\]

On correctly specified mechanistic systems, the random effects are placed on the same parameters used to generate heterogeneity in the benchmark.

We fit this model with a SAEM-style empirical-Bayes procedure. At iteration \(m\), we first compute, for each subject \(i\), a conditional mode
\begin{align*}
(\hat{\boldsymbol{\eta}}_i^{(m)}, \hat{\boldsymbol{\xi}}_i^{(m)})
=
\arg\min_{\boldsymbol{\eta}_i,\boldsymbol{\xi}_i}
&\Bigl[
\frac{1}{2\left(\sigma_y^{(m-1)}\right)^{2}} \sum_n
\left\|
\vy_{i,n}-\vH_i \vx_i(t_{i,n};\boldsymbol{\theta}^{(m-1)},\boldsymbol{\eta}_i,\vx_0^{(m-1)}+\boldsymbol{\xi}_i)
\right\|_2^2\\
&+
\frac{1}{2}
\boldsymbol{\eta}_i^\top
\boldsymbol{\Omega}^{(m-1)-1}
\boldsymbol{\eta}_i
+
\frac{1}{2}
\boldsymbol{\xi}_i^\top
\boldsymbol{\Omega}_{x_0}^{(m-1)-1}
\boldsymbol{\xi}_i
\Bigr],
\end{align*}
where \(\vx_i(t;\boldsymbol{\theta},\boldsymbol{\eta}_i,\vx_{i,0})\) is obtained by numerical integration of the mechanistic ODE. Given the current subject modes, we then perform a single population M-step:
\begin{align*}
\boldsymbol{\theta}^{(m)}
&=
\arg\min_{\boldsymbol{\theta}}
\sum_{i,n}
\left\|
\vy_{i,n}
-
\vH_i \vx_i(t_{i,n};\boldsymbol{\theta},\hat{\boldsymbol{\eta}}_i^{(m)},\vx_0^{(m-1)}+\hat{\boldsymbol{\xi}}_i^{(m)})
\right\|_2^2, \\
\vx_0^{(m)}
&=
\frac{1}{M}\sum_{i=1}^M \hat{\vx}_{i,0}^{(m)}
=
\vx_0^{(m-1)}+\frac{1}{M}\sum_{i=1}^M \hat{\boldsymbol{\xi}}_i^{(m)}, \\
\boldsymbol{\Omega}^{(m)}
&=
\operatorname{Diag}\!\left(
\frac{1}{M}\sum_{i=1}^M
\left(\hat{\boldsymbol{\eta}}_i^{(m)}\right)^{2}
\right), \\
\boldsymbol{\Omega}_{x_0}^{(m)}
&=
\operatorname{Diag}\!\left(
\frac{1}{M}\sum_{i=1}^M
\left(\hat{\boldsymbol{\xi}}_i^{(m)}\right)^{2}
\right), \\
\sigma_y^{2\,(m)}
&=
\frac{1}{N_{\mathrm{obs}}}
\sum_{i,n}
\left\|
\vy_{i,n}
-
\vH_i \vx_i(t_{i,n};\boldsymbol{\theta}^{(m)},\hat{\boldsymbol{\eta}}_i^{(m)},\vx_0^{(m)}+\hat{\boldsymbol{\xi}}_i^{(m)})
\right\|_2^2.
\end{align*}

These two steps are repeated for a fixed number of iterations. For held-out-subject adaptation, the learned population parameters are kept fixed and only the new subject's random effects and initial condition are re-estimated from the available context observations.

This baseline provides a standard parametric NLME comparator when the mechanistic family is known, but it is structurally misspecified on benchmarks where heterogeneity is injected through residual subject-specific vector fields rather than through parameter random effects.

For comparability with our population-level summaries, we define the corresponding population trajectory by integrating the learned mechanistic population dynamics from the learned population initial condition:
\[
\dot{\vx}^{\mathrm{pop}}(t)
=
f\!\left(\vx^{\mathrm{pop}}(t);\hat{\boldsymbol{\theta}}\right),
\qquad
\vx^{\mathrm{pop}}(0)=\hat{\vx}_0.
\]

\subsection{Baselines derived from \method}
\label{app:baselines}

We compare the full mixed-effect model against two controlled ablations that keep the same trajectory prior, the same collocation construction in Eqs.~\eqref{eq:site}--\eqref{eq:siteterms}, and the same pseudo-derivative extraction in Eq.~\eqref{eq:pseudodata}. The only difference is which components of the subject-specific field are allowed to explain the dynamics.

\paragraph{Shared field only (\texttt{f0\_only}).}
This ablation removes the subject-specific deviation altogether and forces all subjects to share a single population field:
\begin{equation*}
    f^{\texttt{f0\_only}}_{i,d}(\vx)=f_{0,d}(\vx)\approx \Phi_0(\vx)\vb_{0,d},
    \qquad
    g_{i,d}(\vx)\equiv 0.
\end{equation*}
The pseudo-regression of Eq.~\eqref{eq:regression} therefore reduces to
\begin{equation*}
    \vu_{i,d}
    =
    \boldsymbol{P}_{0,i,d}\vb_{0,d}
    +
    \boldsymbol{\varepsilon}^{\mathrm{reg}}_{i,d}
    \qquad
    \boldsymbol{\varepsilon}^{\mathrm{reg}}_{i,d}\sim \Normal(\mathbf{0},\vR_{i,d}),
\end{equation*}
so only the shared posterior is updated:
\begin{align*}
    \vLambda_{0,d}^{\mathrm{new}}
    &=
    \Id_{l_0}
    +
    \sum_{i=1}^M
    \boldsymbol{P}_{0,i,d}^\top \vR_{i,d}^{-1}\boldsymbol{P}_{0,i,d},\\
    \veta_{0,d}^{\mathrm{new}}
    &=
    \sum_{i=1}^M
    \boldsymbol{P}_{0,i,d}^\top \vR_{i,d}^{-1}\vu_{i,d}.
\end{align*}
There is no local-field update. In trajectory space, the collocation moments
$(\mu^f_{i,k},\boldsymbol{\Sigma}^f_{i,k},\vJ_{i,k})$ are therefore computed from the shared field alone. This baseline isolates how much of the data can be explained by a single population ODE with no subject-specific adaptation.

\paragraph{One field per subject, no shared component (\texttt{gi\_only}).}
This ablation removes the common field and learns one independent field for each subject:
\begin{equation*}
    f^{\texttt{gi\_only}}_{i,d}(\vx)=g_{i,d}(\vx)\approx \Phi_r(\vx)\vb_{i,d},
    \qquad
    f_{0,d}(\vx)\equiv 0.
\end{equation*}
The priors are
\begin{equation*}
    \vb_{i,d}\sim \Normal(\mathbf{0},\lambda_r^{-1}\Id_{l_r}),
    \qquad i=1,\dots,M.
\end{equation*}
The pseudo-regression now factorizes over subjects:
\begin{equation*}
    \vu_{i,d}
    =
    \boldsymbol{P}_{r,i,d}\vb_{i,d}
    +
    \boldsymbol{\varepsilon}^{\mathrm{reg}}_{i,d}
    \qquad
    \boldsymbol{\varepsilon}^{\mathrm{reg}}_{i,d}\sim \Normal(\mathbf{0},\vR_{i,d}),
\end{equation*}
and each local posterior is updated independently:
\begin{align*}
    \vLambda_{i,d}^{\mathrm{new}}
    &=
    \lambda_r \Id_{l_r}
    +
    \boldsymbol{P}_{r,i,d}^\top \vR_{i,d}^{-1}\boldsymbol{P}_{r,i,d},\\
    \veta_{i,d}^{\mathrm{new}}
    &=
    \boldsymbol{P}_{r,i,d}^\top \vR_{i,d}^{-1}\vu_{i,d}.
\end{align*}
Hence there is no cross-subject borrowing through a shared latent field. This baseline isolates the value of explicitly modeling a population-level dynamical component.

\paragraph{Relation to the full model.}
The full method combines both ingredients,
\begin{equation*}
    f_{i,d}(\vx)
    =
    \Phi_0(\vx)\vb_{0,d}
    +
    \Phi_r(\vx)\vb_{i,d},
\end{equation*}
and updates the shared and local posteriors by alternating the regressions in
Eqs.~\eqref{eq:sharedprec}--\eqref{eq:localeta}. Thus \texttt{f0\_only} tests whether a single shared ODE is sufficient, while \texttt{indep\_fields} tests whether sharing is necessary at all. At deployment time this distinction is also important: \texttt{f0\_only} yields a population forecast but no personalization, whereas \texttt{indep\_fields} can only personalize from the new subject's own data.

\subsection{Hyperparameters}\label{sec:hyperparameter-details}

\begin{longtable}{@{}p{0.34\linewidth}p{0.60\linewidth}@{}}
\caption{Hyperparameter settings for CODA, GP-ODE, and ME-NODE}
\label{tab:baseline-hyperparameters}\\
\toprule
\textbf{Hyperparameter} & \textbf{Value} \\
\midrule
\endfirsthead
\toprule
\textbf{Hyperparameter} & \textbf{Value} \\
\midrule
\endhead
\bottomrule
\endfoot
\multicolumn{2}{@{}l}{\textbf{CODA}} \\ \midrule
State dimension & benchmark latent dimension $D$ \\
Base MLP widths & $D \rightarrow 64 \rightarrow 64 \rightarrow 64 \rightarrow D$ \\
Hidden dimension & $64$ \\
Hidden activation & $\operatorname{SiLU}(x)/1.1$ \\
Output scale factor & $1.0$ \\
Context code dimension & $2$ \\
Context codes & one learned code per training patient \\
Hypernetwork & linear map from context code to additive offsets for all base MLP weights and biases \\
Base weight initialization & Xavier uniform \\
Base bias initialization & zeros \\
Final layer initialization & final weight matrix multiplied by $0.1$ \\
Optimizer & Adam \\
Learning rate & $10^{-3}$ \\
Max training epochs & $10000$ \\
Initial-condition initialization & first observed value per subject and dimension\\
Context-code regularization & $10^{-4}$ \\
Hypernetwork-offset regularization & $10^{-6}$ \\
Initial-condition regularization & $10^{-3}$ \\
ODE solver & fixed-step RK4\\
RK4 substeps per interval & $1$ \\
Held-out adaptation variables & new context code and new initial condition \\
Held-out adaptation epochs & $2000$ \\
Held-out adaptation learning rate & same as training learning rate, $10^{-3}$ \\
Population field & average of context-conditioned fields over training context codes \\

\midrule
\multicolumn{2}{@{}l}{\textbf{GP-ODE}} \\ \midrule\\
Kernel & RBF \\
Kernel parameterization & output-dimension-wise lengthscales and variances \\
Initial RBF lengthscale & $1.3$ \\
Initial RBF variance & $0.5$ \\
Number of inducing points & $16$ \\
Inducing initialization & $k$-means centers on training states \\
Inducing value initialization & empirical finite-difference gradients \\
Random Fourier features & $256$ \\
Inducing posterior covariance & full covariance\\
Likelihood & Gaussian \\
Likelihood variance & learned by default \\
State posterior & factorized Gaussian shooting-state variational posterior \\
State-continuity constraint & Gaussian \\
Constraint initial scale & $10^{-3}$ \\
Constraint trainable & false \\
Optimizer & Adam \\
Learning rate & $5\times 10^{-3}$ \\
Training iterations & $10000$ \\
Monte Carlo samples in objective & $5$ \\
ODE solver & \texttt{dopri5} via \texttt{torchdiffeq} \\
Dense-time scale & $4$ \\
Evaluation samples & $16$ \\
Held-out adaptation variables & initial condition only \\
Held-out adaptation epochs & $300$ \\
Held-out adaptation learning rate & same as training learning rate, $5\times 10^{-3}$ \\
Held-out initial-condition penalty & $10^{-3}$ toward population initial condition \\
Prediction std. & posterior sample std. plus likelihood noise \\

\midrule
\multicolumn{2}{@{}l}{\textbf{ME-NODE}} \\ \midrule
Latent dimension & $13$ \\
Recognition dimension & $20$ \\
Recognition layers & $1$ \\
GRU units & $100$ \\
Initial-state encoder & ODE-RNN \\
Encoder input & observations concatenated with observation masks \\
Encoder ODE solver & Euler \\
Encoder ODE tolerances & rtol $10^{-3}$, atol $10^{-4}$ \\
Generative ODE network layers & $1$ \\
Generative ODE hidden units & $100$ \\
Generative ODE activation & Tanh \\
Generative ODE widths & $13 \rightarrow 100 \rightarrow 100 \rightarrow 13$ \\
Mixed-effect dimension & $1$ \\
Mixed-effect layer & learned fixed effect plus diagonal random-effect std. \\
Decoder & linear map $13 \rightarrow D$ \\
Initial-state prior & standard normal \\
Observation std. in training likelihood & $0.1$ \\
Optimizer & Adamax \\
Learning rate & $10^{-3}$ \\
Learning-rate scheduler & exponential decay \\
Scheduler decay & $\gamma=0.999$ \\
Training epochs & $1000$ \\
Gradient clipping & global norm $1.0$ \\
KL annealing epochs & $100$ \\
KL annealing start epoch & $1$ \\
Initial-state samples $n_{z_0}$ & $1$ \\
Mixed-effect samples $n_w$ & $20$ \\
Evaluation samples & $10$ \\
Generative ODE solver & RK4 \\
Generative ODE tolerances & rtol $10^{-3}$, atol $10^{-4}$ \\
Held-out adaptation & none; prediction conditions through encoder context \\
Predictive std. & posterior sample std. plus $\max(10^{-3},\mathrm{train\ RMSE})$ noise estimate \\

\end{longtable}

\begin{longtable}{@{}p{0.34\linewidth}p{0.60\linewidth}@{}}
\caption{Hyperparameter settings for \method{}}
\label{tab:megpode-hyperparameters}\\
\toprule
\textbf{Hyperparameter} & \textbf{Value} \\
\midrule
\endfirsthead
\toprule
\textbf{Hyperparameter} & \textbf{Value} \\
\midrule
\endhead
\bottomrule
\endfoot

\multicolumn{2}{@{}l}{\textbf{\method{} model}} \\ \midrule
State smoother & second-order integrated Wiener process prior over $(\vx,\dot{\vx})$ \\
Field decomposition & $\vf_i(\vx)=\vf_0(\vx)+\vg_i(\vx)$ \\
Population field & whitened RBF/Nystr\"om feature field $\vf_0$ \\
Subject-specific field & whitened RBF/Nystr\"om residual field $\vg_i$ \\
Residual prior & $\vw_{i,d}\sim\Normal(\mathbf 0,\lambda_r^{-1}\Id)$ independently across subjects and state dimensions \\
Kernel & ARD RBF kernel for both population and residual fields \\
Inducing points & $l_0=l_r=50$ selected from pooled observed states \\
Initial RBF lengthscales & $0.8$ $\vf_0$, $1.0$ for $\vg_i$ \\
Initial kernel variances & $1.0$ for $\vf_0$, $0.2$ for $\vg_i$ \\
Residual precision & $\lambda_r=5.0$\\

\midrule
\multicolumn{2}{@{}l}{\textbf{Training}} \\ \midrule
Collocation grid $\mathcal{C}_i$ & observation times merged with $60$ uniform points per subject \\
Observation noise initialization & 
dataset scale\\
Collocation covariance & $\Sigma_{\mathrm{coloc}}=\sigma_c^2\Id$, with $\sigma_c=0.1$ by default \\
Training iterations & $30$ \\

Smoother iterations per fit & $8$ outer linearization iterations \\
Field update inner iterations & $2$ \\
Common-field update & collapsed Gaussian update \\
\midrule
\multicolumn{2}{@{}l}{\textbf{Hyperparameter learning}} \\ \midrule
Start time & after $25\%$ of training iterations \\
Optimization budget & $8$ optimizer iterations per hyperparameter update \\
Learned quantities & RBF lengthscales, output scales, $\sigma_y$, $\lambda_r$, and $q_{\mathrm{process}}$ \\

\midrule
\multicolumn{2}{@{}l}{\textbf{Parametric and held-out variants}} \\ \midrule
Held-out adaptation variables & new subject residual field and initial condition; learned population field fixed \\
    Held-out prefix & first $40\%$ of observations\\
Held-out adaptation iterations & $10$ iterations\\
Held-out residual precision & $\lambda_{r,\mathrm{adapt}}=2.5$\\
Population prediction & Monte Carlo propagation from the learned population initial-condition distribution \\
Population MC samples & $500$ by default \\
\end{longtable}

\section{Additional experimental details}
\label{app:systems}

\subsection{Systems}
\label{app:systems-subsection}

We now list the benchmark systems used during the experiments of (Section~\ref{sec:systems}).
All synthetic benchmarks are autonomous first-order ODEs with full-state observations
\begin{equation*}
    \vy_{i,n}
    =
    \vx_i(t_{i,n})
    +
    \boldsymbol{\epsilon}_{i,n},
    \qquad
    \boldsymbol{\epsilon}_{i,n}\sim \Normal(\mathbf{0},\sigma_y^2 \Id_D),
\end{equation*}
where the observation times are irregular and obtained by sorting i.i.d. draws from
$\mathcal{U}([0,T])$. Dense ground-truth trajectories are generated numerically on a fine grid.
In the experiments reported, unless stated otherwise in a dedicated ablation, we use the same simulation budget across systems:
\[
M=30,\qquad
N_{\mathrm{obs}}=20,\qquad
\sigma_y=0.05.
\]
Each subject has its own randomized initial condition and the full latent state is observed with additive Gaussian noise, except, again, in a dedicated ablation. Figure~\ref{fig:examples} displays, for one representative seed, the population and patient trajectories together with the observation horizon, forecasting horizon, and interpolation window for all synthetic systems.

\paragraph{Linear oscillator (2D).}
For $\vx_i=(x_{1,i},x_{2,i})^\top$,
\begin{align*}
    \dot{x}_{1,i} &= -\alpha_i x_{1,i} + \beta_i x_{2,i},\\
    \dot{x}_{2,i} &= -\kappa_i x_{1,i} - \gamma_i x_{2,i}.
\end{align*}
The population field uses $(\alpha_0,\beta_0,\kappa_0,\gamma_0)$, while
subject heterogeneity is introduced through
\[
    \alpha_i = \alpha_0 + \alpha_{\mathrm{sd}}\xi_i,\qquad
    \beta_i  = \beta_0  + \beta_{\mathrm{sd}}\zeta_i,\qquad
    \kappa_i = \kappa_0 + \kappa_{\mathrm{sd}}\rho_i,\qquad
    \gamma_i = \gamma_0 + \gamma_{\mathrm{sd}}\eta_i,
\]
with
\[
    \xi_i,\zeta_i,\rho_i,\eta_i \sim \Normal(0,1)
\]
independently.

\paragraph{Lotka--Volterra (2D).}
For prey $x_{1,i}$ and predator $x_{2,i}$,
\begin{align*}
    \dot{x}_{1,i} &= \alpha_0 x_{1,i} - \beta_i x_{1,i}x_{2,i},\\
    \dot{x}_{2,i} &= \delta_i x_{1,i}x_{2,i} - \gamma_0 x_{2,i}.
\end{align*}
The subject-specific random effects act on the interaction coefficients,
\[
    \beta_i = \beta_0 \exp(\beta_{\mathrm{sd}}\xi_i),
    \qquad
    \delta_i = \delta_0 \exp(\delta_{\mathrm{sd}}\zeta_i),
    \qquad
    \xi_i,\zeta_i \sim \Normal(0,1).
\]
Thus the population field is determined by
$(\alpha_0,\beta_0,\delta_0,\gamma_0)$, while subject heterogeneity enters
through $(\beta_i,\delta_i)$.

\paragraph{Van der Pol (2D).}
For $\vx_i=(x_{1,i},x_{2,i})^\top$,
\begin{align*}
    \dot{x}_{1,i} &= x_{2,i},\\
    \dot{x}_{2,i} &= \mu_i(1-x_{1,i}^2)x_{2,i}-x_{1,i}.
\end{align*}
Subject variability is introduced through
\[
    \mu_i=\mu_0\bigl(1+\mu_{\mathrm{sd}}\xi_i\bigr),
    \qquad
    \xi_i\sim \Normal(0,1).
\]

\paragraph{FitzHugh--Nagumo (2D).}
For membrane potential $v_i$ and recovery variable $w_i$,
\begin{align*}
    \dot{v}_i
    &=
    s_0\!\left(v_i - \frac{v_i^3}{3} - c_i w_i + I_0\right),\\
    \dot{w}_i
    &=
    s_0 \frac{d_i v_i + a_0 - b_0 w_i}{\tau_0}.
\label{eq:fhn}
\end{align*}
The subject-specific random effects act on the coupling coefficients,
\[
    c_i = c_0 + c_{\mathrm{sd}}\xi_i,
    \qquad
    d_i = d_0 + d_{\mathrm{sd}}\zeta_i,
    \qquad
    \xi_i,\zeta_i \sim \Normal(0,1).
\]

\paragraph{One-compartment pharmacokinetics (2D).}
With gastrointestinal amount $A_{\mathrm{GI},i}$ and plasma concentration $U_{P,i}$,
\begin{align*}
    \dot{A}_{\mathrm{GI},i} &= -k_{a,i} A_{\mathrm{GI},i},\\
    \dot{U}_{P,i} &= \frac{k_{a,i}}{V_i}A_{\mathrm{GI},i} - k_{e,i}U_{P,i}.
\end{align*}
Here heterogeneity is parametric and log-normal:
\[
    \log k_{a,i} = \log k_{a,0} + \sigma_{k_a}\xi_i,\qquad
    \log k_{e,i} = \log k_{e,0} + \sigma_{k_e}\zeta_i,\qquad
    \log V_i = \log V_0 + \sigma_V \rho_i,
\]
with $\xi_i,\zeta_i,\rho_i\sim \Normal(0,1)$.
The initial condition is fixed to a common oral dose,
$A_{\mathrm{GI},i}(0)=\mathrm{dose}_0$ and $U_{P,i}(0)=0$.

\paragraph{SIR (3D).}
For susceptible, infectious, and removed states $(S_i,I_i,R_i)$,
\begin{align*}
    \dot{S}_i &= -\beta_i \frac{S_i I_i}{N_i},\\
    \dot{I}_i &= \beta_i \frac{S_i I_i}{N_i} - \gamma_i I_i,\\
    \dot{R}_i &= \gamma_i I_i,
\end{align*}
where $N_i=S_i+I_i+R_i$. The subject-specific effects are log-normal:
\[
    \beta_i=\beta_0\exp(\sigma_\beta \xi_i),
    \qquad
    \gamma_i=\gamma_0\exp(\sigma_\gamma \zeta_i),
    \qquad
    \xi_i,\zeta_i\sim \Normal(0,1).
\]

\paragraph{Lotka--Volterra with GP-draw random effects (2D).}
This benchmark keeps the population dynamics exactly Lotka--Volterra, but replaces parametric subject heterogeneity by smooth subject-specific residual vector fields:
\[
    \dot{\vx}_i(t)
    =
    \vf_{\mathrm{LV}}(\vx_i(t);\theta_0)
    +
    \vg_i(\vx_i(t)),
    \qquad
    \theta_0=(\alpha_0,\beta_0,\delta_0,\gamma_0).
\]
Each residual component is generated based on a bounded RBF/Nystr\"om GP draw
\[
    g_{i,d}(\vx)
    =
    \sigma_g a(\vx)
    \tanh\!\left(B_r(\vx)^\top \vw_{i,d}\right),
    \qquad
    \vw_{i,d}\sim\Normal(\mathbf{0},\Id),
    \qquad d\in\{1,2\},
\]
with residual scale $\sigma_g$. Here $B_r(\vx)=k_r(\vx,\mathcal Z_r)L^{-\top}$, where $\mathcal Z_r$ is a regular inducing grid,
$K_{ZZ}+\epsilon I=LL^\top$, and $k_r$ is an anisotropic RBF kernel with lengthscales $(\ell_{r,1},\ell_{r,2})$. The function
\[
    a(\vx)
    =
    \exp\!\left[
    -\frac12\left(\frac{x_1-1.0}{0.85}\right)^2
    -\frac12\left(\frac{x_2-0.9}{0.65}\right)^2
    \right]
\]
localizes the residual around the typical Lotka--Volterra orbit, which stabilizes the oscillations by damping residual effects away from the observed state region. Thus the shared population field remains mechanistic, while subject deviations are flexible smooth random effects.

\subsection{Lotka--Volterra with subject covariates (2D)}\label{sec:lvcov}
This example employed in Section~\ref{sec:ablations} uses the same Lotka--Volterra dynamics as above, but the subject-specific interaction coefficients are driven by static covariates. Each subject has $\vc_i\in\mathbb{R}^3$ with independent coordinates sampled uniformly from $[-1,1]$, and
\begin{align*}
    \dot{x}_{1,i} &= \alpha_0 x_{1,i} - \beta_i x_{1,i}x_{2,i},\\
    \dot{x}_{2,i} &= \delta_i x_{1,i}x_{2,i} - \gamma_0 x_{2,i},
\end{align*}
with
\[
    \beta_i=\beta_0\exp(\vw_\beta^\top \vc_i+\sigma_{\mathrm{cov}}\xi_i),
    \qquad
    \delta_i=\delta_0\exp(\vw_\delta^\top \vc_i+\sigma_{\mathrm{cov}}\zeta_i),
    \qquad
    \xi_i,\zeta_i\sim\Normal(0,1).
\]
Thus, subjects with nearby covariates have similar interaction rates.

\begin{figure}
    \centering
    \includegraphics[width=1\linewidth]{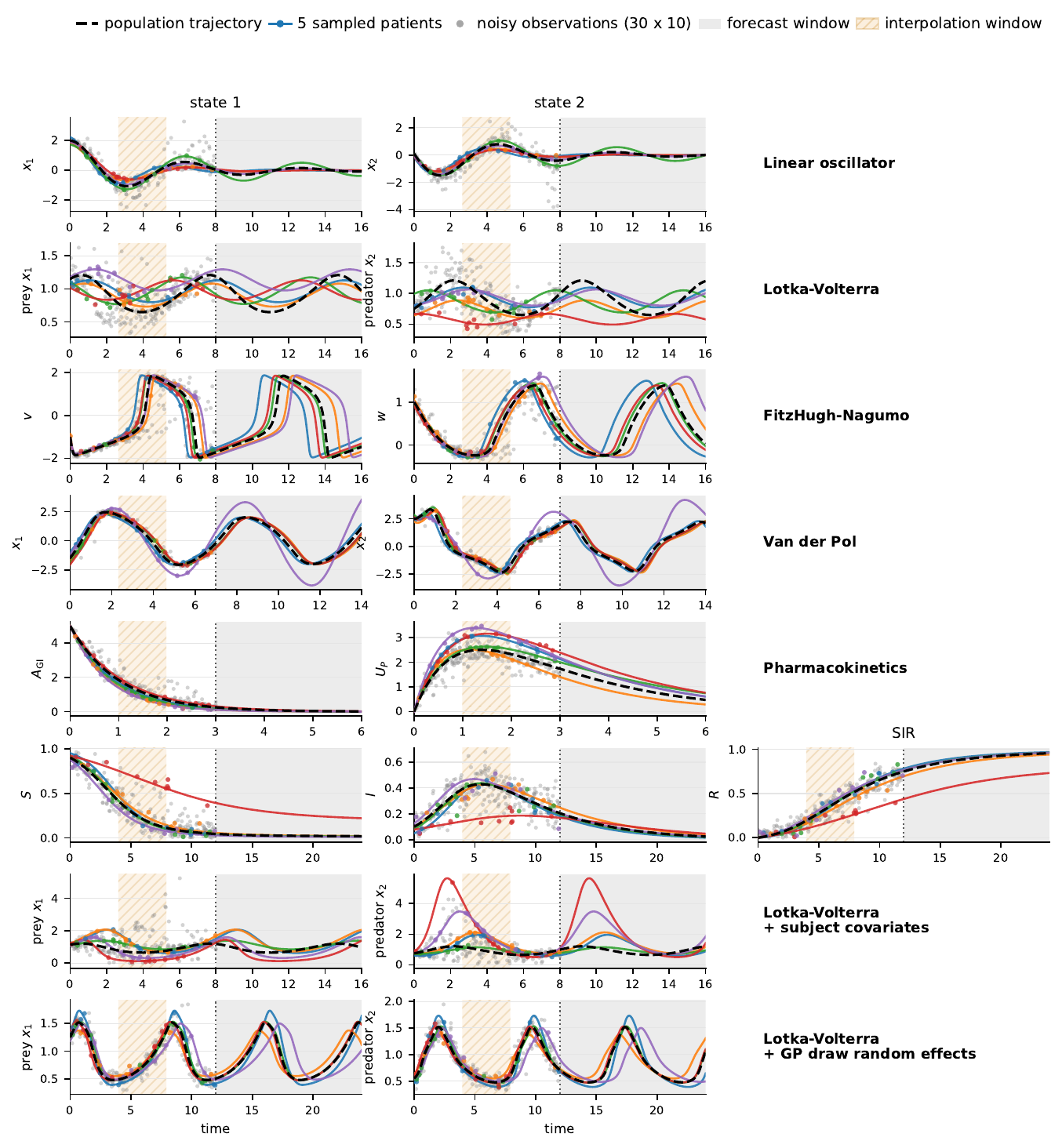}
\caption{
Synthetic benchmark examples used in the experiments. Dashed black curves denote the population trajectory, obtained by integrating the population-only vector field from a single initial condition. Colored curves show five sampled patient trajectories, gray points are noisy observations, and shaded regions indicate interpolation and forecast windows used in the experiments.
}
    \label{fig:examples}
\end{figure}

\subsection{FHN Misspecification Experiment}
\label{app:fhn_residual_engineered}

We include a targeted misspecification experiment to evaluate whether function-space residuals can correct a plausible but incomplete mechanistic ODE. The benchmark starts from the two-dimensional FitzHugh--Nagumo skeleton,
\[
    \vf_{\mathrm{FHN}}(v,w;\boldsymbol\theta),
    \qquad
    \boldsymbol\theta=(s,c,I,d,a,b,\tau),
\]
but the true population vector field includes an additional smooth state-dependent correction,
\[
    \vf_0^\star(\vx)
    =
    \vf_{\mathrm{FHN}}(\vx;\boldsymbol\theta^\star)
    +
    \boldsymbol{h}_0(\vx),
    \qquad \vx=(v,w).
\]
The correction $\boldsymbol{h}_0$ is a deterministic bounded RBF/Nystr\"om residual, constructed similarly to the GP-draw random effects in Appendix~\ref{app:systems}, that changes the shape and timing of the oscillations without inducing collapse or blow-up. Subject heterogeneity is then added through small smooth residual fields,
\[
    \vf_i^\star(\vx)
    =
    \vf_0^\star(\vx)+\vg_i^\star(\vx),
\]
where each $\vg_i^\star$ is an independent bounded RBF/Nystr\"om draw.

We compare three methods. First, the mechanistic SAEM baseline of Appendix~\ref{app:baselines}, which is deliberately doubly misspecified: it fits only a nonlinear mixed-effects FHN model, so it can adjust mechanistic population parameters and parametric random effects, but cannot represent either the shared correction $\boldsymbol h_0$ or the nonparametric subject-specific residual fields $\vg_i^\star$. Standard \method{} learns the fully nonparametric decomposition $\vf_i=\vf_0+\vg_i$ without a mechanistic mean. Finally, \methodm uses an FHN mean inside the shared field,
\[
    f_{0,d}(\vx)
    =
    f_{\mathrm{FHN},d}(\vx;\boldsymbol\theta)
    +
    r_{0,d}(\vx),
\]
where $r_0$ is a GP residual and the subject-specific fields $\vg_i$ are modeled as in the full method. After each smoothing step, the smoothed derivative pseudo-observations are regressed onto the FHN basis after subtracting the current GP contribution, yielding a weighted ridge least-squares update of $\boldsymbol\theta$ with weights given by the collocation uncertainty.

\subsection{Hardware}\label{sec:hardware-details}

All experiments were conducted on a compute cluster equipped with four NVIDIA V100 GPUs (32 GB memory each), as well as on a MacBook Pro with an Apple M4 Pro chip.
Typical synthetic benchmark fits took on the order of minutes using only one CPU core.

\section{Additional derivations}
\label{app:add-derivs}

\subsection*{Notation reference}
\label{app:notation}

We begin the section by collecting the notation used in the main text. These are presented below in Tables~\ref{tab:notation-core} and~\ref{tab:notation-inference}.

\begin{table*}[h!]
\centering
\footnotesize
\setlength{\tabcolsep}{3pt}
\renewcommand{\arraystretch}{1.18}
\begin{tabularx}{\textwidth}{
>{\raggedright\arraybackslash}p{2cm}
>{\raggedright\arraybackslash}p{2.0cm}
>{\raggedright\arraybackslash}X
>{\raggedright\arraybackslash}p{1.55cm}}
\toprule
\textbf{Symbol} & \textbf{Size} & \textbf{Description} & \textbf{First seen} \\
\midrule
\multicolumn{4}{l}{\textbf{Observed data and subject-level dynamics}}\\
$M$ & scalar & Number of subjects. & Sec.~\ref{sec:problem} \\
$i,n$ & indices & Subject and within-subject observation indices. & Sec.~\ref{sec:problem} \\
$\mathcal{T}_i=\{t_{i,j}\}_{j=1}^{N_i}$ & set & Observation times for subject $i$. & Sec.~\ref{sec:problem} \\
$N_i$ & scalar & Number of observations for subject $i$. & Sec.~\ref{sec:problem} \\
$D_y$ & scalar & Observation dimension. & Eq.~\eqref{eq:obs} \\
$D$ & scalar & Latent-state dimension. & Eq.~\eqref{eq:obs} \\
$\vy_{i,n}$ & $\R^{D_y}$ & Observation for subject $i$ at time $t_{i,n}$. & Eq.~\eqref{eq:obs} \\
$\vx_i(t)$ & $\R^{D}$ & Latent state trajectory of subject $i$. & Eq.~\eqref{eq:obs} \\
$\vH$ & $\R^{D_y\times D}$ & Linear observation operator. & Eq.~\eqref{eq:obs} \\
$\boldsymbol{\Sigma}_y$ & $\R^{D_y\times D_y}$ & Observation-noise covariance. & Eq.~\eqref{eq:obs} \\
$\vf_i$ & $\R^D\to\R^D$ & Subject-specific vector field. & Eq.~\eqref{eq:ode} \\
$\vx_{i,0}$ & $\R^D$ & Initial condition for subject $i$. & Eq.~\eqref{eq:ode} \\
\midrule
\multicolumn{4}{l}{\textbf{Mixed-effect field model}}\\
$\vf_0,\;\vg_i$ & $\R^D\to\R^D$ & Shared population and subject-specific deviation fields. & Eq.~\eqref{eq:decomp} \\
$f_{0,d},\;g_{i,d}$ & scalar functions & $d$th coordinates of the shared and local fields. & Eqs.~\eqref{eq:sharedgp}--\eqref{eq:localgp} \\
$k_0,\;k_r$ & kernels & GP kernels for the shared and local fields. & Eqs.~\eqref{eq:sharedgp}--\eqref{eq:localgp} \\
$\boldsymbol{Z}_0,\;\boldsymbol{Z}_r$ & $\R^{l_0\times D},\;\R^{l_r\times D}$ & Inducing locations for the shared and local fields. & Sec.~\ref{sec:scalable} \\
$l_0,\;l_r$ & scalars & Numbers of shared and local inducing locations. & Sec.~\ref{sec:scalable} \\
$\Phi_0(\vx),\;\Phi_r(\vx)$ & $\R^{1\times l_0},\;\R^{1\times l_r}$ & Whitened feature maps for scalable field representation. & Eq.~\eqref{eq:features} \\
$\vb_{0,d},\;\vb_{i,d}$ & $\R^{l_0},\;\R^{l_r}$ & Shared and local coefficient vectors for output $d$. & Eq.~\eqref{eq:weightspace} \\
$\lambda_r$ & scalar & Shrinkage strength for the subject-specific coefficients. & Eq.~\eqref{eq:weightprior} \\
$q(\vb_{0,d}),\;q(\vb_{i,d})$ & Gaussian laws & Variational posteriors over shared and local coefficients. & Sec.~\ref{sec:scalable} \\
$\vm_{0,d}, \vm_{i,d}$ & vector & Posterior means of the shared and local coefficients. & Eq.~\eqref{eq:fieldmoments} \\
$\vS_{0,d}, \vS_{i,d}$ & matrix & Posterior covariances of shared and local coefficients. & Eq.~\eqref{eq:fieldmoments} \\
\bottomrule
\end{tabularx}
\caption{Core model and field notation used in the main text.}
\label{tab:notation-core}
\end{table*}

\begin{table*}[h!]
\centering
\footnotesize
\setlength{\tabcolsep}{4pt}
\renewcommand{\arraystretch}{1.18}
\begin{tabularx}{\textwidth}{
>{\raggedright\arraybackslash}p{2.0cm}
>{\raggedright\arraybackslash}p{2cm}
>{\raggedright\arraybackslash}X
>{\raggedright\arraybackslash}p{1.9cm}}
\toprule
\textbf{Symbol} & \textbf{Size} & \textbf{Description} & \textbf{First seen} \\
\midrule
\multicolumn{4}{l}{\textbf{State-space prior and collocation}}\\
$\tau_{i,k}$ & scalar & Collocation time for subject $i$ at grid index $k$. & Eq.~\eqref{eq:augstate} \\
$\mathcal{C}_i=\{\tau_{i,j}\}_{j=1}^{K_i}$ & set & Subject-specific collocation grid. & Sec.~\ref{sec:trajupdate} \\
$K_i$ & scalar & Subject-specific collocation grid size. & Sec.~\ref{sec:trajupdate}\\
$\vs_{i,k}$ & $\R^{2D}$ & Augmented latent state containing $[\vx_i(\tau_{i,k}), \dot{\vx}_i(\tau_{i,k})]$. & Eq.~\eqref{eq:augstate} \\
$\vA(h_k),\;\vQ(h_k)$ & matrices & State-transition matrix, process covariance in the state-space GP prior. & Eq.~\eqref{eq:ssgp} \\
$h_k$ & scalar & Time step in the state-space GP prior. & Eq.~\eqref{eq:ssgp}\\
$\Hx,\;\Hxdot$ & $\R^{D\times 2D}$ & Selectors extracting state and derivative from $\vs_{i,k}$. & Eq.~\eqref{eq:augstate} \\
$\boldsymbol{\Sigma}_{\mathrm{coloc}}$ & $\R^{D\times D}$ & Baseline collocation-noise covariance. & Eq.~\eqref{eq:collocbg} \\
$\bar{\vx}_{i,k}$ & $\R^D$ & Reference point used for local linearization of the collocation factor. & Sec.~\ref{sec:trajupdate} \\
$\bm{\mu}^{f}_{i,k},\;\bm{\Sigma}^{f}_{i,k}$ & vector, matrix & Posterior mean and covariance of the subject field evaluated at $\bar{\vx}_{i,k}$. & Eq.~\eqref{eq:localmoments} \\
$\vJ_{i,k}$ & matrix & Jacobian of the subject field evaluated at $\bar{\vx}_{i,k}$. & Eq.~\eqref{eq:localmoments}\\
$\vH^{(\mathrm{coloc})}_{i,k},\;\bm{c}^{(\mathrm{coloc})}_{i,k}$ & matrix, vector & Linearized collocation operator, offset term. & Eqs.~\eqref{eq:site}--\eqref{eq:siteterms} \\
$\vR^{(\mathrm{coloc})}_{i,k}$ & matrix & Effective covariance & Eqs.~\eqref{eq:site}--\eqref{eq:siteterms}\\
$q(\vs_{i,0:K_i})$ & Gaussian law & Variational posterior over the latent trajectory of subject $i$ on $\mathcal{C}_i$. & Eq.~\eqref{eq:variational} \\
\midrule
\multicolumn{4}{l}{\textbf{Pseudo-data and field regression}}\\
$\tilde{\vx}_{i,k},\;\widetilde{\dot{\vx}}_{i,k},\;\widetilde{\vR}_{i,k}$ & vector, vector, matrix & Smoothed pseudo-state, pseudo-derivative, and pseudo-derivative covariance used to refit the field. & Eq.~\eqref{eq:pseudodata} \\

$\vu_{i,d}$ & $\R^{K_i+1}$ & Stacked pseudo-derivative targets for subject $i$, output $d$. & Eq.~\eqref{eq:Yid} \\
$\boldsymbol{P}_{0,i,d},\;\boldsymbol{P}_{r,i,d}$ & \makecell[l]{$\R^{(K_i+1)\times l_0}$\\ $\R^{(K_i+1)\times l_r}$} & Shared and local design matrices evaluated at the smoothed states. & Eq.~\eqref{eq:designmats} \\
$\boldsymbol{R}_{i,d}$ & $\R^{(K_i+1)\times (K_i+1)}$ & Noise covariance in the pseudo-regression output $d$. & Eq.~\eqref{eq:regression} \\
$\veta_{0,d}, \veta_{i,d}$ & vector & Natural mean parameters for shared and local Gaussian coefficient posteriors & Eqs.~\eqref{eq:sharedeta}--\eqref{eq:localeta}\\
$\vLambda_{0,d}, \vLambda_{i,d}$ & matrix & Natural precision for the shared and local Gaussian coefficient posteriors. & Eqs.~\eqref{eq:sharedprec}--\eqref{eq:localprec} \\
\bottomrule
\end{tabularx}
\caption{Trajectory, collocation, and inference notation used in the main text.}
\label{tab:notation-inference}
\end{table*}

\subsection{Derivation of the local Gaussian pseudo-observation}
\label{app:colloc-derivation}

We derive the Gaussian pseudo-observation used in Section~\ref{sec:trajupdate}. Starting from the collocation residual in Eq.~\eqref{eq:maincolloc},
\begin{equation}
    \mathbf{0}
    =
    \boldsymbol{H}_{\dot{x}}\vs_{i,k}
    -
    \vf_i\!\bigl(\boldsymbol{H}_x\vs_{i,k}\bigr)
    +
    \boldsymbol{\epsilon}^{(\mathrm{coloc})}_{i,k},
    \qquad
    \boldsymbol{\epsilon}^{(\mathrm{coloc})}_{i,k}\sim
    \Normal\!\bigl(\mathbf{0},\boldsymbol{\Sigma}_{\mathrm{coloc}}\bigr).
    \label{eq:app_colloc_start}
\end{equation}

We linearize the posterior-mean vector field around a reference point $\bar{\vx}_{i,k}$:
\begin{equation}
    \E_q\!\bigl[\vf_i(\boldsymbol{H}_x\vs_{i,k})\bigr]
    \approx
    \bm{\mu}^{f}_{i,k}
    +
    \boldsymbol{J}_{i,k}\bigl(\boldsymbol{H}_x\vs_{i,k}-\bar{\vx}_{i,k}\bigr),
    \label{eq:app_linfield}
\end{equation}
where
\begin{equation}
    \bm{\mu}^{f}_{i,k}=\E_q\!\bigl[\vf_i(\bar{\vx}_{i,k})\bigr],
    \qquad
    \boldsymbol{J}_{i,k}
    =
    \nabla_{\vx}\E_q\!\bigl[\vf_i(\vx)\bigr]\Big|_{\vx=\bar{\vx}_{i,k}}.
    \label{eq:app_localmoments}
\end{equation}

Substituting Eq.~\eqref{eq:app_linfield} into Eq.~\eqref{eq:app_colloc_start} gives
\begin{align}
    \mathbf{0}
    &\approx
    \boldsymbol{H}_{\dot{x}}\vs_{i,k}
    -
    \Bigl(
        \bm{\mu}^{f}_{i,k}
        +
        \boldsymbol{J}_{i,k}\bigl(\boldsymbol{H}_x\vs_{i,k}-\bar{\vx}_{i,k}\bigr)
    \Bigr)
    +
    \boldsymbol{\epsilon}^{(\mathrm{coloc})}_{i,k}
    \nonumber\\
    &=
    \boldsymbol{H}_{\dot{x}}\vs_{i,k}
    -
    \bm{\mu}^{f}_{i,k}
    -
    \boldsymbol{J}_{i,k}\boldsymbol{H}_x\vs_{i,k}
    +
    \boldsymbol{J}_{i,k}\bar{\vx}_{i,k}
    +
    \boldsymbol{\epsilon}^{(\mathrm{coloc})}_{i,k}
    \nonumber\\
    &=
    \bigl(\boldsymbol{H}_{\dot{x}}-\boldsymbol{J}_{i,k}\boldsymbol{H}_x\bigr)\vs_{i,k}
    -
    \bigl(\bm{\mu}^{f}_{i,k}-\boldsymbol{J}_{i,k}\bar{\vx}_{i,k}\bigr)
    +
    \boldsymbol{\epsilon}^{(\mathrm{coloc})}_{i,k}.
    \label{eq:app_expand}
\end{align}

Rearranging terms yields
\begin{equation}
    \bm{\mu}^{f}_{i,k}-\boldsymbol{J}_{i,k}\bar{\vx}_{i,k}
    =
    \bigl(\boldsymbol{H}_{\dot{x}}-\boldsymbol{J}_{i,k}\boldsymbol{H}_x\bigr)\vs_{i,k}
    +
    \boldsymbol{\epsilon}^{(\mathrm{coloc})}_{i,k}.
    \label{eq:app_rearranged}
\end{equation}

This is already in linear-Gaussian observation form. Defining
\begin{equation}
    \boldsymbol{H}^{(\mathrm{coloc})}_{i,k}
    =
    \boldsymbol{H}_{\dot{x}}-\boldsymbol{J}_{i,k}\boldsymbol{H}_x,
    \qquad
    \bm{c}^{(\mathrm{coloc})}_{i,k}
    =
    \bm{\mu}^{f}_{i,k}-\boldsymbol{J}_{i,k}\bar{\vx}_{i,k},
    \label{eq:app_Hc_defs}
\end{equation}
we obtain the pseudo-observation
\begin{equation}
    \bm{c}^{(\mathrm{coloc})}_{i,k}
    =
    \boldsymbol{H}^{(\mathrm{coloc})}_{i,k}\vs_{i,k}
    +
    \boldsymbol{\epsilon}_{i,k},
    \qquad
    \boldsymbol{\epsilon}_{i,k}\sim
    \Normal\!\bigl(\mathbf{0},\boldsymbol{R}^{(\mathrm{coloc})}_{i,k}\bigr).
    \label{eq:app_pseudoobs}
\end{equation}

Finally, the effective covariance is taken as
\begin{equation}
    \boldsymbol{R}^{(\mathrm{coloc})}_{i,k}
    =
    \boldsymbol{\Sigma}_{\mathrm{coloc}}
    +
    \boldsymbol{\Sigma}^{f}_{i,k},
    \qquad
    \boldsymbol{\Sigma}^{f}_{i,k}
    =
    \Var_q\!\bigl[\vf_i(\bar{\vx}_{i,k})\bigr],
    \label{eq:app_Rcoloc}
\end{equation}
so that uncertainty in the local field evaluation is incorporated into the pseudo-observation noise.

Strictly speaking, the first-order expansion is applied to the posterior-mean field, while uncertainty in the field evaluation under $q$ is propagated through the additive covariance term $\boldsymbol{\Sigma}^{f}_{i,k}$. The resulting construction should therefore be understood as a local Gaussian approximation to the random nonlinear term $\vf_i(\boldsymbol{H}_x\vs_{i,k})$, rather than as a deterministic plug-in of its posterior mean.

\subsection{Trajectory-space Kalman smoothing update}
\label{app:kalman}

Once the collocation pseudo-observation $\bm{c}^{(\mathrm{coloc})}_{i,k}$ in Eq.~\eqref{eq:site} in the main text is locally linearized, each subject-specific trajectory posterior is updated in a linear-Gaussian state-space model. Recall the Gauss--Markov prior from Eq.~\eqref{eq:ssgp}
\begin{equation}
    \vs_{i,k+1}=\vA_{i,k}\vs_{i,k}+\bm{q}_{i,k},
    \qquad
    \bm{q}_{i,k}\sim\Normal(\mathbf{0},\vQ_{i,k}),
\end{equation}
with $\vs_{i,k+1}$ from Eq.~\eqref{eq:augstate}. At each collocation time $\tau_{i,k}$, define the stacked observation vector
\begin{equation}
    \bm{w}_{i,k} =
    \begin{cases}
    \begin{bmatrix}
        \vy_{i,k}\\
        \bm{c}^{(\mathrm{coloc})}_{i,k}
    \end{bmatrix},
    & \text{if } \tau_{i,k}\text{ is an observation time},\\[4mm]
    \bm{c}^{(\mathrm{coloc})}_{i,k},
    & \text{otherwise},
    \end{cases}
\end{equation}
where $\vy_{i,k}$ is the physical observation from Eq.~\eqref{eq:obs}. The corresponding linear operator is
\begin{equation}
    \vH^{(\mathrm{smooth})}_{i,k} =
    \begin{cases}
    \begin{bmatrix}
        \boldsymbol{H}\boldsymbol{H}_x\\
        \vH^{(\mathrm{coloc})}_{i,k}
    \end{bmatrix},
    & \text{if } \tau_{i,k}\text{ is an observation time},\\[4mm]
    \vH^{(\mathrm{coloc})}_{i,k},
    & \text{otherwise}.
    \end{cases}
    \label{eq:app_smooth_operator}
\end{equation}
Here $\boldsymbol{H}\boldsymbol{H}_x\in\mathbb{R}^{D_y\times 2D}$ maps the augmented state
$\vs_{i,k}$ to the physical observation space, while
$\vH^{(\mathrm{coloc})}_{i,k}\in\mathbb{R}^{D\times 2D}$ is the linearized collocation operator from
Eq.~\eqref{eq:siteterms}.
where $\boldsymbol{H}$ comes from the observation model in Eq.~\eqref{eq:obs} and $\vH^{(\mathrm{coloc})}_{i,k}$ from Eq.~\eqref{eq:siteterms}. The associated block noise covariance is
\begin{equation}
    \vS_{i,k}^{\mathrm{obs}} =
    \begin{cases}
    \begin{bmatrix}
        \boldsymbol{\Sigma}_{i,k}^{y} & \mathbf{0}\\
        \mathbf{0} & \vR^{(\mathrm{coloc})}_{i,k}
    \end{bmatrix},
    & \text{if } \tau_{i,k}\text{ is an observation time},\\[4mm]
    \vR^{(\mathrm{coloc})}_{i,k},
    & \text{otherwise},
    \end{cases}
\end{equation}
where $\boldsymbol{\Sigma}_{i,k}^{y}$ denotes the observation-noise covariance at grid point $\tau_{i,k}$. In the homoscedastic setting of Eq.~\eqref{eq:obs}, this reduces to $\boldsymbol{\Sigma}_{i,k}^{y}\equiv \boldsymbol{\Sigma}_y$.
Then
\begin{equation}
    \bm{w}_{i,k}=\vH^{(\mathrm{smooth})}_{i,k}\vs_{i,k}+\bm{r}_{i,k},
    \qquad
    \bm{r}_{i,k}\sim\Normal(\mathbf{0},\vS_{i,k}^{\mathrm{obs}}).
\end{equation}
Filtering yields forward marginals
\begin{equation}
p(\vs_{i,k}\mid \bm{w}_{i,0:k})
=
\Normal\!\bigl(\boldsymbol{\mu}_{i,k}^{\mathrm{filt}},\boldsymbol{\Sigma}_{i,k}^{\mathrm{filt}}\bigr),
\end{equation}
and the backward smoothing pass yields smoothed marginals
\begin{equation}
q(\vs_{i,k})
=
\Normal\!\bigl(\boldsymbol{\mu}_{i,k}^{\mathrm{smooth}},\boldsymbol{\Sigma}_{i,k}^{\mathrm{smooth}}\bigr)
\end{equation}
together with lag-one cross-covariances of the form
\begin{equation}
\Cov\!\bigl(\vs_{i,k},\vs_{i,k+1}\mid \bm{w}_{i,0:K_i}\bigr),
\end{equation}
which are needed when working with the full joint Gaussian Markov posterior over the trajectory. Standard linear-Gaussian filtering and smoothing recursions therefore apply directly \citep{sarkka2013bfs}. For further details on state-space GP inference and Kalman smoothing, see \citet{sarkka2019sde,tronarp2019filtering}.

\subsection{Extrapolation for a trained subject}
\label{app:seen-subject-extrapolation}

Suppose subject $i$ is part of the training set and that, after training on
observations in $[0,T_{\mathrm{obs}}]$, we have learned the shared and
subject-specific posteriors
\[
q(\vb_{0,d})=\Normal(\vm_{0,d},\vS_{0,d}),
\qquad
q(\vb_{i,d})=\Normal(\vm_{i,d},\vS_{i,d}),
\qquad d=1,\dots,D.
\]
To extrapolate this same subject to a longer horizon
$T_{\mathrm{pred}}>T_{\mathrm{obs}}$, we freeze the learned field posterior and
recompute only the trajectory posterior on an extended collocation grid.

Let
\[
\mathcal{C}_i^{\mathrm{pred}}
=
\{\tau_{i,0},\dots,\tau_{i,K_i^{\mathrm{pred}}}\}
\]
be a collocation grid such that
\[
\mathcal{C}_i^{\mathrm{obs}} \subseteq \mathcal{C}_i^{\mathrm{pred}},
\qquad
\max \mathcal{C}_i^{\mathrm{pred}} = T_{\mathrm{pred}},
\]
where $\mathcal{C}_i^{\mathrm{obs}}$ is the grid used during training on
$[0,T_{\mathrm{obs}}]$. On this extended grid we keep the original physical
observations on $[0,T_{\mathrm{obs}}]$ and treat all later grid points as
unobserved.

The target of extrapolation is therefore the extended trajectory posterior
\[
q\!\left(
\vs_{i,0:K_i^{\mathrm{pred}}}
\,\middle|\,
\vy_i(0{:}T_{\mathrm{obs}}),\, q(\vb_0),\, q(\vb_i)
\right),
\]
whose marginal over $\tau_{i,k}\in[T_{\mathrm{obs}},T_{\mathrm{pred}}]$
yields the forecast.

Since the field posterior is fixed, extrapolation reuses the same
local-linearization and Kalman-smoothing construction as in Section~\ref{sec:trajupdate} and
Appendix~\ref{app:kalman}, but on the enlarged grid
$\mathcal{C}_i^{\mathrm{pred}}$. It is a \emph{trajectory-only}
inference step: the field is frozen, and the latent path is recomputed jointly
on $[0,T_{\mathrm{pred}}]$, with no physical observations beyond
$T_{\mathrm{obs}}$. Thus, it is not a simple forward rollout from an
estimate at $T_{\mathrm{obs}}$, and enlarging the horizon may slightly modify the posterior even on the observed interval.

For \texttt{f0\_only}, the same extrapolation procedure is used with
$\vg_i\equiv 0$, so prediction is driven only by the shared field. For
\texttt{gi\_only}, the shared component is absent and extrapolation uses only
the trained subject-specific field of subject $i$.

\subsection{Adaptation and extrapolation for a new subject after training}
\label{app:new-subject}

Suppose training has produced shared posteriors
\[
q(\vb_{0,d})=\Normal(\vm_{0,d},\vS_{0,d}),
\qquad d=1,\dots,D.
\]
For a new subject $\star$, we keep the shared field fixed and infer only a new
local posterior $q(\vb_{\star,d})$ together with the subject trajectory
posterior on the observed window. Let
\[
\mathcal{C}_\star^{\mathrm{obs}}
=
\{\tau_{\star,0},\dots,\tau_{\star,K_\star}\}
\]
denote a collocation grid covering the available observations.

Adaptation reuses the same trajectory-space update of Section~3.3 and the same
pseudo-regression construction of Section~3.4, except that the shared posterior
is held fixed and only the subject-specific posterior is updated. For output
dimension $d$, the personalized pseudo-regression is
\begin{equation}
    \vu_{\star,d}
    =
    \boldsymbol{P}_{0,\star,d}\vb_{0,d}
    +
    \boldsymbol{P}_{r,\star,d}\vb_{\star,d}
    +
    \boldsymbol{\varepsilon}^{\mathrm{reg}}_{\star,d},
    \qquad
    \boldsymbol{\varepsilon}^{\mathrm{reg}}_{\star,d}\sim \Normal(\mathbf{0},\vR_{\star,d}),
\end{equation}
which yields the local natural-parameter update
\begin{align}
    \vLambda_{\star,d}^{\mathrm{new}}
    &=
    \lambda_r \Id_{l_r}
    +
    \boldsymbol{P}_{r,\star,d}^\top
    \vR_{\star,d}^{-1}
    \boldsymbol{P}_{r,\star,d},\\
    \veta_{\star,d}^{\mathrm{new}}
    &=
    \boldsymbol{P}_{r,\star,d}^\top
    \vR_{\star,d}^{-1}
    \bigl(
        \vu_{\star,d}
        -
        \boldsymbol{P}_{0,\star,d}\vm_{0,d}
    \bigr).
\end{align}
Thus adaptation is an alternating scheme in which the shared field remains
fixed, while the new subject trajectory and local field are refined jointly.

When $N_\star=0$, the procedure reduces to a population-only forecast by keeping
$\vm_{\star,d}=\mathbf{0}$. When only an early prefix is observed, the same
algorithm yields few-shot personalization.

For \texttt{f0\_only}, the local update is omitted entirely. For
\texttt{gi\_only}, the shared component is absent and only the
subject-specific field is adapted.

\subsection{Initial conditions}
\label{app:init-conditions}

We treat the initial condition as a latent subject-specific quantity and place a
simple mixed-effect Gaussian prior on it. Let $\tau_{i,0}$ denote the left
endpoint of the collocation grid for subject $i$ (typically $\tau_{i,0}=0$), and
write
\[
\vx_{i,0} := \vx_i(\tau_{i,0}).
\]
We model
\begin{equation}
    \vx_{i,0}
    =
    \boldsymbol{\mu}_{x_0} + \boldsymbol{a}_i,
    \qquad
    \boldsymbol{a}_i \sim \Normal(\mathbf{0},\boldsymbol{\Sigma}_{x_0}),
    \qquad i=1,\dots,M.
\end{equation}
Thus $\boldsymbol{\mu}_{x_0}$ is a population reference initial state and
$\boldsymbol{\Sigma}_{x_0}$ captures between-subject variability at the initial
time.

\paragraph{Initial prior in the augmented state-space model.}
The smoother operates on the augmented state
\[
\vs_{i,k}=
\begin{bmatrix}
\vx_i(\tau_{i,k})\\
\dot{\vx}_i(\tau_{i,k})
\end{bmatrix}.
\]
The mixed-effect initial-condition prior is applied only to the state block of
$\vs_{i,0}$, while the derivative block remains under the default temporal
prior. Concretely, the initial Gaussian prior has the form
\begin{equation}
    \vs_{i,0}
    \sim
    \Normal\!\left(
    \begin{bmatrix}
    \boldsymbol{\mu}_{x_0}\\
    \mathbf{0}
    \end{bmatrix},
    \begin{bmatrix}
    \boldsymbol{\Sigma}_{x_0} & \mathbf{0}\\
    \mathbf{0} & \boldsymbol{P}_{\dot x,0}
    \end{bmatrix}
    \right),
\end{equation}
where $\boldsymbol{P}_{\dot x,0}$ is the default prior covariance for the
initial derivative coordinates. Hence the model learns a population prior for
$\vx_{i,0}$, while $\dot{\vx}_i(\tau_{i,0})$ is inferred indirectly through the
observations and collocation constraints.

\paragraph{Empirical-Bayes initialization.}
For simplicity, we describe the empirical initialization used in the fully observed
case, where $D_y=D$ and $\boldsymbol H=\boldsymbol I_D$.
Because the observation times are irregular and need not include $\tau_{i,0}$,
we initialize the population initial-condition model from the earliest observed
state of each subject. Let $\vy_{i,(1)}$ denote the first observed state vector
for subject $i$. The initialization is
\begin{align}
    \boldsymbol{\mu}_{x_0}^{(0)}
    &=
    \frac{1}{M}\sum_{i=1}^M \vy_{i,(1)},\\
    \boldsymbol{\Sigma}_{x_0}^{(0)}
    &=
    \alpha_{x_0}\,\widehat{\Cov}\!\bigl(\vy_{1,(1)},\dots,\vy_{M,(1)}\bigr)
    + \epsilon_{x_0}\boldsymbol{I}_D,
\end{align}
where $\alpha_{x_0}>0$ is a scale factor and $\epsilon_{x_0}>0$ is a small
jitter term.
For partially observed settings with $D_y<D$, one must first map or impute the first
observations into the latent state space before forming
$\boldsymbol{\mu}_{x_0}^{(0)}$ and $\boldsymbol{\Sigma}_{x_0}^{(0)}$.
This is only an initialization heuristic: when the first
observation occurs after $\tau_{i,0}$, it is not itself the true initial state,
but it provides a reasonable empirical anchor from which the smoother can refine
$\vx_{i,0}$.

\paragraph{Update after each smoothing step.}
After smoothing all subjects with the current field posterior, we extract the
smoothed marginal at the first grid point,
\[
q(\vx_{i,0})
=
\Normal\!\bigl(\vm_{i,0}^{x}, \boldsymbol{P}_{i,0}^{xx}\bigr),
\]
and update the population initial-condition parameters by moment matching:
\begin{align}
    \boldsymbol{\mu}_{x_0}^{\mathrm{new}}
    &=
    \frac{1}{M}\sum_{i=1}^M \vm_{i,0}^{x},\\
    \boldsymbol{\Sigma}_{x_0}^{\mathrm{new}}
    &=
    \frac{1}{M}\sum_{i=1}^M
    \left[
        \boldsymbol{P}_{i,0}^{xx}
        +
        \bigl(\vm_{i,0}^{x}-\boldsymbol{\mu}_{x_0}^{\mathrm{new}}\bigr)
        \bigl(\vm_{i,0}^{x}-\boldsymbol{\mu}_{x_0}^{\mathrm{new}}\bigr)^\top
    \right]
    +
    \boldsymbol{\epsilon}_{x_0}\boldsymbol{I}_D.
\end{align}
Hence the initial-condition layer is updated in an EM-like empirical-Bayes
fashion: smoothing provides posterior moments for the subject-specific initial
states, and these are pooled to refresh the population-level prior before the
next outer iteration.

\paragraph{Initialization of the trajectory linearization.}
The current trajectory linearization is initialized from an observation-only
Gaussian smoothing pass under the current initial-condition prior. If
$q^{(0)}(\vs_{i,0:K_i})$
denotes this initial observation-only posterior, then the initial reference trajectory
used by the collocation linearization is
\begin{equation}
    \bar{\vx}_{i,k}^{(0)}
    :=
    \mathbb{E}_{q^{(0)}}[\vx_i(\tau_{i,k})].
\end{equation}
Thus the initial-condition prior influences the outer linearization through the
initial observation-driven smoothing step.

\paragraph{Population mean versus subject-specific initial conditions.}
This construction yields two distinct quantities.

First, the \emph{population reference initial condition} is the learned mean $\vx_{0}^{\mathrm{pop}} := \boldsymbol{\mu}_{x_0}$. 
This is the natural initial state for propagating the shared population field
and obtaining a population mean trajectory.

Second, for a given subject $i$, the relevant initial-condition summary after
training is the subject-specific smoothed posterior
\[
q(\vx_{i,0})
=
\Normal\!\bigl(\vm_{i,0}^{x},\boldsymbol{P}_{i,0}^{xx}\bigr).
\]
Thus the population prior regularizes the estimation of $\vx_{i,0}$, but the
final subject-level initial condition remains individualized by the data and the
inferred subject-specific dynamics.

\paragraph{Baselines.} The same initial-condition mechanism is used for the full mixed-effect model and
for the two ablations \texttt{f0\_only} and \texttt{gi\_only}. The differences
between methods only concern the vector-field layer.

\subsection{Hyperparameter learning}
\label{subsec:hyperparameter_learning}

We learn hyperparameters via empirical-Bayes updates after the current smoothing pass. Recall that the smoothed trajectories define pseudo-regression targets for the vector field (Equation~\ref{eq:regression}),
\[
\boldsymbol u_{i,d}
=
\boldsymbol P_{0,i,d}\vb_{0,d}
+
\boldsymbol P_{r,i,d}\vb_{i,d}
+
\boldsymbol\varepsilon_{i,d},
\qquad
\boldsymbol\varepsilon_{i,d}\sim\Normal(0,\boldsymbol R_{i,d}).
\]
The kernel hyperparameters and field output scales, denoted collectively by $\theta$, are chosen by minimizing the marginal negative log likelihood of these pseudo-data after integrating out the Gaussian field weights:
\begin{equation}
\mathcal L_{\mathrm{field}}(\boldsymbol{\theta})
=
-\sum_{d=1}^D
\log
\int
p_{\boldsymbol{\theta}}\!\left(
\{\boldsymbol u_{i,d}\}_i
\mid
\vb_{0,d},\{\vb_{i,d}\}_i
\right)
p(\vb_{0,d})
\prod_i p_{\lambda_r}(\vb_{i,d})
\,d\vb_{0,d}\prod_i d\vb_{i,d}.
\end{equation}

The observation noise is updated from the smoothing posterior moments.  If
$\vm_{i,k}$ and $\vS_{i,k}$ are the smoothed mean and covariance at an observed time, then
\begin{equation}
\widehat{\sigma}_y^2
=
\frac{1}{N_{\mathrm{obs}}}
\sum_{(i,k,a)\in\mathcal O}
\left[
\left(y_{i,k,a}-(\vH\vm_{i,k})_a\right)^2
+
(\vH\vS_{i,k}\vH^\top)_{aa}
\right].
\end{equation}
Similarly, writing the IWP transition covariance as
$\vQ(h)=\sigma_{\mathrm{IWP}}^2\vQ_0(h)$, the process scale is updated by
\begin{equation}
\widehat{\sigma}_{\mathrm{IWP}}^2
=
\frac{1}{N_q}
\sum_{i,k}
\tr\!\left[
\vQ_0(h_k)^{-1}
\E\!\left(
\boldsymbol{\epsilon}_{i,k}\boldsymbol{\epsilon}_{i,k}^\top
\right)
\right],
\qquad
\boldsymbol{\epsilon}_{i,k}
=
\vs_{i,k+1}-\vA(h_k)\vs_{i,k}.
\end{equation}

Finally, the residual precision $\lambda_r$ is updated from the posterior moments of the subject-specific field weights.  With Gamma prior parameters $(a_\lambda,b_\lambda)$,
\begin{equation}
\widehat{\lambda}_r
=
\frac{
a_\lambda-1+\frac12 MDl_r
}{
b_\lambda+\frac12
\sum_{i,d}
\left(
\|\E[\vb_{i,d}]\|^2+\tr\Var[\vb_{i,d}]
\right)
}.
\end{equation}

\subsection{Field-space updates and conservative shared-field variants}
\label{app:fieldupdates}
The main-text update uses a plug-in approximation in which the vector field is regressed on smoothed state means. For vector-valued states, one may either retain the full block covariance of the pseudo-derivatives or use a diagonal approximation separately for each output dimension. The latter leads to the regression form in Eq.~\eqref{eq:regression}.

Two conservative variants are useful when updating the shared field. In the \emph{diagonal-inflated} update, the noise covariance for the shared-field regression is augmented by the posterior variance of the local field evaluated at the smoothed inputs:
\begin{equation}
    \boldsymbol{R}_{i,d}^{\mathrm{infl}}
    =
    \boldsymbol{R}_{i,d}
    +
    \diag\!\bigl(
        \Var_q[g_{i,d}(\tilde{\vx}_{i,0})],\dots,
        \Var_q[g_{i,d}(\tilde{\vx}_{i,K_i})]
    \bigr).
\end{equation}
This discourages the population field from explaining variability that could plausibly be attributed to an uncertain local effect.

In the \emph{posterior-collapsed} update, the shared-field
regression accounts for uncertainty in the current local variational posterior
by inflating the pseudo-regression covariance:
\[
\boldsymbol{\Sigma}^{\mathrm{coll}}_{i,d}
=
\boldsymbol{R}_{i,d}
+
\boldsymbol{P}_{r,i,d}\boldsymbol{S}_{i,d}\boldsymbol{P}_{r,i,d}^{\top}.
\]
The shared-field candidate update becomes
\begin{align}
    \vLambda_{0,d}^{\mathrm{new,coll}}
    &=
    \boldsymbol{I}_{l_0}
    +
    \sum_{i=1}^M
    \boldsymbol{P}_{0,i,d}^\top
    \bigl(\boldsymbol{\Sigma}_{i,d}^{\mathrm{coll}}\bigr)^{-1}
    \boldsymbol{P}_{0,i,d},\\
    \veta_{0,d}^{\mathrm{new,coll}}
    &=
    \sum_{i=1}^M
    \boldsymbol{P}_{0,i,d}^\top
    \bigl(\boldsymbol{\Sigma}_{i,d}^{\mathrm{coll}}\bigr)^{-1}
    \bigl(
        \boldsymbol{u}_{i,d}
        -
        \boldsymbol{P}_{r,i,d}\vm_{i,d}
    \bigr).
\end{align}
Hence uncertainty in the local field is propagated into the shared-field update
through the inflated covariance
$\boldsymbol{\Sigma}_{i,d}^{\mathrm{coll}}$, making this update more
conservative when estimating the population field. We used the collapsed updates in the experiments.

\subsection{Population trajectory uncertainty via Monte Carlo pushforward}
\label{app:uncertainty}

A derived quantity of interest is the population trajectory obtained by integrating
the shared population field from a reference initial condition. In the current
implementation, uncertainty over this trajectory is approximated by Monte Carlo
pushforward.

Concretely, we draw samples from the learned uncertainty in the population-level
dynamics and propagate each sample through the ODE to obtain a collection of
population trajectories. Pointwise posterior summaries, such as means, intervals,
NLPD, and CRPS, are then computed from this ensemble of simulated trajectories.

\subsection{Static subject covariates}\label{sec:covariates}
For static covariates $\vc_i$, a convenient extension is to keep the state-space inducing features $\Phi_r(\vx)$ and to correlate the subject-specific coefficient vectors across subjects via a covariate kernel. For each output dimension $d$, stack the local coefficients as
\[
\bm{\beta}_d =
\begin{bmatrix}
\vb_{1,d}\\
\vdots\\
\vb_{M,d}
\end{bmatrix}
\in \R^{Ml_r},
\qquad
[K_c]_{ij}=k_c(c_i,c_j),
\]
and replace the independent prior in Eq.~\eqref{eq:weightprior} by
\[
\bm{\beta}_d \sim \Normal\!\bigl(
\mathbf{0},
K_c\otimes \lambda_r^{-1}\Id_{l_r}
\bigr).
\]
Equivalently,
\[
\Cov(\vb_{i,d},\vb_{j,d}) =
k_c(\vc_i,\vc_j)\,\lambda_r^{-1}\Id_{l_r}.
\]
Retaining $g_{i,d}(\vx)=\Phi_r(\vx)\vb_{i,d}$, we then have
\[
\Cov\bigl[g_{i,d}(\vx),g_{j,d}(\vx')\bigr]
=
\lambda_r^{-1}k_c(\vc_i,\vc_j)\,
\Phi_r(\vx)\Phi_r(\vx')^\top,
\]
which is precisely the finite-rank approximation of the product kernel on augmented inputs,
\[
k_{\mathrm{aug}}\bigl((\vx,c_i),(\vx',c_j)\bigr)
=
k_r(\vx,\vx')\,k_c(\vc_i,\vc_j).
\]
This is usually preferable to introducing subject-specific kernels $k_{r_i}$, because $k_{r_i}$ changes each subject marginally but does not by itself encode how two subjects with similar covariates should borrow strength. The covariate Gram matrix $K_c$ is static, so it can be precomputed once, and no inducing locations in covariate space are required. Prediction for a new subject with covariates $\vc_\star$ only requires the cross-covariance vector $\bigl(k_c(\vc_\star,\vc_1),\dots,k_c(\vc_\star,\vc_M)\bigr)$.

If the covariates are mixed continuous/discrete, $k_c$ can itself be a product or sum of kernels over the continuous and discrete parts. 

\section{Theoretical properties}
\label{app:theory}

\begin{table*}[h!]
\centering
\small
\setlength{\tabcolsep}{5pt}
\renewcommand{\arraystretch}{1.16}
\begin{tabularx}{\textwidth}{
>{\raggedright\arraybackslash}p{3.3cm}
>{\raggedright\arraybackslash}X
>{\raggedright\arraybackslash}X}
\toprule
\rowcolor{theoryhead}
\textbf{Theme} & \textbf{Representative statement} & \textbf{Practical takeaway} \\
\midrule

\rowcolor{theorybandA}
\multicolumn{3}{l}{\textbf{Posterior-mean dynamics for simulation and summary}}\\

Posterior-mean ODEs

({\color{theoryref}Proposition~\ref{prop:wellposed}})
& For \(C^1\) feature maps, the posterior-mean shared and subject-specific fields define well-posed trajectories.
& Justifies summary statistics based on numerically integrating posterior-mean patient and population fields. \\
\midrule

Trajectory deviation bound

({\color{theoryref}Proposition~\ref{prop:trajdev}})
& The gap between a subject-level and population-level posterior-mean trajectory is controlled by the size of the local field.
& Makes the dynamical effect of heterogeneity explicit: small local fields imply trajectories stay close to the population reference. \\

\rowcolor{theorybandB}
\multicolumn{3}{l}{\textbf{Exact structural properties of the hierarchical field model}}\\

Cross-subject covariance

({\color{theoryref}Proposition~\ref{prop:crosscov}})
& The prior covariance splits into shared population terms and within-subject residual terms, yielding an explicit same-input cross-subject correlation.
& Gives the function-space analogue of the classical population \(+\) random-effect decomposition and clarifies the role of \(\lambda_r\). \\
\midrule

Predictive uncertainty split

({\color{theoryref}Propositions~\ref{prop:preduncdec} and~\ref{prop:trajdec}})
& At fixed state inputs, predictive variance splits into population and local parts. For trajectory functionals, exact total-variance and first-order pushforward decompositions exist.
& Provides diagnostics for whether uncertainty is driven primarily by population estimation or by subject-specific adaptation. \\

\rowcolor{theorybandC}
\multicolumn{3}{l}{\textbf{Conditionally exact properties of the Gaussian field subproblem}}\\

Weighted mixed ridge regression

({\color{theoryref}Proposition~\ref{prop:exactridge}; Corollary~\ref{coro:equiv}})
& Once smoothed pseudo-derivatives are fixed, the field update is exactly a heteroscedastic mixed ridge regression in feature space.
& Clarifies what the alternating algorithm solves. \\
\midrule

Conditional uniqueness

({\color{theoryref}Corollary~\ref{corr:unique}})
& The conditional field objective is strictly convex; both the joint optimum and each block update are unique.
& Rules out ambiguity between shared and local terms once the current pseudo-data are fixed. \\
\midrule

Spectral shrinkage

({\color{theoryref}Proposition~\ref{prop:spec}})
& Local adaptation shrinks along weighted singular directions.
& Makes the role of \(\lambda_r\) explicit. \\
\midrule

Monotone information accumulation

({\color{theoryref}Proposition~\ref{prop:monotone}})
& Adding subjects increases shared precision and decreases shared covariance.
& Formalizes borrowing of strength and mirrors the precision-addition structure of MAGMA~\citep{leroy2022magma}. \\

\bottomrule
\end{tabularx}
\caption{Overview of the theoretical properties for the proposed \method.}
\label{tab:theory-roadmap}
\end{table*}

This section develops a set of structural properties of the proposed mixed-effect GP-ODE model. These results clarify three aspects of the framework: the well-posedness and trajectory-level implications of the posterior-mean dynamics, the covariance and uncertainty structure induced by the hierarchical function-space decomposition, and the exact Gaussian geometry of the conditional field-update subproblem once smoothed pseudo-data are fixed. Table~\ref{tab:theory-roadmap} provides an overview of the results. The proofs ultimately draw on standard results from ODEs~\citep{teschl2012ordinary}, Gaussian linear models~\citep{bishop2006pattern}, Gaussian processes~\citep{rasmussen2006gp}, and their state-space representation~\citep{sarkka2019sde}, and matrix analysis/convex optimization~\citep{boyd2004convex}.

\subsection{Posterior-mean dynamics: well-posedness and trajectory-level deviation bounds}
\label{app:theory-wellposed}

Recall that the posterior-mean shared and subject-specific fields are
\begin{equation}
    \bar f_{0,d}(\vx)=\Phi_0(\vx)\vm_{0,d},
    \qquad
    \bar g_{i,d}(\vx)=\Phi_r(\vx)\vm_{i,d},
    \qquad
    \bar f_{i,d}(\vx)=\bar f_{0,d}(\vx)+\bar g_{i,d}(\vx),
\end{equation}
and define the corresponding vector fields
\begin{equation}
    \bar{\vf}_0(\vx)=\bigl[\bar f_{0,d}(\vx)\bigr]_{d=1}^D,
    \qquad
    \bar{\vg}_i(\vx)=\bigl[\bar g_{i,d}(\vx)\bigr]_{d=1}^D,
    \qquad
    \bar{\vf}_i(\vx)=\bar{\vf}_0(\vx)+\bar{\vg}_i(\vx).
\end{equation}
These are the deterministic fields obtained by integrating posterior means in the main text when simulating population and subject-level trajectories.

\begin{proposition}[Well-posedness of posterior-mean dynamics]
\label{prop:wellposed}

Assume that the feature maps $\Phi_0$ and $\Phi_r$ are continuously differentiable on an open domain $\mathcal{D}\subset\mathbb{R}^D$. Then $\bar{\vf}_0$, $\bar{\vg}_i$, and $\bar{\vf}_i$ are continuously differentiable on $\mathcal{D}$ and hence locally Lipschitz. Consequently, for any initial condition $\vx^\star\in\mathcal{D}$, the ODEs
\begin{equation}
    \dot{\vx}(t)=\bar{\vf}_0(\vx(t)),
    \qquad
    \dot{\vx}(t)=\bar{\vf}_i(\vx(t))
\end{equation}
admit unique maximal solutions. In particular, integrating posterior-mean fields to obtain population or subject-level trajectory summaries is mathematically well-defined on any time interval on which the solution remains in $\mathcal{D}$.
\end{proposition}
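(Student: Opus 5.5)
The plan is to reduce everything to the Picard--Lindel\"of theorem together with the standard extension-to-maximal-solutions result \citep{teschl2012ordinary}. The argument has three steps: regularity of the fields, local Lipschitz continuity, and existence and uniqueness of maximal solutions.

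\textbf{Step 1: regularity.} Each coordinate $\bar f_{0,d}(\vx)=\Phi_0(\vx)\vm_{0,d}$ is a fixed linear combination of the $l_0$ components of $\Phi_0$, with the deterministic coefficient vector $\vm_{0,d}$. The same holds for $\bar g_{i,d}$ with $\Phi_r$ and $\vm_{i,d}$.

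Since linear combinations of $C^1$ functions are $C^1$, each coordinate is $C^1$ on $\mathcal{D}$. Stacking the $D$ coordinates shows that $\bar{\vf}_0,\bar{\vg}_i\in C^1(\mathcal{D};\R^D)$. Sums preserve this, so $\bar{\vf}_i\in C^1(\mathcal{D};\R^D)$. The Jacobian is explicit: $\nabla\bar f_{0,d}(\vx)=\nabla\Phi_0(\vx)\vm_{0,d}$, which is continuous.

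\textbf{Step 2: local Lipschitz property.} Fix $\vx\in\mathcal{D}$. Because $\mathcal{D}$ is open, there is a closed ball $\bar B\subset\mathcal{D}$ centered at $\vx$. The ball is compact, so the continuous Jacobian is bounded on it, say by $L$ in operator norm.

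The ball is also convex. The mean value inequality $\|\bar{\vf}(\vx_1)-\bar{\vf}(\vx_2)\|\le\int_0^1\|\nabla\bar{\vf}(\vx_2+s(\vx_1-\vx_2))\|\,ds\,\|\vx_1-\vx_2\|$ therefore gives $L$-Lipschitz continuity on $\bar B$. Convexity is the one point that needs care: it guarantees the segment between $\vx_1$ and $\vx_2$ stays where the Jacobian bound holds.

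\textbf{Step 3: maximal solutions.} The fields are autonomous, continuous, and locally Lipschitz on the open set $\mathcal{D}$. Picard--Lindel\"of then gives a unique local solution through any $\vx^\star\in\mathcal{D}$. Standard uniqueness shows that any two solutions agree on the intersection of their intervals. Taking the union of all solutions therefore yields a unique maximal solution on an open interval containing the initial time.

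The final claim follows from the same result: on any interval where the solution stays in $\mathcal{D}$, the solution coincides with the maximal one, so the integrated trajectory is unambiguous. I expect no real obstacle here, since the statement is essentially a direct invocation of standard ODE theory once Step 1 is established.

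As a remark, for the whitened RBF features used in the experiments, $\Phi_0$ and $\Phi_r$ are smooth on all of $\R^D$. One can therefore take $\mathcal{D}=\R^D$. These features are also bounded, and blow-up would require $\|\vx\|\to\infty$, which bounded fields prevent, so solutions are global. This is worth noting but is not required by the statement.
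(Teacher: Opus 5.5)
Your proof is correct and follows the same route as the paper. Each coordinate is a linear combination of $C^1$ features, so it is $C^1$ and hence locally Lipschitz, and Picard--Lindel\"of then gives unique maximal solutions. You spell out the convex-ball mean-value argument and the maximal-extension step that the paper treats as standard; your remark on global existence for bounded RBF features is also correct, though it goes beyond what the proposition claims.
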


\begin{proof}
Each coordinate of $\bar{\vf}_0$ is of the form $\vx\mapsto \Phi_0(\vx)\vm_{0,d}$, and each coordinate of $\bar{\vg}_i$ is of the form $\vx\mapsto \Phi_r(\vx)\vm_{i,d}$. Since $\Phi_0$ and $\Phi_r$ are $C^1$, so are these coordinate functions, and therefore $\bar{\vf}_0$, $\bar{\vg}_i$, and $\bar{\vf}_i$ are $C^1$. Every $C^1$ map is locally Lipschitz, so the standard Picard--Lindel\"of theorem yields unique maximal solutions for the corresponding ODEs.
\end{proof}

The next result quantifies how the norm of the subject-specific residual field controls the departure between a subject trajectory and the corresponding population trajectory.

\begin{proposition}[Trajectory deviation controlled by the local field]
\label{prop:trajdev}

Let $\bar{\vx}_0:[0,T]\to\mathbb{R}^D$ solve
\begin{equation}
    \dot{\bar{\vx}}_0(t)=\bar{\vf}_0(\bar{\vx}_0(t)),
    \qquad
    \bar{\vx}_0(0)=\vx_0^\star,
\end{equation}
and let $\bar{\vx}_i:[0,T]\to\mathbb{R}^D$ solve
\begin{equation}
    \dot{\bar{\vx}}_i(t)=\bar{\vf}_i(\bar{\vx}_i(t))
    =\bar{\vf}_0(\bar{\vx}_i(t))+\bar{\vg}_i(\bar{\vx}_i(t)),
    \qquad
    \bar{\vx}_i(0)=\vx_i^\star.
\end{equation}
Assume that both trajectories remain in a compact set $\mathcal{K}\subset\mathcal{D}$ on which $\bar{\vf}_0$ is $L$-Lipschitz, and define
\begin{equation}
    \delta_i \coloneqq \sup_{\vx\in\mathcal{K}} \|\bar{\vg}_i(\vx)\|.
\end{equation}
Then, for every $t\in[0,T]$,
\begin{equation}
    \|\bar{\vx}_i(t)-\bar{\vx}_0(t)\|
    \le
    e^{Lt}\|\vx_i^\star-\vx_0^\star\|
    +\int_0^t e^{L(t-s)}\|\bar{\vg}_i(\bar{\vx}_i(s))\|\,ds
    \le
    e^{Lt}\|\vx_i^\star-\vx_0^\star\|
    +\frac{\delta_i}{L}(e^{Lt}-1)
    \label{eq:traj_dev_bound}
\end{equation}
when $L>0$, with the usual limit $\|\vx_i^\star-\vx_0^\star\|+\delta_i t$ when $L=0$. In particular, if $\vx_i^\star=\vx_0^\star$, then
\begin{equation}
    \|\bar{\vx}_i(t)-\bar{\vx}_0(t)\|
    \le
    \frac{\delta_i}{L}(e^{Lt}-1).
\end{equation}
\end{proposition}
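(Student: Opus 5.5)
The approach is a standard Grönwall argument applied to the error $\ve(t)\coloneqq\bar{\vx}_i(t)-\bar{\vx}_0(t)$.

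\textbf{Integral identity.} Both trajectories are $C^1$ by Proposition~\ref{prop:wellposed}. Subtracting the two ODEs and integrating from $0$ to $t$ gives
\[
\ve(t)=\ve(0)+\int_0^t\bigl[\bar{\vf}_0(\bar{\vx}_i(s))-\bar{\vf}_0(\bar{\vx}_0(s))\bigr]ds+\int_0^t\bar{\vg}_i(\bar{\vx}_i(s))\,ds .
\]
Both trajectories stay in $\mathcal{K}$, and $\bar{\vf}_0$ is $L$-Lipschitz there. Taking norms therefore yields
\[
u(t)\le u(0)+\int_0^t \beta(s)\,ds+L\int_0^t u(s)\,ds,
\]
where $u(t)=\|\ve(t)\|$ and $\beta(s)=\|\bar{\vg}_i(\bar{\vx}_i(s))\|$. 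Note that $\beta$ is continuous, since $\bar{\vg}_i$ is $C^1$.

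\textbf{Sharp form of Grönwall.} To reach the first, convolution-type bound, I would apply the integral Grönwall inequality with nondecreasing forcing $a(t)=u(0)+\int_0^t\beta$. This gives
\[
u(t)\le a(t)+L\int_0^t a(s)e^{L(t-s)}ds .
\]
Integrating by parts, using $\frac{d}{ds}\bigl(-e^{L(t-s)}\bigr)=Le^{L(t-s)}$ and $a'=\beta$, simplifies this to
\[
u(t)\le e^{Lt}u(0)+\int_0^t e^{L(t-s)}\beta(s)\,ds .
\]

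A cleaner alternative avoids the integration by parts. Set $w(t)=u(0)+\int_0^t(\beta+Lu)$, so that $u\le w$ and $w'=\beta+Lu\le\beta+Lw$. Multiplying by $e^{-Lt}$ gives $(e^{-Lt}w)'\le e^{-Lt}\beta$, and integrating yields the same bound directly. I would use this version, since it needs no differentiability of the norm $\|\ve(t)\|$. That non-smoothness is exactly what this route sidesteps.

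\textbf{Second inequality.} Bound $\beta(s)\le\delta_i$, which holds because $\bar{\vx}_i(s)\in\mathcal{K}$; $\delta_i$ is finite because $\bar{\vg}_i$ is continuous on the compact set $\mathcal{K}$. Then compute
\[
\int_0^t e^{L(t-s)}ds=\frac{e^{Lt}-1}{L}\quad\text{for } L>0 .
\]
When $L=0$, the integral equals $t$ directly, which is the stated limit. Setting $\vx_i^\star=\vx_0^\star$ kills the first term and gives the particular case.

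\textbf{Main obstacle.} There is no deep step here. The only point needing care is the Grönwall step itself: we must avoid differentiating $\|\ve\|$, and the auxiliary-function trick above handles this. We must also confirm that $\delta_i$ is finite, which is guaranteed by compactness of $\mathcal{K}$ together with continuity of $\bar{\vg}_i$ from Proposition~\ref{prop:wellposed}.
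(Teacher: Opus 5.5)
Your proof is correct and uses the same Grönwall argument as the paper: subtract the two ODEs, bound the $\bar{\vf}_0$ difference by its Lipschitz constant on $\mathcal{K}$, and integrate against $e^{L(t-s)}$. The only difference is that the paper writes the differential inequality for $\frac{d}{dt}\|\Delta_i(t)\|$ directly, whereas your auxiliary function $w$ works with the integral form, which avoids differentiating the norm where $\Delta_i$ vanishes and so makes that step slightly more careful.
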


\begin{proof}
Let $\Delta_i(t)=\bar{\vx}_i(t)-\bar{\vx}_0(t)$. Then
\begin{align}
    \dot{\Delta}_i(t)
    &=
    \bar{\vf}_0(\bar{\vx}_i(t))-\bar{\vf}_0(\bar{\vx}_0(t))
    +\bar{\vg}_i(\bar{\vx}_i(t)).
\end{align}
Taking norms and using the Lipschitz property of $\bar{\vf}_0$ gives
\begin{equation}
    \frac{d}{dt}\|\Delta_i(t)\|
    \le
    L\|\Delta_i(t)\|+\|\bar{\vg}_i(\bar{\vx}_i(t))\|.
\end{equation}
Applying Gr\"onwall's inequality yields
\begin{equation}
    \|\Delta_i(t)\|
    \le
    e^{Lt}\|\Delta_i(0)\|
    +
    \int_0^t e^{L(t-s)}\|\bar{\vg}_i(\bar{\vx}_i(s))\|\,ds,
\end{equation}
and the uniform bound by $\delta_i$ gives Eq.~\eqref{eq:traj_dev_bound}. 
\end{proof}

Propositions~\ref{prop:wellposed} and~\ref{prop:trajdev} justify the common practice of integrating posterior-mean shared and subject-specific fields to summarize population and patient-level trajectories. In particular, the second result shows that the magnitude of the learned local field directly controls how far subject-specific posterior-mean trajectories can depart from the population one.

\subsection{Hierarchical field covariance and predictive uncertainty decompositions}
\label{app:theory-cov}

The next statements are exact consequences of the hierarchical function-space decomposition used by the model. They do not depend on the collocation approximation or on the alternating inference scheme.

\begin{proposition}[Cross-subject covariance and same-input cross-subject correlation]
\label{prop:crosscov}
Fix an output dimension $d$. Under the prior field decomposition
\begin{equation}
    f_{i,d}(\vx)=\Phi_0(\vx)\vb_{0,d}+\Phi_r(\vx)\vb_{i,d},
    \qquad
    \vb_{0,d}\sim \Normal(\mathbf{0},\boldsymbol{I}_{l_0}),
    \qquad
    \vb_{i,d}\sim \Normal(\mathbf{0},\lambda_r^{-1}\boldsymbol{I}_{l_r}),
\end{equation}
with all coefficients independent across subjects and output dimensions, define
\begin{equation}
    \kappa_0(\vx,\vx') \coloneqq \Phi_0(\vx)\Phi_0(\vx')^\top,
    \qquad
    \kappa_r(\vx,\vx') \coloneqq \lambda_r^{-1}\Phi_r(\vx)\Phi_r(\vx')^\top.
\end{equation}
Then for any subjects $i,j$, outputs $d,d'$, and inputs $\vx,\vx'$,
\begin{equation}
    \Cov\!\bigl(f_{i,d}(\vx),f_{j,d'}(\vx')\bigr)
    =
    \mathbb{I}\{d=d'\}
    \Bigl(
        \kappa_0(\vx,\vx')
        +
        \mathbb{I}\{i=j\}\kappa_r(\vx,\vx')
    \Bigr).
    \label{eq:cross_subject_cov}
\end{equation}
In particular, for $i\neq j$ and provided the denominator is positive,
\begin{equation}
    \mathrm{Corr}\!\bigl(f_{i,d}(\vx),f_{j,d}(\vx)\bigr)
    =
    \frac{\kappa_0(\vx,\vx)}
         {\kappa_0(\vx,\vx)+\kappa_r(\vx,\vx)}
    =
    \frac{\Phi_0(\vx)\Phi_0(\vx)^\top}
         {\Phi_0(\vx)\Phi_0(\vx)^\top+\lambda_r^{-1}\Phi_r(\vx)\Phi_r(\vx)^\top}.
    \label{eq:cross_subject_corr}
\end{equation}
\end{proposition}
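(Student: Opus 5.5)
The plan is to compute the covariance directly by bilinearity, using only the independence structure of the Gaussian coefficients. Since all coefficient vectors have mean zero, each $f_{i,d}(\vx)$ is a zero-mean linear functional of $(\vb_{0,d},\vb_{i,d})$. I would expand
\[
\Cov\!\bigl(f_{i,d}(\vx),f_{j,d'}(\vx')\bigr)
=\Phi_0(\vx)\Cov(\vb_{0,d},\vb_{0,d'})\Phi_0(\vx')^\top
+\Phi_0(\vx)\Cov(\vb_{0,d},\vb_{j,d'})\Phi_r(\vx')^\top
+\Phi_r(\vx)\Cov(\vb_{i,d},\vb_{0,d'})\Phi_0(\vx')^\top
+\Phi_r(\vx)\Cov(\vb_{i,d},\vb_{j,d'})\Phi_r(\vx')^\top ,
\]
which is valid because the feature maps are deterministic row vectors at fixed inputs.

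Next, I would evaluate each block using the stated independence assumptions. The shared and local coefficients are independent, so both mixed terms vanish. For the shared term, $\Cov(\vb_{0,d},\vb_{0,d'})=\mathbb{I}\{d=d'\}\boldsymbol{I}_{l_0}$, since distinct output dimensions are independent and the prior covariance is the identity. This gives $\mathbb{I}\{d=d'\}\kappa_0(\vx,\vx')$. For the local term, $\Cov(\vb_{i,d},\vb_{j,d'})=\mathbb{I}\{i=j\}\mathbb{I}\{d=d'\}\lambda_r^{-1}\boldsymbol{I}_{l_r}$, which gives $\mathbb{I}\{d=d'\}\mathbb{I}\{i=j\}\kappa_r(\vx,\vx')$. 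Collecting the terms yields Eq.~\eqref{eq:cross_subject_cov}. This is the finite-rank counterpart of Eq.~\eqref{eq:fieldcov}.

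For the correlation, I would specialize to $d=d'$ and $\vx=\vx'$. When $i\neq j$, the numerator is $\kappa_0(\vx,\vx)$. Setting $i=j$ gives $\Var(f_{i,d}(\vx))=\kappa_0(\vx,\vx)+\kappa_r(\vx,\vx)$, which is the same for every subject. The product of the two standard deviations is therefore exactly this common variance, and dividing gives Eq.~\eqref{eq:cross_subject_corr}. The positivity assumption on the denominator is needed only to make the correlation well-defined.

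No step here is a genuine obstacle. The only point requiring care is bookkeeping: the independence across pairs $(i,d)\neq(j,d')$ and between shared and local blocks must be applied correctly so that the cross terms vanish. One should also note that the equal subject variances make the geometric-mean denominator collapse to a single variance.
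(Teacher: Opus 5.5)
Your proposal is correct and follows the paper's proof essentially line for line. The paper expands $\E[f_{i,d}(\vx)f_{j,d'}(\vx')]$ using zero means, which is your bilinear covariance expansion. It then drops the shared--local cross terms by independence, reads off $\mathbb{I}\{d=d'\}\boldsymbol{I}_{l_0}$ and $\mathbb{I}\{d=d'\}\mathbb{I}\{i=j\}\lambda_r^{-1}\boldsymbol{I}_{l_r}$ for the remaining blocks, and obtains the correlation from the common marginal variance $\kappa_0(\vx,\vx)+\kappa_r(\vx,\vx)$, exactly as you do.
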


\begin{proof}
Because all coefficient priors are centered, $\E[f_{i,d}(\vx)]=0$, so
\begin{equation}
    \Cov\!\bigl(f_{i,d}(\vx),f_{j,d'}(\vx')\bigr)
    =
    \E\!\bigl[f_{i,d}(\vx)f_{j,d'}(\vx')\bigr].
\end{equation}
Expanding both fields gives
\begin{align}
    \E\!\bigl[f_{i,d}(\vx)f_{j,d'}(\vx')\bigr]
    &=
    \E\!\Big[
        \bigl(\Phi_0(\vx)\vb_{0,d}+\Phi_r(\vx)\vb_{i,d}\bigr)
        \bigl(\Phi_0(\vx')\vb_{0,d'}+\Phi_r(\vx')\vb_{j,d'}\bigr)
    \Big]
    \nonumber\\
    &=
    \Phi_0(\vx)\,
    \E[\vb_{0,d}\vb_{0,d'}^\top]\,
    \Phi_0(\vx')^\top
    +
    \Phi_r(\vx)\,
    \E[\vb_{i,d}\vb_{j,d'}^\top]\,
    \Phi_r(\vx')^\top,
\end{align}
because the cross terms vanish by independence and zero mean. Since the shared coefficients are independent across outputs,
\begin{equation}
    \E[\vb_{0,d}\vb_{0,d'}^\top]
    =
    \mathbb{I}\{d=d'\}\boldsymbol{I}_{l_0}.
\end{equation}
Likewise, for the subject-specific coefficients,
\begin{equation}
    \E[\vb_{i,d}\vb_{j,d'}^\top]
    =
    \mathbb{I}\{d=d'\}\mathbb{I}\{i=j\}\lambda_r^{-1}\boldsymbol{I}_{l_r}.
\end{equation}
Substituting these identities yields Eq.~\eqref{eq:cross_subject_cov}. For $i\neq j$ and common output $d$, the covariance at the same input is $\kappa_0(\vx,\vx)$, whereas the marginal variance of each subject field is
\begin{equation}
    \Var\!\bigl(f_{i,d}(\vx)\bigr)
    =
    \kappa_0(\vx,\vx)+\kappa_r(\vx,\vx).
\end{equation}
Dividing covariance by the geometric mean of these equal marginal variances gives Eq.~\eqref{eq:cross_subject_corr}.
\end{proof}

Equation~\eqref{eq:cross_subject_corr} provides a direct interpretation of the mixed-effect hierarchy: at any fixed state $\vx$, cross-subject correlation is entirely due to the shared field and is damped by the size of the local residual variance.

\begin{proposition}[Predictive uncertainty decomposition in field space]
\label{prop:preduncdec}
Under the factorized Gaussian posterior
\begin{equation}
    q(\vb_{0,d})=\Normal(\vm_{0,d},\vS_{0,d}),
    \qquad
    q(\vb_{i,d})=\Normal(\vm_{i,d},\vS_{i,d}),
\end{equation}
the latent field value $f_{i,d}(\vx)$ satisfies
\begin{equation}
    f_{i,d}(\vx)\mid q
    \sim
    \Normal\!\bigl(\mu_{i,d}^{f}(\vx),v{i,d}^{2,f}(\vx)\bigr),
\end{equation}
with
\begin{align}
    \mu_{i,d}^{f}(\vx)
    &=
    \Phi_0(\vx)\vm_{0,d}+\Phi_r(\vx)\vm_{i,d},\\
    v_{i,d}^{f}(\vx)
    &=
    v^{\mathrm{pop}}_{i,d}(\vx)+v^{\mathrm{loc}}_{i,d}(\vx),
\end{align}
where
\begin{equation}
    v^{\mathrm{pop}}_{i,d}(\vx)
    \coloneqq
    \Phi_0(\vx)\vS_{0,d}\Phi_0(\vx)^\top,
    \qquad
    v^{\mathrm{loc}}_{i,d}(\vx)
    \coloneqq
    \Phi_r(\vx)\vS_{i,d}\Phi_r(\vx)^\top.
\end{equation}
A natural pointwise diagnostic is the local uncertainty fraction
\begin{equation}
    \rho^{\mathrm{loc}}_{i,d}(\vx)
    \coloneqq
    \frac{v^{\mathrm{loc}}_{i,d}(\vx)}
         {v^{\mathrm{pop}}_{i,d}(\vx)+v^{\mathrm{loc}}_{i,d}(\vx)},
    \qquad
    \rho^{\mathrm{pop}}_{i,d}(\vx)=1-\rho^{\mathrm{loc}}_{i,d}(\vx).
    \label{eq:field_uncertainty_diag}
\end{equation}
\end{proposition}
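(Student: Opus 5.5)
Since $f_{i,d}(\vx)=\Phi_0(\vx)\vb_{0,d}+\Phi_r(\vx)\vb_{i,d}$ is linear in the coefficients, the proof is a direct computation of Gaussian linear-functional moments under the mean-field posterior of Eq.~\eqref{eq:variational}. The plan has three steps.

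First, fix $\vx$ and $d$ and regard $\Phi_0(\vx)\in\R^{1\times l_0}$ and $\Phi_r(\vx)\in\R^{1\times l_r}$ as deterministic row vectors. Under $q$, the factorization in Eq.~\eqref{eq:variational} makes $\vb_{0,d}$ and $\vb_{i,d}$ independent Gaussians. Hence the stacked vector $(\vb_{0,d}^\top,\vb_{i,d}^\top)^\top$ is Gaussian, with mean $(\vm_{0,d}^\top,\vm_{i,d}^\top)^\top$ and block-diagonal covariance $\diag(\vS_{0,d},\vS_{i,d})$.

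Second, $f_{i,d}(\vx)=[\Phi_0(\vx)\;\Phi_r(\vx)]\,(\vb_{0,d}^\top,\vb_{i,d}^\top)^\top$ is a linear functional of this Gaussian vector, so it is univariate Gaussian. Linearity of expectation gives the mean $\mu^f_{i,d}(\vx)$. The variance is the quadratic form $[\Phi_0\;\Phi_r]\,\diag(\vS_{0,d},\vS_{i,d})\,[\Phi_0\;\Phi_r]^\top$. Because the covariance is block diagonal, the cross term $2\,\Phi_0(\vx)\Cov_q(\vb_{0,d},\vb_{i,d})\Phi_r(\vx)^\top$ vanishes, leaving exactly $v^{\mathrm{pop}}_{i,d}(\vx)+v^{\mathrm{loc}}_{i,d}(\vx)$. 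This matches Eq.~\eqref{eq:fieldmoments}.

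Third, the diagnostic in Eq.~\eqref{eq:field_uncertainty_diag} is a definition, so it needs only well-definedness. Both $v^{\mathrm{pop}}_{i,d}$ and $v^{\mathrm{loc}}_{i,d}$ are nonnegative, since $\vS_{0,d}$ and $\vS_{i,d}$ are covariance matrices and hence positive semidefinite. Therefore, whenever the total variance is positive, $\rho^{\mathrm{loc}}_{i,d}\in[0,1]$ and $\rho^{\mathrm{pop}}_{i,d}=1-\rho^{\mathrm{loc}}_{i,d}$.

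The only point needing care is the vanishing cross-covariance, which relies on the mean-field independence between shared and local coefficients. Under the alternating updates, or the collapsed variant of Appendix~\ref{app:fieldupdates}, the posteriors are still maintained as separate factors, so the decomposition holds exactly for the factorized $q$. It would fail only for a joint posterior over $(\vb_{0,d},\vb_{i,d})$, where a covariance term would have to be added. I would note this caveat after the proof rather than treat it as an obstacle.
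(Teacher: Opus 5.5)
Your proposal is correct and follows essentially the same route as the paper: under the factorized $q$, $f_{i,d}(\vx)$ is an affine function of two independent Gaussian coefficient vectors, so it is Gaussian with mean given by linearity, and its variance is the sum of the two quadratic forms because the cross-covariance vanishes. Your additional remarks are sound points the paper leaves implicit, namely that $v^{\mathrm{pop}}_{i,d}$ and $v^{\mathrm{loc}}_{i,d}$ are nonnegative (so $\rho^{\mathrm{loc}}_{i,d}\in[0,1]$ when the total variance is positive) and that the decomposition relies on mean-field independence.
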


\begin{proof}
The field value is an affine function of two independent Gaussian coefficient vectors,
\begin{equation}
    f_{i,d}(\vx)
    =
    \Phi_0(\vx)\vb_{0,d}+\Phi_r(\vx)\vb_{i,d},
\end{equation}
so it is Gaussian under $q$. Its mean is obtained by linearity of expectation, its variance is the sum of the two quadratic forms because the posterior is factorized between shared and local coefficients.
\end{proof}

The additive variance decomposition in Proposition~\ref{prop:preduncdec} is exact at fixed state inputs. After pushing uncertainty through the nonlinear ODE solution map, the analogous decomposition is no longer additive in general, but an exact law-of-total-variance identity remains available.

\begin{proposition}[Exact trajectory-space variance decomposition]
\label{prop:trajdec}
For each subject $i$, let
\begin{equation}
    \boldsymbol{B}_0
    \coloneqq
    \bigl(\vb_{0,1}^\top,\dots,\vb_{0,D}^\top\bigr)^\top
    \in \mathbb{R}^{D l_0},
    \qquad
    \boldsymbol{B}_i
    \coloneqq
    \bigl(\vb_{i,1}^\top,\dots,\vb_{i,D}^\top\bigr)^\top
    \in \mathbb{R}^{D l_r},
\end{equation}
and let $\vx_i(\cdot;\boldsymbol{B}_0,\boldsymbol{B}_i,\vx_i^\star)$ denote the trajectory obtained by integrating the subject-specific field from initial condition $\vx_i^\star$. For any square-integrable trajectory functional
\[
\mathcal{H}_i=\mathcal{H}\!\bigl(\vx_i(\cdot;\boldsymbol{B}_0,\boldsymbol{B}_i,\vx_i^\star)\bigr),
\]
the posterior variance satisfies
\begin{equation}
    \Var_q(\mathcal{H}_i)
    =
    \Var_{q(\boldsymbol{B}_0)}
    \!\Big(
        \E_{q(\boldsymbol{B}_i)}[\mathcal{H}_i\mid \boldsymbol{B}_0]
    \Big)
    +
    \E_{q(\boldsymbol{B}_0)}
    \!\Big[
        \Var_{q(\boldsymbol{B}_i)}(\mathcal{H}_i\mid \boldsymbol{B}_0)
    \Big].
    \label{eq:traj_ltv}
\end{equation}
A natural trajectory-level analogue of Eq.~\eqref{eq:field_uncertainty_diag} is therefore
\begin{equation}
    \rho^{\mathrm{loc}}_i[\mathcal{H}]
    \coloneqq
    \frac{
        \E_{q(\boldsymbol{B}_0)}
        \!\big[
            \Var_{q(\boldsymbol{B}_i)}(\mathcal{H}_i\mid \boldsymbol{B}_0)
        \big]
    }
    {
        \Var_q(\mathcal{H}_i)
    },
    \qquad
    \rho^{\mathrm{pop}}_i[\mathcal{H}]=1-\rho^{\mathrm{loc}}_i[\mathcal{H}].
    \label{eq:traj_uncertainty_diag}
\end{equation}
This applies in particular to pointwise trajectory coordinates $\mathcal{H}_i = e_a^\top \vx_i(t)$ and final-state summaries.
\end{proposition}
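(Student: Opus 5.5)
The identity in Eq.~\eqref{eq:traj_ltv} is the law of total variance, applied to the random variable $\mathcal{H}_i$ under the product law $q(\boldsymbol{B}_0)q(\boldsymbol{B}_i)$ and conditioned on $\boldsymbol{B}_0$. The work lies in checking that $\mathcal{H}_i$ is a well-defined, measurable, square-integrable function of the coefficients, and that the conditional moments are indeed those computed under $q(\boldsymbol{B}_i)$.

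\textbf{Step 1: $\mathcal{H}_i$ is a random variable.}
For each realisation $(\boldsymbol{B}_0,\boldsymbol{B}_i)$, the subject field $\vx\mapsto[\Phi_0(\vx)\vb_{0,d}+\Phi_r(\vx)\vb_{i,d}]_{d=1}^D$ is $C^1$ whenever the feature maps are $C^1$. Proposition~\ref{prop:wellposed}, applied with generic coefficients in place of posterior means, then gives a unique maximal solution $\vx_i(\cdot;\boldsymbol{B}_0,\boldsymbol{B}_i,\vx_i^\star)$.

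Continuous dependence of ODE solutions on parameters makes the solution map measurable in $(\boldsymbol{B}_0,\boldsymbol{B}_i)$ wherever the solution exists on the time horizon of interest. Composing with a measurable functional $\mathcal{H}$ yields a random variable. Square-integrability is taken as the stated hypothesis. This also handles the caveat about solutions leaving $\mathcal{D}$: I will state that $\mathcal{H}_i$ is assumed defined $q$-almost surely, for instance by the convention in the proposition's hypotheses.

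\textbf{Step 2: conditional moments.}
Under the mean-field factorisation in Eq.~\eqref{eq:variational}, $\boldsymbol{B}_0$ and $\boldsymbol{B}_i$ are independent under $q$, and the initial condition $\vx_i^\star$ is held fixed. The regular conditional law of $\boldsymbol{B}_i$ given $\boldsymbol{B}_0$ is therefore $q(\boldsymbol{B}_i)$ itself. Hence
\[
\E_q[\mathcal{H}_i\mid\boldsymbol{B}_0]=\int \mathcal{H}_i\,dq(\boldsymbol{B}_i),
\]
and the analogous expression holds for the conditional variance. Fubini's theorem applies because $\mathcal{H}_i\in L^2\subset L^1$. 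This justifies the notation $\E_{q(\boldsymbol{B}_i)}[\cdot\mid\boldsymbol{B}_0]$.

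\textbf{Step 3: total variance.}
Write $m(\boldsymbol{B}_0)=\E_{q(\boldsymbol{B}_i)}[\mathcal{H}_i\mid\boldsymbol{B}_0]$. Expand
\[
\E[(\mathcal{H}_i-\E\mathcal{H}_i)^2]=\E[(\mathcal{H}_i-m)^2]+\E[(m-\E\mathcal{H}_i)^2]+2\,\E[(\mathcal{H}_i-m)(m-\E\mathcal{H}_i)].
\]
The cross term vanishes by the tower property, since $m-\E\mathcal{H}_i$ is $\boldsymbol{B}_0$-measurable and $\E[\mathcal{H}_i-m\mid\boldsymbol{B}_0]=0$. The first term equals $\E_{q(\boldsymbol{B}_0)}[\Var_{q(\boldsymbol{B}_i)}(\mathcal{H}_i\mid\boldsymbol{B}_0)]$, and the second equals $\Var_{q(\boldsymbol{B}_0)}(m)$. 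This gives Eq.~\eqref{eq:traj_ltv}.

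\textbf{Step 4: the diagnostic.}
Both terms in Eq.~\eqref{eq:traj_ltv} are nonnegative, so whenever $\Var_q(\mathcal{H}_i)>0$ the ratio in Eq.~\eqref{eq:traj_uncertainty_diag} lies in $[0,1]$ and $\rho^{\mathrm{pop}}_i$ is its complement. Pointwise coordinates $e_a^\top\vx_i(t)$ and final-state summaries are continuous evaluations of the solution map, so the result applies to them once they are square-integrable.

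\textbf{Main obstacle.}
I expect the only delicate point to be Step 1. Gaussian coefficients are unbounded, so with some posterior probability the sampled field may drive the solution out of $\mathcal{D}$ before the horizon, or cause finite-time blow-up. I would handle this by making explicit that the proposition presumes $\mathcal{H}_i$ is defined almost surely and lies in $L^2(q)$. Once that is granted, the rest is the standard variance identity.
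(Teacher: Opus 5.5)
Your proposal is correct and takes the same route as the paper, whose proof is the single observation that Eq.~\eqref{eq:traj_ltv} is the law of total variance conditioned on $\boldsymbol{B}_0$, with $\boldsymbol{B}_i$ supplying the inner randomness. Your extra steps make explicit what the paper leaves implicit: measurability of the solution map, assuming $\mathcal{H}_i$ is defined almost surely and lies in $L^2(q)$, and using the mean-field independence of $\boldsymbol{B}_0$ and $\boldsymbol{B}_i$ to identify the conditional law with $q(\boldsymbol{B}_i)$. These checks are sound.
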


\begin{proof}
Equation~\eqref{eq:traj_ltv} is the law of total variance with conditioning variable $\boldsymbol{B}_0$ and inner randomness coming from $\boldsymbol{B}_i$.
\end{proof}

For computation, it is useful to complement the exact identity above with a first-order pushforward approximation. Let
\begin{equation}
    \boldsymbol{M}_0\coloneqq \E_q[\boldsymbol{B}_0],
    \qquad
    \boldsymbol{M}_i\coloneqq \E_q[\boldsymbol{B}_i],
\end{equation}
and let
\begin{equation}
    \boldsymbol{\Sigma}_{B_0}
    \coloneqq
    \operatorname{blockdiag}(\vS_{0,1},\dots,\vS_{0,D})
    \in \mathbb{R}^{D l_0 \times D l_0},~
    \boldsymbol{\Sigma}_{B_i}
    \coloneqq
    \operatorname{blockdiag}(\vS_{i,1},\dots,\vS_{i,D})
    \in \mathbb{R}^{D l_r \times D l_r}.
\end{equation}
For a fixed time $t$ and coordinate $a$, define the solution map
\begin{equation}
    \psi_{i,a,t}:\mathbb{R}^{D l_0}\times\mathbb{R}^{D l_r}\to\mathbb{R},
    \qquad
    \psi_{i,a,t}(\boldsymbol{B}_0,\boldsymbol{B}_i)
    \coloneqq
    e_a^\top \vx_i(t;\boldsymbol{B}_0,\boldsymbol{B}_i,\vx_i^\star),
\end{equation}
and its sensitivity rows at the posterior mean,
\begin{equation}
    \boldsymbol{J}_{0,i,a}(t)
    \coloneqq
    \nabla_{\boldsymbol{B}_0}\psi_{i,a,t}(\boldsymbol{M}_0,\boldsymbol{M}_i)
    \in \mathbb{R}^{1\times D l_0},
    \qquad
    \boldsymbol{J}_{r,i,a}(t)
    \coloneqq
    \nabla_{\boldsymbol{B}_i}\psi_{i,a,t}(\boldsymbol{M}_0,\boldsymbol{M}_i)
    \in \mathbb{R}^{1\times D l_r}.
\end{equation}
Then the first-order approximation
\begin{equation}
    \psi_{i,a,t}(\boldsymbol{B}_0,\boldsymbol{B}_i)
    \approx
    \psi_{i,a,t}(\boldsymbol{M}_0,\boldsymbol{M}_i)
    +
    \boldsymbol{J}_{0,i,a}(t)(\boldsymbol{B}_0-\boldsymbol{M}_0)
    +
    \boldsymbol{J}_{r,i,a}(t)(\boldsymbol{B}_i-\boldsymbol{M}_i)
\end{equation}
yields the additive variance approximation
\begin{equation}
    \Var_q\!\bigl(e_a^\top \vx_i(t)\bigr)
    \approx
    \boldsymbol{J}_{0,i,a}(t)\boldsymbol{\Sigma}_{B_0}\boldsymbol{J}_{0,i,a}(t)^\top
    +
    \boldsymbol{J}_{r,i,a}(t)\boldsymbol{\Sigma}_{B_i}\boldsymbol{J}_{r,i,a}(t)^\top.
    \label{eq:traj_var_linearized}
\end{equation}
This gives a computable pointwise trajectory analogue of Eq.~\eqref{eq:field_uncertainty_diag}:
\begin{equation}
    \rho^{\mathrm{loc}}_{i,a}(t)
    \approx
    \frac{
        \boldsymbol{J}_{r,i,a}(t)\boldsymbol{\Sigma}_{B_i}\boldsymbol{J}_{r,i,a}(t)^\top
    }{
        \boldsymbol{J}_{0,i,a}(t)\boldsymbol{\Sigma}_{B_0}\boldsymbol{J}_{0,i,a}(t)^\top
        +
        \boldsymbol{J}_{r,i,a}(t)\boldsymbol{\Sigma}_{B_i}\boldsymbol{J}_{r,i,a}(t)^\top
    }.
\label{eq:vartrajapprox}
\end{equation}

\subsection{The conditional field update as weighted mixed ridge regression}
\label{app:theory-ridge}

The remaining statements are exact for the Gaussian field subproblem obtained once the current smoothed pseudo-derivatives are fixed. In that regime, the field update is exactly a weighted mixed ridge regression.

Fix an output dimension $d$, and consider the pseudo-regression model of Section~\ref{sec:fieldupdate},
\begin{equation}
    \boldsymbol{u}_{i,d}
    =
    \boldsymbol{P}_{0,i,d}\vb_{0,d}
    +
    \boldsymbol{P}_{r,i,d}\vb_{i,d}
    +
    \boldsymbol{\varepsilon}^{\mathrm{reg}}_{i,d},
    \qquad
    \boldsymbol{\varepsilon}^{\mathrm{reg}}_{i,d}\sim \Normal(\mathbf{0},\boldsymbol{R}_{i,d}).
\end{equation}
Introduce the stacked coefficient vector
\begin{equation}
    \boldsymbol{\beta}_d
    \coloneqq
    \bigl(
        \vb_{0,d}^\top,
        \vb_{1,d}^\top,
        \dots,
        \vb_{M,d}^\top
    \bigr)^\top
    \in
    \mathbb{R}^{l_0+M l_r}.
\end{equation}
and the block-diagonal prior precision
\begin{equation}
    \boldsymbol{\Omega}_d
    \coloneqq
    \operatorname{blockdiag}
    \bigl(
        \boldsymbol{I}_{l_0},
        \lambda_r\boldsymbol{I}_{l_r},
        \dots,
        \lambda_r\boldsymbol{I}_{l_r}
    \bigr)
    \in
    \mathbb{R}^{(l_0+M l_r)\times(l_0+M l_r)}.
\end{equation}
For each subject $i$, let $\boldsymbol{A}_{i,d}$ denote the block row that injects $\boldsymbol{P}_{0,i,d}$ in the shared block and $\boldsymbol{P}_{r,i,d}$ in the $i$th local block:
\begin{equation}
    \boldsymbol{A}_{i,d}
    =
    \bigl[
        \boldsymbol{P}_{0,i,d}
        \;\;
        \mathbf{0}
        \;\cdots\;
        \mathbf{0}
        \;\;
        \boldsymbol{P}_{r,i,d}
        \;\;
        \mathbf{0}
        \;\cdots\;
        \mathbf{0}
    \bigr] \in
    \mathbb{R}^{(K_i+1)\times(l_0+M l_r)}.
\end{equation}
Then the joint pseudo-regression model reads
\begin{equation}
    \boldsymbol{u}_{i,d}=\boldsymbol{A}_{i,d}\boldsymbol{\beta}_d+\boldsymbol{\varepsilon}^{\mathrm{reg}}_{i,d}.
\end{equation}

\begin{proposition}[Exact weighted mixed-ridge form]
\label{prop:exactridge}
Conditionally on the smoothed pseudo-data, the joint posterior over $\boldsymbol{\beta}_d$ is Gaussian and satisfies
\begin{equation}
    p(\boldsymbol{\beta}_d\mid \{\boldsymbol{u}_{i,d}\}_{i=1}^M)
    \propto
    \exp\!\bigl(-\mathcal{J}_d(\boldsymbol{\beta}_d)\bigr),
\end{equation}
where, up to an additive constant,
\begin{equation}
    \mathcal{J}_d(\boldsymbol{\beta}_d)
    =
    \frac12
    \sum_{i=1}^M
    \left\|
        \boldsymbol{R}_{i,d}^{-1/2}
        \bigl(
            \boldsymbol{u}_{i,d}-\boldsymbol{A}_{i,d}\boldsymbol{\beta}_d
        \bigr)
    \right\|^2
    +
    \frac12
    \boldsymbol{\beta}_d^\top \boldsymbol{\Omega}_d \boldsymbol{\beta}_d.
    \label{eq:joint_weighted_ridge}
\end{equation}
Equivalently, in block form,
\begin{equation}
    \mathcal{J}_d(\vb_{0,d},\{\vb_{i,d}\})
    =
    \frac12
    \sum_{i=1}^M
    \left\|
        \boldsymbol{R}_{i,d}^{-1/2}
        \bigl(
            \boldsymbol{u}_{i,d}
            -
            \boldsymbol{P}_{0,i,d}\vb_{0,d}
            -
            \boldsymbol{P}_{r,i,d}\vb_{i,d}
        \bigr)
    \right\|^2
    +
    \frac12 \|\vb_{0,d}\|^2
    +
    \frac{\lambda_r}{2}\sum_{i=1}^M \|\vb_{i,d}\|^2.
    \label{eq:joint_weighted_ridge_blocks}
\end{equation}
Thus the field update is exactly a weighted mixed ridge regression in feature space.
\end{proposition}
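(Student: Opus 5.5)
The plan is a direct Bayes' rule computation in the stacked linear-Gaussian model. I will treat the pseudo-data $\{\boldsymbol{u}_{i,d}\}_{i=1}^M$ and the design matrices $\boldsymbol{P}_{0,i,d},\boldsymbol{P}_{r,i,d}$ (hence $\boldsymbol{A}_{i,d}$) as fixed, since they are functions of the current smoothed means $\tilde{\vx}_{i,k}$ only. The covariances $\boldsymbol{R}_{i,d}$ are also fixed and assumed positive definite. This holds since $\boldsymbol{\Sigma}_{\mathrm{coloc}}\succ 0$ enters $\widetilde{\vR}_{i,k}$ additively, so the diagonal entries are strictly positive and $\boldsymbol{R}_{i,d}^{-1/2}$ is well defined.

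\textbf{Prior.} Under Eq.~\eqref{eq:weightprior} and the independence across subjects, the stacked vector $\boldsymbol{\beta}_d$ has prior $\Normal(\mathbf{0},\boldsymbol{\Omega}_d^{-1})$. Its log-density is therefore $-\tfrac12\boldsymbol{\beta}_d^\top\boldsymbol{\Omega}_d\boldsymbol{\beta}_d$ up to a constant.

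\textbf{Likelihood.} Conditionally on $\boldsymbol{\beta}_d$, the noises $\boldsymbol{\varepsilon}^{\mathrm{reg}}_{i,d}$ are taken independent across subjects. Then
\[
\log p(\{\boldsymbol{u}_{i,d}\}\mid\boldsymbol{\beta}_d)=-\tfrac12\sum_{i=1}^M(\boldsymbol{u}_{i,d}-\boldsymbol{A}_{i,d}\boldsymbol{\beta}_d)^\top\boldsymbol{R}_{i,d}^{-1}(\boldsymbol{u}_{i,d}-\boldsymbol{A}_{i,d}\boldsymbol{\beta}_d)+\text{const}.
\]
Writing $\boldsymbol{R}_{i,d}^{-1}=\boldsymbol{R}_{i,d}^{-1/2}\boldsymbol{R}_{i,d}^{-1/2}$, with the symmetric square root, turns each quadratic form into $\|\boldsymbol{R}_{i,d}^{-1/2}(\cdot)\|^2$.

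\textbf{Posterior.} Adding the prior and likelihood log-densities gives $\log p(\boldsymbol{\beta}_d\mid\cdot)=-\mathcal{J}_d(\boldsymbol{\beta}_d)+\text{const}$, which is exactly Eq.~\eqref{eq:joint_weighted_ridge}.

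\textbf{Gaussianity.} Expanding $\mathcal{J}_d$ shows it is quadratic in $\boldsymbol{\beta}_d$, with Hessian $\boldsymbol{\Omega}_d+\sum_i\boldsymbol{A}_{i,d}^\top\boldsymbol{R}_{i,d}^{-1}\boldsymbol{A}_{i,d}$. This matrix is positive definite because $\boldsymbol{\Omega}_d\succ0$ (using $\lambda_r>0$) and each summand is PSD. Hence $\exp(-\mathcal{J}_d)$ is a normalizable Gaussian kernel, with precision equal to that Hessian and natural mean $\sum_i\boldsymbol{A}_{i,d}^\top\boldsymbol{R}_{i,d}^{-1}\boldsymbol{u}_{i,d}$.

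\textbf{Block form.} Substitute the block structure of $\boldsymbol{A}_{i,d}$, namely $\boldsymbol{A}_{i,d}\boldsymbol{\beta}_d=\boldsymbol{P}_{0,i,d}\vb_{0,d}+\boldsymbol{P}_{r,i,d}\vb_{i,d}$. Because $\boldsymbol{\Omega}_d$ is block diagonal, $\boldsymbol{\beta}_d^\top\boldsymbol{\Omega}_d\boldsymbol{\beta}_d=\|\vb_{0,d}\|^2+\lambda_r\sum_i\|\vb_{i,d}\|^2$. This yields Eq.~\eqref{eq:joint_weighted_ridge_blocks}.

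\textbf{Main point of care.} No step is genuinely hard; the only subtlety is making the conditioning explicit. The statement is exact only for the surrogate model Eq.~\eqref{eq:regression} with fixed pseudo-data, not for the full collocation model. It relies on the modelling assumptions that the $\boldsymbol{R}_{i,d}$ are positive definite and that the pseudo-noise is independent across subjects. I will state both assumptions explicitly before the computation.
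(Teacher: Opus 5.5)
Your proposal is correct and follows essentially the same route as the paper's proof: write the pseudo-likelihood as a product of Gaussian factors over subjects, add the prior penalty $\tfrac12\boldsymbol{\beta}_d^\top\boldsymbol{\Omega}_d\boldsymbol{\beta}_d$, take the negative logarithm, and expand the block structure of $\boldsymbol{A}_{i,d}$ to get the block form. Your additions only make explicit what the paper leaves implicit: positive definiteness of $\boldsymbol{R}_{i,d}$, cross-subject independence of the pseudo-noise, and Gaussianity certified through the positive-definite Hessian rather than by simply appealing to a Gaussian prior and likelihood.
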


\begin{proof}
The pseudo-likelihood is Gaussian:
\begin{equation}
    p(\{\boldsymbol{u}_{i,d}\}\mid \boldsymbol{\beta}_d)
    \propto
    \prod_{i=1}^M
    \exp\!\left(
        -\frac12
        \bigl(
            \boldsymbol{u}_{i,d}-\boldsymbol{A}_{i,d}\boldsymbol{\beta}_d
        \bigr)^\top
        \boldsymbol{R}_{i,d}^{-1}
        \bigl(
            \boldsymbol{u}_{i,d}-\boldsymbol{A}_{i,d}\boldsymbol{\beta}_d
        \bigr)
    \right).
\end{equation}
The Gaussian priors on the shared and local coefficients contribute the quadratic penalty
\begin{equation}
    p(\boldsymbol{\beta}_d)
    \propto
    \exp\!\left(
        -\frac12 \boldsymbol{\beta}_d^\top \boldsymbol{\Omega}_d \boldsymbol{\beta}_d
    \right).
\end{equation}
Multiplying prior and likelihood and taking the negative logarithm yields Eq.~\eqref{eq:joint_weighted_ridge}. Expanding the block structure of $\boldsymbol{A}_{i,d}$ yields Eq.~\eqref{eq:joint_weighted_ridge_blocks}. Since both prior and likelihood are Gaussian, the posterior is Gaussian.
\end{proof}

Because $\boldsymbol{\Omega}_d\succ 0$, the objective above is strictly convex.

\begin{corollary}[Joint and conditional uniqueness]
\label{corr:unique}
For each output dimension $d$, the objective $\mathcal{J}_d$ in Eq.~\eqref{eq:joint_weighted_ridge} has a unique global minimizer. Moreover, if all blocks except one are held fixed, the corresponding block subproblem is again strictly convex and has a unique minimizer.
\end{corollary}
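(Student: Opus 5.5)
The plan is to read off the Hessian of $\mathcal{J}_d$ from Proposition~\ref{prop:exactridge}. Expanding Eq.~\eqref{eq:joint_weighted_ridge} shows that $\mathcal{J}_d$ is a quadratic function of $\boldsymbol{\beta}_d$:
\begin{equation*}
\mathcal{J}_d(\boldsymbol{\beta}_d)=\tfrac12\boldsymbol{\beta}_d^\top\boldsymbol{G}_d\boldsymbol{\beta}_d-\boldsymbol{\beta}_d^\top\boldsymbol{h}_d+\text{const},
\end{equation*}
with
\begin{equation*}
\boldsymbol{G}_d=\boldsymbol{\Omega}_d+\sum_{i=1}^M\boldsymbol{A}_{i,d}^\top\boldsymbol{R}_{i,d}^{-1}\boldsymbol{A}_{i,d},\qquad \boldsymbol{h}_d=\sum_{i=1}^M\boldsymbol{A}_{i,d}^\top\boldsymbol{R}_{i,d}^{-1}\boldsymbol{u}_{i,d}.
\end{equation*}
Each summand $\boldsymbol{A}_{i,d}^\top\boldsymbol{R}_{i,d}^{-1}\boldsymbol{A}_{i,d}$ is positive semidefinite, because $\boldsymbol{R}_{i,d}\succ0$. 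Indeed, in the diagonal construction its entries are $\Var_q(\dot x)+[\boldsymbol{\Sigma}_{\mathrm{coloc}}]_{dd}>0$, and I would state this positivity explicitly as the standing assumption under which $\boldsymbol{R}_{i,d}^{-1/2}$ is defined. On the other side, $\boldsymbol{\Omega}_d\succ0$ since $\lambda_r>0$. Hence $\boldsymbol{G}_d\succeq\boldsymbol{\Omega}_d\succ0$, so $\mathcal{J}_d$ is strictly convex and coercive, since $\mathcal{J}_d(\boldsymbol{\beta})\ge\frac12\lambda_{\min}(\boldsymbol{\Omega}_d)\|\boldsymbol{\beta}\|^2-\|\boldsymbol{h}_d\|\|\boldsymbol{\beta}\|+c$.

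Existence and uniqueness then follow directly. A global minimizer exists by coercivity and continuity, or more simply because the stationarity condition $\boldsymbol{G}_d\boldsymbol{\beta}=\boldsymbol{h}_d$ has the unique solution $\boldsymbol{G}_d^{-1}\boldsymbol{h}_d$. Since the function is convex and differentiable, every stationary point is a global minimizer, and strict convexity rules out a second one.

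For the block statement, fix all blocks except one, say block $b$ (either $\vb_{0,d}$ or some $\vb_{i,d}$). The restricted map is again quadratic in that block, and its Hessian is the principal diagonal block $[\boldsymbol{G}_d]_{bb}$. A principal submatrix of a positive definite matrix is positive definite, since $v^\top[\boldsymbol{G}_d]_{bb}v=\tilde v^\top\boldsymbol{G}_d\tilde v>0$ for the zero-padded $\tilde v$. So each block subproblem is strictly convex, and its unique minimizer is obtained by the same stationarity argument.

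Explicitly, the diagonal blocks are $\boldsymbol{I}_{l_0}+\sum_i\boldsymbol{P}_{0,i,d}^\top\boldsymbol{R}_{i,d}^{-1}\boldsymbol{P}_{0,i,d}$ for the shared block and $\lambda_r\boldsymbol{I}_{l_r}+\boldsymbol{P}_{r,i,d}^\top\boldsymbol{R}_{i,d}^{-1}\boldsymbol{P}_{r,i,d}$ for the local blocks. These are exactly Eqs.~\eqref{eq:sharedprec} and~\eqref{eq:localprec}. Setting the block gradient to zero, with the other blocks at their current values, reproduces Eqs.~\eqref{eq:sharedeta} and~\eqref{eq:localeta}. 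This identifies the block minimizers with the paper's alternating updates.

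There is no real obstacle here. The only point needing care is justifying $\boldsymbol{R}_{i,d}\succ0$, so that the weighted norm, and hence the positive semidefiniteness of the data term, is well defined; the rest is standard linear algebra.
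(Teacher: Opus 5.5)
Your proposal is correct and follows the paper's proof. Both compute the Hessian $\boldsymbol{\Omega}_d+\sum_{i}\boldsymbol{A}_{i,d}^\top\boldsymbol{R}_{i,d}^{-1}\boldsymbol{A}_{i,d}$, conclude it is positive definite because $\boldsymbol{\Omega}_d\succ 0$ and the data term is positive semidefinite, and handle the block subproblems through positive definite diagonal blocks. Your extra details (existence via the normal equations, the zero-padding argument for principal submatrices, and stating $\boldsymbol{R}_{i,d}\succ 0$ as an explicit assumption) only spell out steps the paper leaves implicit.
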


\begin{proof}
The Hessian of $\mathcal{J}_d$ is
\begin{equation}
    \nabla^2 \mathcal{J}_d
    =
    \boldsymbol{\Omega}_d
    +
    \sum_{i=1}^M \boldsymbol{A}_{i,d}^\top \boldsymbol{R}_{i,d}^{-1}\boldsymbol{A}_{i,d}.
\end{equation}
The first term is strictly positive definite because the shared prior precision is $\boldsymbol{I}_{l_0}$ and each local prior precision is $\lambda_r\boldsymbol{I}_{l_r}$ with $\lambda_r>0$. The second term is positive semidefinite. Hence $\nabla^2 \mathcal{J}_d\succ 0$, so $\mathcal{J}_d$ is strictly convex and has a unique global minimizer. Restricting to any one block while holding the others fixed leaves a strictly positive definite block Hessian, and the corresponding block subproblem is therefore strictly convex as well. 
\end{proof}

The natural-parameter updates in the main text are precisely the normal equations of these block subproblems.

\begin{corollary}[Equivalence with the natural-parameter updates]
\label{coro:equiv}
Fix an output dimension $d$.
\begin{itemize}
    \item If $\vb_{1,d},\dots,\vb_{M,d}$ are held fixed, then the unique minimizer in $\vb_{0,d}$ satisfies
    \begin{equation}
        \left(
            \boldsymbol{I}_{l_0}
            +\sum_{i=1}^M
              \boldsymbol{P}_{0,i,d}^\top
              \boldsymbol{R}_{i,d}^{-1}
              \boldsymbol{P}_{0,i,d}
        \right)\vb_{0,d}
        =
        \sum_{i=1}^M
        \boldsymbol{P}_{0,i,d}^\top
        \boldsymbol{R}_{i,d}^{-1}
        \bigl(
            \boldsymbol{u}_{i,d}
            -
            \boldsymbol{P}_{r,i,d}\vb_{i,d}
        \bigr).
    \end{equation}
    \item If $\vb_{0,d}$ is held fixed, then the unique minimizer in $\vb_{i,d}$ satisfies
    \begin{equation}
        \left(
            \lambda_r\boldsymbol{I}_{l_r}
            +
            \boldsymbol{P}_{r,i,d}^\top
            \boldsymbol{R}_{i,d}^{-1}
            \boldsymbol{P}_{r,i,d}
        \right)\vb_{i,d}
        =
        \boldsymbol{P}_{r,i,d}^\top
        \boldsymbol{R}_{i,d}^{-1}
        \bigl(
            \boldsymbol{u}_{i,d}
            -
            \boldsymbol{P}_{0,i,d}\vb_{0,d}
        \bigr).
    \end{equation}
\end{itemize}
If the complementary blocks are replaced by their current posterior means $\vm_{i,d}$ and $\vm_{0,d}$, these equations coincide exactly with Eqs.~\eqref{eq:sharedprec}--\eqref{eq:localeta} in the main text. In particular, the natural-parameter updates are simply the precision-form representation of the corresponding weighted mixed-ridge solutions.
\end{corollary}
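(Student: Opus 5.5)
The plan is to read both normal equations off the block form of the objective in Eq.~\eqref{eq:joint_weighted_ridge_blocks}, given by Proposition~\ref{prop:exactridge}. Existence and uniqueness of each block minimizer are already guaranteed by Corollary~\ref{corr:unique}, since each block subproblem is strictly convex. It therefore suffices to compute the first-order stationarity condition of each block subproblem, which for a strictly convex differentiable function characterizes the unique minimizer.

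\emph{Shared block.} Hold $\vb_{1,d},\dots,\vb_{M,d}$ fixed and regard $\mathcal{J}_d$ as a function of $\vb_{0,d}$ alone. Define the partial residual targets $\boldsymbol{u}_{i,d}-\boldsymbol{P}_{r,i,d}\vb_{i,d}$, which are constant with respect to $\vb_{0,d}$. Each term $\tfrac12\|\boldsymbol{R}_{i,d}^{-1/2}(\boldsymbol{u}_{i,d}-\boldsymbol{P}_{r,i,d}\vb_{i,d}-\boldsymbol{P}_{0,i,d}\vb_{0,d})\|^2$ has gradient
\[
-\boldsymbol{P}_{0,i,d}^\top\boldsymbol{R}_{i,d}^{-1}\bigl(\boldsymbol{u}_{i,d}-\boldsymbol{P}_{r,i,d}\vb_{i,d}-\boldsymbol{P}_{0,i,d}\vb_{0,d}\bigr),
\]
using symmetry of $\boldsymbol{R}_{i,d}$. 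The penalty $\tfrac12\|\vb_{0,d}\|^2$ contributes $\vb_{0,d}$, and the local penalties contribute nothing. Setting the sum to zero and moving the $\vb_{0,d}$ terms to the left gives the first displayed system. The coefficient matrix is $\boldsymbol{I}_{l_0}$ plus a positive semidefinite term, so it is invertible, which confirms uniqueness directly.

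\emph{Local block.} Hold $\vb_{0,d}$ fixed. Only the $i$th summand of the data term and the penalty $\tfrac{\lambda_r}{2}\|\vb_{i,d}\|^2$ depend on $\vb_{i,d}$. The same differentiation with targets $\boldsymbol{u}_{i,d}-\boldsymbol{P}_{0,i,d}\vb_{0,d}$ gives the second system. This also shows that, given $\vb_{0,d}$, the local subproblems decouple across subjects.

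\emph{Matching the main-text updates.} Substitute $\vm_{i,d}$ for $\vb_{i,d}$ in the first system and $\vm_{0,d}$ for $\vb_{0,d}$ in the second. The left-hand matrices are then literally $\vLambda^{\mathrm{new}}_{0,d}$ and $\vLambda^{\mathrm{new}}_{i,d}$ from Eqs.~\eqref{eq:sharedprec} and~\eqref{eq:localprec}, and the right-hand sides are $\veta^{\mathrm{new}}_{0,d}$ and $\veta^{\mathrm{new}}_{i,d}$ from Eqs.~\eqref{eq:sharedeta} and~\eqref{eq:localeta}. Hence each minimizer equals $\vLambda^{-1}\veta$, the mean of the Gaussian in natural form.

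For the precision-form interpretation, note that the block objective is quadratic with Hessian equal to that $\vLambda$. So $\exp(-\mathcal{J}_d)$ restricted to the block is exactly the Gaussian $\propto\exp(-\tfrac12\vb^\top\vLambda\vb+\veta^\top\vb)$. I do not expect a real obstacle here. The only care needed is bookkeeping: keep the conditioning on the other blocks explicit, and note that the main-text update plugs in posterior means rather than exact conditional values, which is a definitional substitution and not an approximation inside the corollary.
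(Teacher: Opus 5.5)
Your proposal is correct and follows essentially the same route as the paper's proof. In both, you differentiate the block objective in Eq.~\eqref{eq:joint_weighted_ridge_blocks} with respect to one block, set the gradient to zero, and identify the resulting left-hand matrix and right-hand vector with the precision and natural-mean parameters of the Gaussian block conditional; your explicit gradient computation and appeal to Corollary~\ref{corr:unique} for uniqueness fill in details the paper leaves implicit.
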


\begin{proof}
Differentiate Eq.~\eqref{eq:joint_weighted_ridge_blocks} with respect to the relevant block, set the gradient to zero, and identify the resulting normal equations. Since each block conditional posterior is Gaussian, its precision-form parameters are exactly the left-hand-side matrix and right-hand-side vector appearing in these normal equations.
\end{proof}

\subsection{Spectral shrinkage and effective degrees of freedom}
\label{app:theory-shrinkage}

The local field update admits a spectral interpretation that makes the role of $\lambda_r$ and the amount of subject-specific adaptation interpretable.

Fix a subject $i$ and output dimension $d$, and define the residualized pseudo-target
\begin{equation}
    \boldsymbol{r}_{i,d}
    \coloneqq
    \boldsymbol{u}_{i,d}-\boldsymbol{P}_{0,i,d}\vm_{0,d}.
\end{equation}
The local posterior mean in the main text is
\begin{equation}
    \vm_{i,d}
    =
    \left(
        \lambda_r\boldsymbol{I}_{l_r}
        +
        \boldsymbol{P}_{r,i,d}^\top
        \boldsymbol{R}_{i,d}^{-1}
        \boldsymbol{P}_{r,i,d}
    \right)^{-1}
    \boldsymbol{P}_{r,i,d}^\top
    \boldsymbol{R}_{i,d}^{-1}
    \boldsymbol{r}_{i,d}.
    \label{eq:local_mean_closed}
\end{equation}

\begin{proposition}[Spectral shrinkage form of the local update]
\label{prop:spec}
Let
\begin{equation}
    \boldsymbol{R}_{i,d}^{-1/2}\boldsymbol{P}_{r,i,d}
    =
    \boldsymbol{U}_{i,d}
    \operatorname{diag}
    \bigl(
        \sigma_{i,d,1},\dots,\sigma_{i,d,J_{i,d}}
    \bigr)
    \boldsymbol{V}_{i,d}^\top
\end{equation}
be a compact singular value decomposition, where $J_{i,d}=\operatorname{rank}(\boldsymbol{R}_{i,d}^{-1/2}\boldsymbol{P}_{r,i,d})$ and the index $j=1,\dots,J_{i,d}$ runs over weighted singular directions of the subject-specific design matrix. Then
\begin{equation}
    \vm_{i,d}
    =
    \boldsymbol{V}_{i,d}
    \operatorname{diag}
    \left(
        \frac{\sigma_{i,d,j}}
             {\lambda_r+\sigma_{i,d,j}^2}
    \right)_{j=1}^{J_{i,d}}
    \boldsymbol{U}_{i,d}^\top
    \boldsymbol{R}_{i,d}^{-1/2}\boldsymbol{r}_{i,d},
    \label{eq:local_mean_svd}
\end{equation}
and the fitted local contribution satisfies
\begin{equation}
    \boldsymbol{P}_{r,i,d}\vm_{i,d}
    =
    \boldsymbol{R}_{i,d}^{1/2}
    \boldsymbol{U}_{i,d}
    \operatorname{diag}
    \left(
        \alpha_{i,d,j}
    \right)_{j=1}^{J_{i,d}}
    \boldsymbol{U}_{i,d}^\top
    \boldsymbol{R}_{i,d}^{-1/2}\boldsymbol{r}_{i,d},
    \qquad
    \alpha_{i,d,j}
    \coloneqq
    \frac{\sigma_{i,d,j}^2}
         {\lambda_r+\sigma_{i,d,j}^2}.
    \label{eq:local_fit_svd}
\end{equation}
Thus each weighted singular direction is retained with shrinkage factor $\alpha_{i,d,j}\in[0,1)$.
\end{proposition}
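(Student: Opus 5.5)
The plan is to substitute the compact SVD into the closed-form local posterior mean, Eq.~\eqref{eq:local_mean_closed}, and simplify with a push-through identity. Write $\tilde{\boldsymbol{P}}\coloneqq\boldsymbol{R}_{i,d}^{-1/2}\boldsymbol{P}_{r,i,d}=\boldsymbol{U}\boldsymbol{\Sigma}\boldsymbol{V}^\top$, where $\boldsymbol{\Sigma}=\operatorname{diag}(\sigma_{i,d,j})$. Write also $\tilde{\boldsymbol{r}}\coloneqq\boldsymbol{R}_{i,d}^{-1/2}\boldsymbol{r}_{i,d}$. Since $\boldsymbol{R}_{i,d}\succ0$ (a diagonal of positive pseudo-derivative variances plus $\boldsymbol{\Sigma}_{\mathrm{coloc}}$), its symmetric square root exists. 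Equation~\eqref{eq:local_mean_closed} then becomes
\[
\vm_{i,d}=(\lambda_r\boldsymbol{I}_{l_r}+\tilde{\boldsymbol{P}}^\top\tilde{\boldsymbol{P}})^{-1}\tilde{\boldsymbol{P}}^\top\tilde{\boldsymbol{r}},
\]
which is an ordinary ridge solution in whitened coordinates. The inverse exists because $\lambda_r>0$, as already noted in Corollary~\ref{corr:unique}.

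The main step is handling the inverse when $\boldsymbol{V}$ is not square, since the SVD is compact and $J_{i,d}\le l_r$. I will first try completing $\boldsymbol{V}$ to an orthogonal matrix $[\boldsymbol{V}\;\boldsymbol{V}_\perp]$. In that basis, $\lambda_r\boldsymbol{I}+\tilde{\boldsymbol{P}}^\top\tilde{\boldsymbol{P}}=[\boldsymbol{V}\;\boldsymbol{V}_\perp]\operatorname{blockdiag}(\lambda_r\boldsymbol{I}+\boldsymbol{\Sigma}^2,\lambda_r\boldsymbol{I})[\boldsymbol{V}\;\boldsymbol{V}_\perp]^\top$. Its inverse is therefore the same expression with each diagonal block inverted. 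Multiplying by $\tilde{\boldsymbol{P}}^\top=\boldsymbol{V}\boldsymbol{\Sigma}\boldsymbol{U}^\top$ kills the $\boldsymbol{V}_\perp$ block, because $\boldsymbol{V}_\perp^\top\boldsymbol{V}=\mathbf{0}$. What remains is $\boldsymbol{V}(\lambda_r\boldsymbol{I}+\boldsymbol{\Sigma}^2)^{-1}\boldsymbol{\Sigma}\boldsymbol{U}^\top\tilde{\boldsymbol{r}}$. Both matrices in the middle are diagonal, so they commute, and this is exactly Eq.~\eqref{eq:local_mean_svd}.

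An alternative check is the push-through identity $(\lambda\boldsymbol{I}+\tilde{\boldsymbol{P}}^\top\tilde{\boldsymbol{P}})^{-1}\tilde{\boldsymbol{P}}^\top=\tilde{\boldsymbol{P}}^\top(\lambda\boldsymbol{I}+\tilde{\boldsymbol{P}}\tilde{\boldsymbol{P}}^\top)^{-1}$. One can then argue on the left singular vectors in the same way.

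For the fitted contribution, write $\boldsymbol{P}_{r,i,d}=\boldsymbol{R}_{i,d}^{1/2}\tilde{\boldsymbol{P}}=\boldsymbol{R}_{i,d}^{1/2}\boldsymbol{U}\boldsymbol{\Sigma}\boldsymbol{V}^\top$. Multiplying Eq.~\eqref{eq:local_mean_svd} on the left by this and using $\boldsymbol{V}^\top\boldsymbol{V}=\boldsymbol{I}_{J_{i,d}}$ gives the diagonal factor $\boldsymbol{\Sigma}(\lambda_r\boldsymbol{I}+\boldsymbol{\Sigma}^2)^{-1}\boldsymbol{\Sigma}=\operatorname{diag}(\alpha_{i,d,j})$. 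This yields Eq.~\eqref{eq:local_fit_svd}.

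Finally, the range of the shrinkage factors follows from $\sigma_{i,d,j}>0$ and $\lambda_r>0$. These give $0<\alpha_{i,d,j}=\sigma^2/(\lambda_r+\sigma^2)<1$, so $\alpha_{i,d,j}\in[0,1)$, which is the claimed interpretation. Beyond the orthogonal completion of $\boldsymbol{V}$, I see no substantive obstacle; the rest is bookkeeping.
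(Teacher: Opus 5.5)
Your proposal is correct and follows essentially the same route as the paper: substitute the compact SVD into Eq.~\eqref{eq:local_mean_closed}, reduce the inverse to the diagonal factor $\operatorname{diag}\bigl(\sigma_{i,d,j}/(\lambda_r+\sigma_{i,d,j}^2)\bigr)$, then left-multiply by $\boldsymbol{P}_{r,i,d}=\boldsymbol{R}_{i,d}^{1/2}\boldsymbol{U}_{i,d}\operatorname{diag}(\sigma_{i,d,j})\boldsymbol{V}_{i,d}^\top$ to obtain the fitted contribution. Your orthogonal completion $[\boldsymbol{V}\;\boldsymbol{V}_\perp]$ makes explicit the inversion step that the paper does in one line; this step is needed because $\boldsymbol{V}_{i,d}$ is not square when $J_{i,d}<l_r$.
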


\begin{proof}
Substitute the compact singular value decomposition into Eq.~\eqref{eq:local_mean_closed}:
\begin{align}
    \vm_{i,d}
    &=
    \left(
        \lambda_r\boldsymbol{I}_{l_{r}}
        +
        \boldsymbol{V}_{i,d}
        \operatorname{diag}(\sigma_{i,d,j}^2)
        \boldsymbol{V}_{i,d}^\top
    \right)^{-1}
    \boldsymbol{V}_{i,d}
    \operatorname{diag}(\sigma_{i,d,j})
    \boldsymbol{U}_{i,d}^\top
    \boldsymbol{R}_{i,d}^{-1/2}
    \boldsymbol{r}_{i,d}
    \nonumber\\
    &=
    \boldsymbol{V}_{i,d}
    \operatorname{diag}
    \left(
        \frac{\sigma_{i,d,j}}
             {\lambda_r+\sigma_{i,d,j}^2}
    \right)
    \boldsymbol{U}_{i,d}^\top
    \boldsymbol{R}_{i,d}^{-1/2}
    \boldsymbol{r}_{i,d},
\end{align}
which is Eq.~\eqref{eq:local_mean_svd}. Multiplying by $\boldsymbol{P}_{r,i,d}=\boldsymbol{R}_{i,d}^{1/2}\boldsymbol{U}_{i,d}\operatorname{diag}(\sigma_{i,d,j})\boldsymbol{V}_{i,d}^\top$ yields Eq.~\eqref{eq:local_fit_svd}.
\end{proof}

Proposition~\ref{prop:spec} shows that subject-specific personalization is strongest along directions that are both well supported by the pseudo-data and weakly penalized by $\lambda_r$. A natural scalar summary is the effective number of local degrees of freedom
\begin{equation}
    \mathrm{df}_{i,d}^{\mathrm{loc}}
    \coloneqq
    \sum_{j=1}^{J_{i,d}}
    \frac{\sigma_{i,d,j}^2}
         {\lambda_r+\sigma_{i,d,j}^2}.
    \label{eq:local_df}
\end{equation}
This quantity lies between $0$ and $J_{i,d}$ and measures how much local flexibility is effectively used for subject $i$ and output $d$.

\subsection{Monotone information accumulation}
\label{app:theory-monotone}

The shared-field update aggregates information across subjects by summing positive semidefinite contributions. This gives a simple monotonicity property that mirrors the common-mean Gaussian-process updates of MAGMA \citep{leroy2022magma}: adding subjects can only increase shared precision and decrease shared posterior covariance.

\begin{proposition}[Monotone accumulation of information]
\label{prop:monotone}
Fix an output dimension $d$ and consider the shared precision update
\begin{equation}
    \vLambda_{0,d}^{(M)}
    \coloneqq
    \boldsymbol{I}_{l_0}
    +
    \sum_{i=1}^M
    \boldsymbol{P}_{0,i,d}^\top
    \boldsymbol{R}_{i,d}^{-1}
    \boldsymbol{P}_{0,i,d}.
\end{equation}
Then
\begin{equation}
    \vLambda_{0,d}^{(M+1)}-\vLambda_{0,d}^{(M)}
    =
    \boldsymbol{P}_{0,M+1,d}^\top
    \boldsymbol{R}_{M+1,d}^{-1}
    \boldsymbol{P}_{0,M+1,d}
    \succeq \mathbf{0}.
    \label{eq:shared_monotone}
\end{equation}
Consequently, the corresponding posterior covariances satisfy
\begin{equation}
    \bigl(\vLambda_{0,d}^{(M+1)}\bigr)^{-1}
    \preceq
    \bigl(\vLambda_{0,d}^{(M)}\bigr)^{-1}.
\end{equation}
An analogous statement holds for the local precision of a fixed subject when additional pseudo-derivative observations are added to that subject.
\end{proposition}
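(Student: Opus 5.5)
The plan is to argue directly from the additive structure of the shared precision, then use operator monotonicity of matrix inversion on the positive definite cone.

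First, the identity in Eq.~\eqref{eq:shared_monotone} is immediate by subtraction. Both $\vLambda_{0,d}^{(M+1)}$ and $\vLambda_{0,d}^{(M)}$ contain the same prior term $\boldsymbol{I}_{l_0}$ and the same first $M$ summands, so only the $(M+1)$th term survives. To show it is positive semidefinite, note that $\boldsymbol{R}_{M+1,d}$ is a covariance matrix that is assumed invertible (as the updates in Eq.~\eqref{eq:sharedprec} already require), hence symmetric positive definite. Therefore $\boldsymbol{R}_{M+1,d}^{-1}$ has a symmetric square root. For any $\vz\in\R^{l_0}$,
\[
\vz^\top\boldsymbol{P}_{0,M+1,d}^\top\boldsymbol{R}_{M+1,d}^{-1}\boldsymbol{P}_{0,M+1,d}\vz
=\bigl\|\boldsymbol{R}_{M+1,d}^{-1/2}\boldsymbol{P}_{0,M+1,d}\vz\bigr\|^2\ge 0 .
\]

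Second, for the covariance statement I would note that both precisions are positive definite, since each is $\boldsymbol{I}_{l_0}$ plus PSD terms, so both inverses exist. I would then invoke the standard fact that $\vA\succeq\vB\succ 0$ implies $\vB^{-1}\succeq\vA^{-1}$. To keep the argument self-contained, I would prove it by congruence. Write $\vB^{-1/2}\vA\vB^{-1/2}=\boldsymbol{C}\succeq\boldsymbol{I}$. The eigenvalues of $\boldsymbol{C}$ are all at least $1$, so $\boldsymbol{C}^{-1}\preceq\boldsymbol{I}$. Conjugating back by $\vB^{-1/2}$ gives $\vA^{-1}=\vB^{-1/2}\boldsymbol{C}^{-1}\vB^{-1/2}\preceq\vB^{-1}$. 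Applying this with $\vA=\vLambda_{0,d}^{(M+1)}$ and $\vB=\vLambda_{0,d}^{(M)}$ yields the claim.

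Third, the local analogue follows the same template applied to Eq.~\eqref{eq:localprec}. I would stack the additional pseudo-derivative observations as extra rows of $\boldsymbol{P}_{r,i,d}$. The subtle point is that $\boldsymbol{R}_{i,d}$ may then acquire cross-covariances between old and new rows. Under the diagonal approximation used in the main text, or more generally a block-diagonal augmentation, the quadratic form splits as the old term plus $\boldsymbol{P}_{\mathrm{new}}^\top\boldsymbol{R}_{\mathrm{new}}^{-1}\boldsymbol{P}_{\mathrm{new}}\succeq 0$, and the first two steps apply verbatim with $\lambda_r\boldsymbol{I}_{l_r}$ in place of $\boldsymbol{I}_{l_0}$.

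The main obstacle is stating this independence assumption cleanly, since a general full covariance would not give an additive decomposition. If a fully general statement is wanted, I would try a Schur-complement argument for the correlated case: the precision contributed by the joint block is $\boldsymbol{P}^\top\boldsymbol{R}^{-1}\boldsymbol{P}$, which dominates the contribution of the old rows alone because marginalizing out observations cannot increase Fisher information. I would also add a remark that the design matrices are held fixed, i.e.\ the monotonicity is conditional on the current pseudo-data.
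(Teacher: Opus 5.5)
Your proof is correct and follows the paper's argument. The increment $\boldsymbol{P}_{0,M+1,d}^\top\boldsymbol{R}_{M+1,d}^{-1}\boldsymbol{P}_{0,M+1,d}$ is positive semidefinite because $\boldsymbol{R}_{M+1,d}^{-1}\succ 0$, and the covariance ordering follows from the order-reversing property of inversion on the positive definite cone, which you also prove directly via the congruence $\boldsymbol{B}^{-1/2}\boldsymbol{A}\boldsymbol{B}^{-1/2}$. For the local statement you are more careful than the paper's ``the local statement is identical'': correlated noise between old and new rows does break additivity, but your Schur-complement fallback restores monotonicity, since $\boldsymbol{R}^{-1}-\operatorname{blockdiag}(\boldsymbol{R}_{\mathrm{old}}^{-1},\mathbf{0})\succeq\mathbf{0}$.
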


\begin{proof}
Equation~\eqref{eq:shared_monotone} is immediate because $\boldsymbol{R}_{M+1,d}^{-1}\succ 0$, so the quadratic form $\boldsymbol{P}_{0,M+1,d}^\top \boldsymbol{R}_{M+1,d}^{-1}\boldsymbol{P}_{0,M+1,d}$ is positive semidefinite. The covariance monotonicity then follows from the order-reversing property of matrix inversion on the cone of symmetric positive definite matrices. The local statement is identical.
\end{proof}

This algebraic structure is closely related to the common-mean multi-task GP update used in MAGMA \citep{leroy2022magma}, where the shared precision likewise accumulates a sum of task-specific positive semidefinite contributions. The difference here is that the contributions arise from ODE-induced pseudo-derivative regressions rather than directly from observed outputs.

The same monotonicity extends to trajectory summaries under the first-order pushforward approximation of Eq.~\eqref{eq:traj_var_linearized}. Indeed, if $\boldsymbol{\Sigma}_{B_0}^{(M+1)}\preceq \boldsymbol{\Sigma}_{B_0}^{(M)}$, then for any fixed sensitivity row $\boldsymbol{J}_{0,i,a}(t)$,
\begin{equation}
    \boldsymbol{J}_{0,i,a}(t)\boldsymbol{\Sigma}_{B_0}^{(M+1)}\boldsymbol{J}_{0,i,a}(t)^\top
    \le
    \boldsymbol{J}_{0,i,a}(t)\boldsymbol{\Sigma}_{B_0}^{(M)}\boldsymbol{J}_{0,i,a}(t)^\top.
\end{equation}
Thus, at least at the linearized level, adding subjects can only reduce the population-driven component of trajectory uncertainty.

\section{Alternative scalable vector field approximations}
\label{app:altscalable}

Although the main text uses inducing-feature representations for the shared and subject-specific vector fields, the overall framework is compatible with other scalable GP approximations. The main requirement is that, at fixed inputs, the field admits a linear-Gaussian representation in a finite set of coefficients.

Consider the generic construction
\begin{equation}
    f_{0,d}(\vx)=\varphi_0(\vx)^\top \vb_{0,d},
    \qquad
    g_{i,d}(\vx)=\varphi_r(\vx)^\top \vb_{i,d},
    \qquad
    f_{i,d}(\vx)=f_{0,d}(\vx)+g_{i,d}(\vx),
    \label{eq:alt-field-generic}
\end{equation}
with feature maps $\varphi_0:\mathbb{R}^D\to\mathbb{R}^{L_0}$ and $\varphi_r:\mathbb{R}^D\to\mathbb{R}^{L_r}$, and Gaussian priors
\begin{equation}
    \vb_{0,d}\sim\Normal(\mathbf{0},\boldsymbol{\Omega}_0^{-1}),
    \qquad
    \vb_{i,d}\sim\Normal(\mathbf{0},\boldsymbol{\Omega}_r^{-1}).
    \label{eq:alt-field-priors}
\end{equation}
If the approximate posteriors remain Gaussian,
\[
q(\vb_{0,d})=\Normal(\vm_{0,d},\vS_{0,d}),
\qquad
q(\vb_{i,d})=\Normal(\vm_{i,d},\vS_{i,d}),
\]
then field moments are still available in closed form:
\begin{align}
    \mu^f_{i,d}(\vx)
    &= \varphi_0(\vx)^\top \vm_{0,d} + \varphi_r(\vx)^\top \vm_{i,d},\\
    v^{f}_{i,d}(\vx)
    &= \varphi_0(\vx)^\top \vS_{0,d}\varphi_0(\vx)
     + \varphi_r(\vx)^\top \vS_{i,d}\varphi_r(\vx).
\end{align}
Whenever the feature maps are differentiable, the Jacobian of the posterior-mean field is explicit as well:
\begin{equation}
    \nabla_{\vx}\E_q[f_{i,d}(\vx)]
    =
    \nabla_{\vx}\varphi_0(\vx)^\top \vm_{0,d}
    +
    \nabla_{\vx}\varphi_r(\vx)^\top \vm_{i,d}.
    \label{eq:alt-field-jac}
\end{equation}
This is sufficient for the trajectory-space linearization. On the field side, once the smoothed pseudo-derivatives are available, the update remains a Gaussian mixed regression problem with the same form as in Section~\ref{sec:fieldupdate}, except that the identity prior precisions are replaced by $\boldsymbol{\Omega}_0$ and $\boldsymbol{\Omega}_r$. The inducing-feature construction used in the main text is recovered by taking $\varphi_0=\Phi_0$, $\varphi_r=\Phi_r$, and whitened priors.

This yields two alternatives.

\paragraph{Random Fourier features.}
For stationary kernels, one may replace inducing features by random Fourier features \citep{rahimi2008rff}. In that case, $\varphi_0$ and $\varphi_r$ are finite-dimensional random feature maps, and the model remains exactly in the form of Eq.~\eqref{eq:alt-field-generic}. This yields the same alternating algorithm. The main advantages are simplicity, fast dense linear algebra, and tunable capacities through $L_0$ and $L_r$; the main drawback is that random features do not adapt to the observed state distribution, so many features may be needed when the field has localized structure.

\paragraph{Hilbert basis expansions.}
On a bounded state domain, one may instead use Hilbert or spectral basis-function approximations \citep{solin2020hilbert}. Then $\varphi_0$ and $\varphi_r$ are deterministic basis expansions, for example built from Laplace eigenfunctions with appropriate spectral weights. This again preserves the linear-Gaussian field layer and therefore leaves the trajectory update and mixed-regression field update unchanged. The main advantages are a deterministic low-rank representation and explicit control of spectral structure; the main limitation is that one must choose a suitable bounded state domain and basis, which may be less flexible when the relevant state region is not known in advance.


\end{document}